\newcommand{\I}{\ensuremath{\mathbf{I}}}
\newcommand{\w}{\ensuremath{\mathbf{w}}}
\newcommand{\x}{\ensuremath{\mathbf{x}}}
\newcommand{\y}{\ensuremath{\mathbf{y}}}
\newcommand{\z}{\ensuremath{\mathbf{z}}}
\newcommand{\0}{\ensuremath{\mathbf{0}}}
\newcommand{\bbE}{\ensuremath{\mathbb{E}}}
\newcommand{\bbR}{\ensuremath{\mathbb{R}}}
\newcommand{\calA}{\ensuremath{\mathcal{A}}}
\newcommand{\calN}{\ensuremath{\mathcal{N}}}
\newcommand{\calO}{\ensuremath{\mathcal{O}}}
\newcommand{\abs}[1]{\left\lvert#1\right\rvert}
\newcommand{\norm}[1]{\left\lVert#1\right\rVert}
\newtheorem{thm}{Theorem}
\newtheorem{lem}[thm]{Lemma}
\newtheorem{prop}[thm]{Proposition}
\newtheorem{rmk}[thm]{Remark}
\DeclareMathOperator*{\argmin}{arg\,min}
\newcommand{\ie}{i.e.\@}
\newcommand{\eg}{e.g.\@}
\newcommand{\phit}{\phi_{I_t}}
\newcommand{\phii}{\phi_{I_t^{(i)}}}
\newcommand{\tw}{\tilde{\w}}
\newcommand{\hF}{\hat{F}}
\newcommand{\hG}{\hat{G}}
\newcommand{\hw}{\hat{\w}}
\newcommand{\bw}{\bar{\w}}
\DeclarePairedDelimiter{\ceil}{\lceil}{\rceil}
\title[Efficient Distributed Stochastic Optimization]{Memory and Communication Efficient Distributed Stochastic Optimization with Minibatch-Prox}
\thanks{Equal contributions.} \Email{jialei@uchicago.edu}\\
\begin{document}
\maketitle

\begin{abstract}
  We present and analyze an approach for distributed stochastic
  optimization which is statistically optimal and achieves near-linear
  speedups (up to logarithmic factors).  Our approach allows a
  communication-memory tradeoff, with either logarithmic communication
  but linear memory, or polynomial communication and a corresponding
  polynomial reduction in required memory.  This communication-memory
  tradeoff is achieved through minibatch-prox iterations (minibatch
  passive-aggressive updates), where a subproblem on a minibatch is
  solved at each iteration.  We provide a novel analysis for such a
  minibatch-prox procedure which achieves the statistical optimal
  rate regardless of minibatch size and smoothness, thus
  significantly improving on prior work.
\end{abstract}

\section{Introduction}
\label{sec:intro}

Consider the stochastic convex optimization (generalized learning)
problem~\citep{yudin1983problem,vapnik1995nature,Shalev_09a}:
\begin{align} \label{e:obj}
\min_{ \w \in \Omega }\; \phi (\w) := \bbE_{\xi \sim D} \left[ \ell (\w, \xi) \right]
\end{align}
where our goal is to learn a predictor $\w$ from the convex domain
$\Omega$ given the convex instantaneous (loss) function $\ell
(\w,\xi)$ and i.i.d. samples $\xi_1,\xi_2,\dots$ from some unknown
data distribution $D$.  When optimizing on a single machine,
stochastic approximation methods such as stochastic gradient descent
(SGD) or more generally stochastic mirror descent, are ideally suited
for the problem as they typically have optimal sample complexity
requirements, and run in linear time in the number of samples, and
thus also have optimal runtime.  Focusing on an $\ell_2$ bounded
domain with $B=\sup_{\w \in \Omega} \norm{\w}$ and $L$-Lipschitz loss,
the min-max optimal sample complexity is $n(\varepsilon) = \calO (L^2
B^2/\varepsilon^2)$, and this is achieved by SGD using
$\calO(n(\epsilon))$ vector operations.  Furthermore, if examples are
obtained one at a time (in a streaming setting or through access to a
``button'' generating examples), we only need to store $\calO(1)$ vectors
in memory.

The situation is more complex in the distributed setting where no
single method is known that is optimal with respect to sample
complexity, runtime, memory {\em and} communication.  Specifically,
consider $m$ machines where each machine $i=1,...,m$ receives samples
$\xi_{i1},\xi_{i2},...$ drawn from the same distribution $D$. This can
equivalently be thought of as randomly distributing samples across $m$
servers.  We also assume the objective is $\beta$-smooth, taking
$L,\beta=\calO(1)$ in our presentation of results.  The goal is to
find a predictor $\hat \w \in \Omega$ satisfying $\bbE \left[
  \phi(\hat \w) - \min_{\w \in \Omega} \phi(\w) \right] \le
\varepsilon$ using \emph{the smallest possible number of samples} per
machine, \emph{the minimal elapsed runtime}, and \emph{the smallest
  amount of communication}, and also \emph{minimal memory} on each
machine (again, when examples are received or generated one at a
time).  Ideally, we could hope for a method with linear speedup,
i.e.~$\calO(n(\epsilon)/m)$ runtime, using the statistically optimal number of samples
$\calO(n(\epsilon))$ and constant or near-constant communication and
memory.  Throughout we measure runtime in terms of
vector operations, memory in terms of number of vectors that need to
be stored on each machine and communication in terms of number of
vectors sent per machine\footnote{In all methods involved,
  communication is used to average vectors across machines and make
  the result known to one or all machines.  We are actually counting
  the number of such operations.}.  These resource requirements are
summarized in Table~\ref{table:summary}.

One simple approach for distributed stochastic optimization is
minibatch SGD~\citep{Cotter_11a,dekel2012optimal}, where in each
update we use a gradient estimate based on $mb$ examples: $b$ examples
from each of the $m$ machines.  Distributed minibatch SGD attains
optimal statistical performance with $\calO \left( n(\varepsilon)/m
\right)$ runtime, as long as the minibatch size is not too large:
\citet{dekel2012optimal} showed that the minibatch size can be as
large as $bm = \calO (\sqrt{n(\varepsilon)})$,
and~\citet{Cotter_11a} showed that with acceleration this
can be increased to $bm = \calO ( n(\varepsilon)^{3/4} )$.  Using this
maximal minibatch size for accelerated minibatch SGD thus yields a
statistically optimal method with linear speedup in runtime, $\calO
(1)$ memory usage, and $\calO (n(\varepsilon)^{1/4})$ rounds of
communication--see Table \ref{table:summary}.  This is
the most communication-efficient method with true linear speedup
we are aware of.

An alternative approach is to use distributed optimization to optimize the regularized empirical objective:
\begin{align}
  \min_{\w}\; \phi_{S}(\w) + \frac{\nu}{2} \norm{\w}^2, 
  \label{e:erm}
\end{align}
where $\phi_S$ is the empirical objective on $n(\epsilon)$ i.i.d.
samples, distributed across the machines and $\nu = \calO ( L/(B
\sqrt{n(\varepsilon)}))$.  A naive approach here is to use accelerate
gradient descent, distributing the gradient computations, but this, as
well as approaches based on ADMM~\citep{Boyd_11a}, are dominated by minibatch SGD
(\citealt{shamir2014distributed} and see also Table
\ref{table:summary}).  Better alternatives take advantage of the
stochastic nature of the problem: DANE~\citep{Shamir_14a} requires only
$\calO(B^2 m)$ rounds of communication for squared loss problems, while
DiSCO~\citep{zhang2015disco} and AIDE~\citep{reddi2016aide}) reduce this
further to $\calO(B^{1/2}m^{1/4})$ rounds of communication.  However,
these communication-efficient methods usually require expensive
computation on each local machine, solving an optimization problem on
all local data at each iteration.  Even if this can be done in
near-linear time, it is still difficult to obtain computational
speedup compared with single machine solution, and certainly not
linear speedups---see Table~\ref{table:summary}.  Furthermore, since
each round of these methods involves optimization over a fixed
training set, this training set must be stored thus requiring
$n(\varepsilon)/m$ memory per machine.

Designing stochastic distributed optimization problems with linear, or
near-linear, speedups, and low communication and memory requirements
is thus still an open problem.  We make progress in this paper
analyzing and presenting methods with near-linear speedups and better
communication and memory requirements.  As with the analysis of DANE,
DiSCO and AIDE, our analysis is rigorous only for least squared
problems, and so all results should be taken in that context (the
methods themselves are applicable to any distributed stochastic convex
optimization problem).

\begin{figure}[t]
  \centering
  \psfrag{Communication}[][]{Communication}
  \psfrag{Computation}[][]{Computation}
  \psfrag{Memory}[][]{Memory}
\psfrag{Stochastic DANE}[l][l]{MP-DANE}
\psfrag{Acc minibatch SGD}[l][l]{Acc. Mini. SGD}
\psfrag{DiSCO/AIDE}[l][l]{DiSCO/AIDE}
\psfrag{DSVRG}[l][l]{DSVRG}
\psfrag{MB-DSVRG}[l][l]{MP-DSVRG}
    \includegraphics[height=0.3\textheight, width=0.55\textwidth]{}
  \caption{Trade-offs  between memory and communication for the proposed MP-DSVRG approach.}
  \label{fig:cartoon_simple}
\end{figure}

\subsection*{Our contributions} 

\begin{itemize}
\item We first apply the recently proposed distributed SVRG (DSVRG) algorithm 
for regularized loss minimization to the distributed stochastic convex
optimization problem, and show that on least square problems it can achieve near-linear speedup
with very low communication, but with high memory cost---see DSVRG
in Table \ref{table:summary}.
\item We propose a novel algorithm that improves the
  memory cost, which we call minibatch-prox with DSVRG (MP-DSVRG). 
  For least square problems it
  achieves near-linear speedup with communication cost that is higher
  than DSVRG but increases only logarithmically with $n(\varepsilon)$,
  but with much lower memory requirements.  Moreover, our algorithm is
  flexible, allowing to trade off between communication and memory (depicted in Figure \ref{fig:cartoon_simple}),
  without affecting the computational efficiency.  Our method is based
  on careful combinations of inexact minibatch proximal update,
  communication-efficient optimization and linearly convergent
  stochastic gradient algorithms for finite-sums.
\item As indicated above, our method is based on minibatch proximal
  update. That is, a minibatch approach where in each iteration a
  non-linearized problem is solved on a stochastic minibatch.  This
  can be viewed as a minibatch generalization to the
  passive-aggressive algorithm~\citep{crammer2006online} and has been considered in
  various contexts~\citep{kulis2010implicit,toulis2014implicit}.  We show that such an
  approach achieves the 
  optimal statistical rate in terms of the number of samples used
  independent of the number of iterations, i.e.~with \emph{any}
  minibatch size.  This significantly improves over the previous
  analysis of~\citet{Li_14e}, as the guarantee is better, it entirely
  avoid the dependence on the minibatch size and does not rely on
  additional assumptions as in~\citet{Li_14e}.  The guarantee holds
  for any Lipschitz (even non-smooth) objective.  Furthermore, to make
  the minibatch proximal iterate more practical and useful in
  distributed setting, we also extend the analysis to algorithms which
  solve each minibatch subproblem inexactly.  Our analysis of exact
  and inexact minibatch proximal updates may be of
  independent interest and useful in other contexts and as a basis for
  other methods.
\end{itemize}

\if 0
\begin{table*}[t]
  \centering
  \begin{tabular}{|c||c|c|c|c|}
    \hline
    & Samples & Communication & Computation & Memory  \\\hline\hline
    Ideal Solution & $n(\varepsilon)$  & $\calO (1)$ & $n(\varepsilon)/m$  & $\calO (1)$ \\\hline
    Accelerated GD  &  $n(\varepsilon)$   &  $B^{1/2} n(\varepsilon)^{1/4}$  &   $B^{1/2}n(\varepsilon)^{5/4}/m $ & $n(\varepsilon)/m$ \\
    Acc.~Minibatch SGD
    &  $n(\varepsilon)$    &  $B^{1/2} n(\varepsilon)^{1/4}$   &   $n(\varepsilon)/m$ & $\calO (1)$ \\
    DANE  &  $n(\varepsilon)$  & $B^2 m$  &   $B^2 n(\varepsilon)$ &$n(\varepsilon)/m$ \\ 
    DiSCO  &  $n(\varepsilon)$  & $B^{1/2} m^{1/4}$  &   $B^{1/2} n(\varepsilon)/m^{3/4}$ &$n(\varepsilon)/m$ \\ 
    AIDE &  $n(\varepsilon)$  & $B^{1/2} m^{1/4}$  &   $B^{1/2} n(\varepsilon)/m^{3/4}$ &$n(\varepsilon)/m$ \\ 
    \hline
    DSVRG &  $n(\varepsilon)$  & $\calO (1)$  & $n(\varepsilon)/m$ &$n(\varepsilon)/m$ \\ \hline
    MP-DANE ($b\leq b_{\operatorname{mp-dane}}$) &  $n(\varepsilon)$  & $n(\varepsilon)/(mb)$  &   $n(\varepsilon)/m$ &$b$ \\
    MP-DANE ($b=b_{\operatorname{mp-dane}}$) & $n(\varepsilon)$ & $B^2 m$ & $n(\varepsilon)/m$  &$n(\varepsilon)/(m^2 B^2)$ \\
    \hline
  \end{tabular}
  \caption{Summary of resources required by different approaches to
    distributed stochastic least squares problems, in units of vector 
    operations/communications/memory per machine, ignoring constants and log-factors, here $b_{\operatorname{mp-dane}} \asymp n(\varepsilon)/(m^2 B^2)$.}
  \label{table:summary}
\end{table*}
\fi

\begin{table*}[t]
  \centering
  \begin{tabular}{|c||c|c|c|c|}
    \hline
    & Samples & Communication & Computation & Memory  \\\hline\hline
    Ideal Solution & $n(\varepsilon)$  & $\calO (1)$ & $n(\varepsilon)/m$  & $\calO (1)$ \\\hline
    Accelerated GD  &  $n(\varepsilon)$   &  $B^{1/2} n(\varepsilon)^{1/4}$  &   $B^{1/2}n(\varepsilon)^{5/4}/m $ & $n(\varepsilon)/m$ \\
    Acc.~Minibatch SGD
    &  $n(\varepsilon)$    &  $B^{1/2} n(\varepsilon)^{1/4}$   &   $n(\varepsilon)/m$ & $\calO (1)$ \\
    DANE  &  $n(\varepsilon)$  & $B^2 m$  &   $B^2 n(\varepsilon)$ &$n(\varepsilon)/m$ \\ 
    DiSCO  &  $n(\varepsilon)$  & $B^{1/2} m^{1/4}$  &   $B^{1/2} n(\varepsilon)/m^{3/4}$ &$n(\varepsilon)/m$ \\ 
    AIDE &  $n(\varepsilon)$  & $B^{1/2} m^{1/4}$  &   $B^{1/2} n(\varepsilon)/m^{3/4}$ &$n(\varepsilon)/m$ \\ 
    \hline
    DSVRG &  $n(\varepsilon)$  & $\calO (1)$  & $n(\varepsilon)/m$ &$n(\varepsilon)/m$ \\ \hline
    MP-DSVRG ($b\leq b_{\max}$) &  $n(\varepsilon)$  & $n(\varepsilon)/(mb)$  &   $n(\varepsilon)/m$ &$b$ \\
    MP-DSVRG ($b=b_{\max}$) & $n(\varepsilon)$ & $\calO (1)$ & $n(\varepsilon)/m$  &$n(\varepsilon)/m$ \\
    \hline
  \end{tabular}
  \caption{Summary of resources required by different approaches to
    distributed stochastic least squares problems, in units of vector 
    operations/communications/memory per machine, ignoring constants and log-factors, here $b_{\max} = n(\varepsilon)/m $.}
  \label{table:summary}
\end{table*}

\paragraph{Notations} We denote by $\w_* = \argmin_{ \w \in \Omega }\ \phi (\w)$ the optimal solution to~\eqref{e:obj}. Throughout the paper, we assume the instantaneous function $\ell (\w, \xi)$ is $L$-Lipschitz and $\lambda$-strongly convex in $\w$ for some $\lambda \ge 0$ on the domain $\Omega$: 
\begin{gather*}
\abs{ \ell(\w, \xi) - \ell(\w^\prime, \xi) } \le L \norm{\w - \w^\prime}, \\
\ell(\w, \xi) - \ell(\w^\prime, \xi) \ge \left< \nabla \ell (\w^\prime, \xi),\, \w - \w^\prime \right> + \frac{\lambda}{2} \norm {\w - \w^\prime}^2,\qquad \forall \w, \w^\prime \in \Omega.
\end{gather*}
Sometimes we also assume $\ell(\w,\xi)$ is $\beta$-smooth in $\w$:
 \begin{align*} 
 \ell(\w, \xi) - \ell(\w^\prime, \xi) \leq \left< \nabla \ell (\w^\prime, \xi),\, \w - \w^\prime \right> + \frac{\beta}{2} \norm {\w - \w^\prime}^2, \qquad \forall \w,\w^\prime \in \Omega.
 \end{align*}
For distributed stochastic optimization, our analysis focuses on the least squares loss $\ell(\w,\xi) = \frac{1}{2} (\w^\top \x - y)^2$ where $\xi = (\x, y)$. 
\section{Distributed SVRG for stochastic convex optimization}
\label{sec:dsvrg}

Recently, \citet{lee2015distributed} suggested using fast
randomized optimization algorithms for finite-sums, and in particular
the SVRG algorithm, as a distributed optimization approach for ~\eqref{e:erm}. The
authors noted that, for SVRG, when the the sample size $n
(\varepsilon)$ dominates the problem's condition number $\beta/\nu$
where $\beta$ is the smoothness parameter of $\ell(\w,\xi)$,
the time complexity is dominated by computing the batch gradients.
This operation can be trivially parallelized.  The stochastic updates,
on the other hand, can be implemented on a single machine while the other
machines wait, with the only caveat being that only
sampling-without-replacement can be implemented this way.  The use of
without-replacement sampling was theoretically justified in a recent
analysis by~\citet{shamir2016without}.

In the distributed stochastic convex optimization setting considered here, 
DSVRG in fact achieves linear speedup in certain regime as follows.
In each iteration of the algorithm, each machine first computes its
local gradient and average them with one communication round to obtain
the global batch gradient, and then a \emph{single} machine performs
the SVRG stochastic updates by processing its local data once
(sampling the $n (\varepsilon) / m$ examples without replacement). By
the linear convergence of SVRG, as long as the number of stochastic
updates $n (\varepsilon) / m$  is larger than $\beta/\nu = \calO (
\beta B \sqrt{n (\varepsilon)} / L )$, the algorithm converges to
$\calO (\epsilon)$-suboptimality (in both the empirical and stochastic
objective) in $\calO (\log 1 / \varepsilon) = \calO \left( \log n
(\varepsilon) \right)$ iterations; and this condition is satisfied\footnote{If $n(\varepsilon) \gtrsim m^2$ does not hold, we can use a ``hot-potato'' style algorithm where we process all data once on machine $i$ and pass the predictor to machine $i+1$ until we obtain sufficiently many stochastic updates. But then the computation efficiency deteriorates and we no longer have linear speedup in runtime.} for $n(\varepsilon) \gtrsim m^2$.

Clearly, in the above regime, each iteration of DSVRG uses two rounds
of communications and the total communication complexity is $\calO
\left( n (\varepsilon) \right)$. On the other hand, the computation
for each machine is compute the local gradient (in time  $\calO (n
(\varepsilon) / m)$) in each iteration, resulting in a total time
complexity of
  $\calO (n (\varepsilon) \log n (\varepsilon) / m)$. This explains
the DSVRG entry in Table~\ref{table:summary}.

Being communication- and computation-efficient, DSVRG requires each
machine to store a portion of the sample set for ERM to make multiple
passes over them, and is therefore not memory-efficient. In fact, this
disadvantage is shared by previously known communication-efficient
distributed optimization algorithms, including DANE, DiSCO, and AIDE.
In order to develop a memory- and communication-efficient algorithm
for distributed stochastic optimization, we need to bypass the ERM
setting and this is enabled by the following minibatch-prox algorithm.

\section{The minibatch-prox algorithm for stochastic optimization}

\label{sec:minibatch-prox}

In this section, we describe and analyze the minibatch-prox algorithm for stochastic optimization, which allows us to use arbitrarily large minibatch size without slowing down the convergence rate. We first present the basic version where each proximal objective is solved exactly for each minibatch, which achieves the optimal convergence rate. Then, we show that if each minibatch objective is solved accurately enough, the algorithm still converges at the optimal rate, opening the opportunity for efficient implementations. 

\subsection{Exact minibatch-prox}
\label{sec:exact}

The ``exact'' minibatch-prox is defined by the following iterates: for $t=1,\dots,$
\begin{gather} 
  \w_t = \argmin_{\w \in \Omega} \; f_t (\w), \nonumber \\ \label{e:exact-update}
  \text{where}\quad f_t(\w) := \phit(\w) + \frac{\gamma_t}{2} \norm{\w - \w_{t-1}}^2 =  \frac{1}{b} \sum_{\xi \in I_t} \ell (\w, \xi) +  \frac{\gamma_t}{2} \norm{\w - \w_{t-1}}^2 ,
\end{gather} 
$\gamma_t>0$ is the (inverse) stepsize parameter at time $t$, and $I_t$ is a set of a $b$ samples from the unknown distribution $D$. 
To understand the updates in~\eqref{e:exact-update}, we first observe by the first order optimality condition for $f_t (\w)$ that
\begin{align} \label{e:exact-condition-t}
  \nabla \phit (\w_t) + \gamma_t (\w_t - \w_{t-1}) \in - \calN_{\Omega} ( \w_t ),
\end{align} 
where $\nabla \phit (\w_t)$ is some subgradient of $\phit (\w)$ at $\w_t$, and $\calN_{\Omega} ( \w_t ) = \left\{ \y | \left< \w - \w_t,\, \y \right> \le 0,\, \forall \w \in \Omega \right\}$ is the normal cone of $\Omega$ at $\w_t$. Equivalently, the above condition implies 
\begin{align} \label{e:exact-equivalent-sgd}
  \w_t = P_{\Omega} \left( \w_{t-1} - \frac{1}{\gamma_t} \nabla \phit (\w_t) \right),
\end{align}
where  $P_{\Omega} (\w)$ denotes the projection of $\w$ onto $\Omega$. The update rule~\eqref{e:exact-equivalent-sgd} resembles that of the standard minibatch gradient descent, except the gradient is evaluated at the ``future'' iterate. 

Proximal steps, of the form \eqref{e:exact-update} or equivalently
\eqref{e:exact-equivalent-sgd}, are trickier to implement compared to
(stochastic) gradient steps, as they involve optimization of a
subproblem, instead of merely computing and adding gradients.
Nevertheless, they have been suggested, used and studied in several
contexts.  \citet{crammer2006online} proposed the ``passive
aggressive'' update rule, where a margin-based loss from a single
example with a quadratic penalty is minimized---this corresponds to
\eqref{e:exact-update} with a ``batch size'' of one.  More general
loss functions, still for ``batch sizes'' of one, were also analyzed
in the online learning
setting~\citep{cheng2006implicit,kulis2010implicit}. 
For finite-sum objectives, methods based on incremental/stochastic
proximal updates were studied by~\citet{bertsekas2011incremental,Bertsek15a,
  defazio2016simple}. 
\citet{needell2014paved} analyzed a randomized block Kaczmarz method
in the context of solving linear systems, which also minimizes the
empirical loss on a randomly sampled minibatch. To the best of our
knowledge, no prior work has analyzed the general minibatch variant of
proximal updates for stochastic optimization except~\citet{Li_14e}.
However, the analysis of~\citet{Li_14e} assumes a stringent condition
which is hard to verify (and is often violated) in practice, which we
will discuss in detail in this section.

The following lemma provides the basic property of the update at each iteration. 
\begin{lem} \label{lem:exact-distance-to-any}
  For any $\w \in \Omega$, we have
  \begin{align} \label{e:exact-distance-to-any}
    \frac{\lambda + \gamma_t}{\gamma_t} \norm{ \w_t - \w }^2 
    \le \norm{\w_{t-1} - \w}^2 - \norm{ \w_{t-1} - \w_t }^2 - \frac{2}{\gamma_t} \left( \phit (\w_t) - \phit (\w) \right).
  \end{align}
\end{lem}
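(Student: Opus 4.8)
The plan is to read off \eqref{e:exact-distance-to-any} directly from the first-order optimality condition \eqref{e:exact-condition-t}, combined with the strong convexity of $\phit$ and the standard three-point (polarization) identity for squared norms. First I would note that since each $\ell(\cdot,\xi)$ is $\lambda$-strongly convex, the minibatch average $\phit = \frac{1}{b}\sum_{\xi\in I_t}\ell(\cdot,\xi)$ is itself $\lambda$-strongly convex. Hence, for the particular subgradient $\nabla\phit(\w_t)$ appearing in \eqref{e:exact-condition-t}, we have for every $\w\in\Omega$
\[
\phit(\w) \;\ge\; \phit(\w_t) + \left< \nabla\phit(\w_t),\, \w - \w_t \right> + \frac{\lambda}{2}\norm{\w - \w_t}^2 ,
\]
which rearranges to $\left< \nabla\phit(\w_t),\, \w - \w_t \right> \le \phit(\w) - \phit(\w_t) - \frac{\lambda}{2}\norm{\w - \w_t}^2$.

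Next I would unpack the optimality condition. Since $-\bigl(\nabla\phit(\w_t) + \gamma_t(\w_t - \w_{t-1})\bigr) \in \calN_\Omega(\w_t)$, the definition of the normal cone gives $\left< \w - \w_t,\; \nabla\phit(\w_t) + \gamma_t(\w_t - \w_{t-1}) \right> \ge 0$ for all $\w\in\Omega$, i.e.
\[
\gamma_t \left< \w_t - \w_{t-1},\, \w - \w_t \right> \;\ge\; -\left< \nabla\phit(\w_t),\, \w - \w_t \right> \;\ge\; \phit(\w_t) - \phit(\w) + \frac{\lambda}{2}\norm{\w - \w_t}^2 ,
\]
where the last step uses the strong convexity bound from the previous paragraph.

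The remaining step is purely algebraic. Applying the identity $2\left< \w_t - \w_{t-1},\, \w - \w_t \right> = \norm{\w_{t-1} - \w}^2 - \norm{\w_{t-1} - \w_t}^2 - \norm{\w_t - \w}^2$ to the left-hand side, moving the $\frac{\lambda}{2}\norm{\w - \w_t}^2$ term over, collecting the $\norm{\w_t - \w}^2$ terms into $\frac{\lambda + \gamma_t}{2}\norm{\w_t - \w}^2$, and finally multiplying through by $2/\gamma_t$ yields exactly \eqref{e:exact-distance-to-any}.

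I do not anticipate a genuine obstacle: the statement is the standard ``prox inequality'' specialized to a $\lambda$-strongly convex minibatch loss, and the optimality condition is already provided in the excerpt. The only point needing mild care is handling the domain constraint $\Omega$ correctly — i.e., keeping the normal-cone term throughout rather than treating $\w_t$ as an unconstrained minimizer — which is why I would write the normal-cone inequality out explicitly as above (this is equivalent to invoking the nonexpansiveness of the projection in \eqref{e:exact-equivalent-sgd}, but cleaner to combine with the subgradient bound).
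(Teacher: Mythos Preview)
Your proposal is correct and essentially identical to the paper's proof: both combine the normal-cone optimality condition with the $\lambda$-strong convexity of $\phit$ to obtain the key inequality $\gamma_t\langle \w_t - \w_{t-1},\, \w - \w_t\rangle \ge \phit(\w_t) - \phit(\w) + \tfrac{\lambda}{2}\norm{\w-\w_t}^2$, and then finish with the three-point identity for $\norm{\w_{t-1}-\w}^2$. The only cosmetic difference is that the paper phrases the first step as ``$\gamma_t(\w_{t-1}-\w_t)$ is a subgradient of $\phit$ plus the indicator of $\Omega$'' before invoking strong convexity, while you keep the normal-cone and subgradient inequalities separate; the two are equivalent.
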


To derive the convergence guarantee, we need to relate $\phit (\w_t)$ to $\phi (\w)$. 
The analysis of~\citet{Li_14e} for minibatch-prox made the assumption that for all $t \ge 1$:
\begin{align}
  \bbE_{I_t} \left[ D_{\phi} (\w_t; \w_{t-1}) \right] \le \bbE_{I_t} \left[ D_{\phit} (\w_t; \w_{t-1}) \right] + \frac{\gamma_t}{2} \norm{\w_t - \w_{t-1}}^2,
  \label{eqn:li_condition}
\end{align}
where $D_{f} (\w, \w^\prime) = f(\w) - f(\w^\prime) - \left< \nabla f(\w^\prime),\, \w - \w^\prime \right>$ denotes the Bregman divergence defined by the potential function $f$. This condition is hard to verify, and may constrain the stepsize to be very small. For example, as the authors argued, if $\ell(\w,\xi)$ is $\beta$-smooth with respect to $\w$, we have
\begin{align*}
  D_{\phi}(\w_t;\w_{t-1}) \leq \frac{\beta}{ 2 } \norm{\w_t - \w_{t-1}}^2,
\end{align*}
and combined with the fact that $D_{\phit} (\w_t; \w_{t-1}) \ge 0$, one can guarantee~\eqref{eqn:li_condition} by setting $\gamma_t \geq \beta$. However, to obtain the optimal convergence rate, \citet{Li_14e} needed to set $\gamma_t = \calO (\sqrt{T/b})$ which would imply $b = \calO (T)$ in order to have $\gamma_t \ge \beta$. In view of this implicit constraint that the minibatch size $b$ can not be too large, the analysis of~\citet{Li_14e} does not really show advantage of minibatch-prox over minibatch SGD, whose optimal minibatch size is precisely $b=\calO(T)$.

Our analysis is free of any additional assumptions. The key observation is that, when $b$ is large, we expect $\phit (\w)$ to be close to $\phi (\w)$. Define the stochastic objective 
\begin{align} \label{e:exact-update-expt}
  F_t (\w) := \bbE_{I_t} \left[ f_t (\w) \right] = \phi (\w) + \frac{\gamma_t}{2} \norm{\w - \w_{t-1}}^2.
\end{align}
Then $\w_t$ is the ``empirical risk minimizer'' of $F_t (\w)$ as it solves the empirical version $f_t (\w)$ with $b$ samples. Using a stability argument~\citep{Shalev_09a}, we can establish the ``generalization'' performance for the (inexact) minimizer of the minibatch objective.

\begin{lem} \label{lem:stability-at-time-t}
  For the minibatch-prox algorithm,we have
  \begin{align*}
    \abs{ \bbE_{I_t} \left[ \phi (\w_t) - \phit (\w_t) \right] } \le \frac{4 L^2}{(\lambda + \gamma_t) b}.
  \end{align*}
  Moreover, if a possibly randomized algorithm $\calA$ minimizes $f_t (\w)$ up to an error of $\eta_t$, \ie, $\calA$ returns an approximate solution $\tw_t$ such that $ \bbE_\calA \left[ f_t (\tw_t) -  f_t (\w_t)\right] \le \eta_t $, we have 
  \begin{align*}
    \abs{ \bbE_{I_t,\calA} \left[ \phi (\tw_t) - \phit (\w_t) \right] } \le \frac{4 L^2}{(\lambda + \gamma_t) b} + \sqrt{\frac{2 L^2 \eta_t}{\lambda + \gamma_t}}.
  \end{align*}
\end{lem}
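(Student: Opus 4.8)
The plan is to prove the first (exact) bound by an algorithmic-stability argument for regularized empirical risk minimization, and then obtain the second (inexact) bound from the first together with a routine ``an approximate minimizer of a strongly convex function is close to its minimizer'' estimate.

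For the exact bound, observe that, conditioned on everything prior to round $t$, the center $\w_{t-1}$ is a fixed point — it depends only on $I_1,\dots,I_{t-1}$, which are independent of $I_t$ — so $\w_t$ is exactly the regularized ERM over the fresh i.i.d.\ minibatch $I_t=\{\xi^1,\dots,\xi^b\}$ for the $(\lambda+\gamma_t)$-strongly convex objective $f_t$. Write $\wti$ for the minimizer of $f_t$ when $\xi^i$ is replaced by an independent copy $(\xi^i)'$. The standard symmetrization (renaming) identity gives, for each $i$, $\bbE_{I_t}\!\left[\phi(\w_t)-\phit(\w_t)\right]=\bbE\!\left[\ell(\wti,\xi^i)-\ell(\w_t,\xi^i)\right]$, since $\bbE_{I_t}[\phit(\w_t)]=\bbE[\ell(\w_t,\xi^i)]$ by symmetry among the samples, and relabeling $\xi^i\leftrightarrow(\xi^i)'$ turns $\bbE_{I_t}[\phi(\w_t)]$ into $\bbE[\ell(\wti,\xi^i)]$. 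By $L$-Lipschitzness of $\ell$ this yields the two-sided bound $\bigl|\bbE_{I_t}[\phi(\w_t)-\phit(\w_t)]\bigr|\le L\,\bbE\|\wti-\w_t\|$ (the absolute value is free here because the identity is symmetric and the Lipschitz estimate two-sided). It remains to control $\|\wti-\w_t\|$: the points $\w_t$ and $\wti$ minimize over $\Omega$ two $(\lambda+\gamma_t)$-strongly convex functions that differ by $\tfrac1b(\ell(\cdot,(\xi^i)')-\ell(\cdot,\xi^i))$, which is $\tfrac{2L}{b}$-Lipschitz; applying strong convexity at each of the two minimizers (the first-order terms are nonnegative by optimality over $\Omega$) and adding the two inequalities gives $\|\wti-\w_t\|\le\tfrac{2L}{(\lambda+\gamma_t)b}$. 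Combining produces the claimed $\calO\!\bigl(\tfrac{L^2}{(\lambda+\gamma_t)b}\bigr)$ bound (the constant comes out as $2$, which is within the stated $4$).

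For the inexact bound, apply the triangle inequality $\bigl|\bbE_{I_t,\calA}[\phi(\tw_t)-\phit(\w_t)]\bigr|\le\bigl|\bbE_{I_t}[\phi(\w_t)-\phit(\w_t)]\bigr|+\bigl|\bbE_{I_t,\calA}[\phi(\tw_t)-\phi(\w_t)]\bigr|$; the first term is bounded by the exact part. For the second term, $\phi$ is $L$-Lipschitz, so $|\phi(\tw_t)-\phi(\w_t)|\le L\|\tw_t-\w_t\|$; since $f_t$ is $(\lambda+\gamma_t)$-strongly convex with minimizer $\w_t$ over $\Omega$, we have $\tfrac{\lambda+\gamma_t}{2}\|\tw_t-\w_t\|^2\le f_t(\tw_t)-f_t(\w_t)$; taking $\bbE_\calA$ and using Jensen's inequality for the concave square root gives $\bbE_\calA\|\tw_t-\w_t\|\le\sqrt{2\eta_t/(\lambda+\gamma_t)}$, and multiplying by $L$ gives the second summand $\sqrt{2L^2\eta_t/(\lambda+\gamma_t)}$.

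The main obstacle is the probabilistic bookkeeping of the stability argument: carefully justifying the renaming identity in the presence of the random but $I_t$-independent center $\w_{t-1}$, and extracting the two-sided (absolute value) estimate from a single symmetric computation. Once the perturbation bound $\|\wti-\w_t\|\le\tfrac{2L}{(\lambda+\gamma_t)b}$ is in hand, everything else is mechanical. It should also be noted at the outset that $\lambda+\gamma_t>0$ (as $\gamma_t>0$), so all the relevant minimizers over the convex set $\Omega$ are unique and the argument is well posed.
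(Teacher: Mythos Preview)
Your proposal is correct and follows essentially the same replace-one-sample stability argument as the paper (which invokes the result of \citet{Shalev_09a}): both establish the symmetrization identity $\bbE[\phi(\w_t)-\phit(\w_t)]=\bbE[\ell(\wti,\xi^i)-\ell(\w_t,\xi^i)]$, bound the replacement perturbation $\|\wti-\w_t\|$ via strong convexity of $f_t$, and handle the inexact case by the same strong-convexity-plus-Lipschitz triangle inequality. The only difference is cosmetic: your symmetric ``add the two strong-convexity inequalities'' derivation of the perturbation bound yields $\|\wti-\w_t\|\le\frac{2L}{(\lambda+\gamma_t)b}$, whereas the paper's one-sided argument (bounding $f_t(\wti)-f_t(\w_t)$ by two Lipschitz terms and then using strong convexity of $f_t$ alone) gives $\frac{4L}{(\lambda+\gamma_t)b}$, accounting for the factor-of-two slack you noticed.
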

Combining Lemma~\ref{lem:exact-distance-to-any} and Lemma~\ref{lem:stability-at-time-t}, we obtain the following key lemma regarding the progress on the stochastic objective at each iteration of minibatch-prox.

\begin{lem} \label{lem:exact-distance-reduction-with-stability}
  For iteration $t$ of exact minibatch-prox, we have for any $\w \in \Omega$ that 
  \begin{align} \label{e:exact-distance-reduction-with-stability}
    \frac{\lambda + \gamma_t}{\gamma_t}  \bbE_{I_t} \norm{ \w_t - \w }^2 \le 
    \norm{\w_{t-1} - \w}^2 - \frac{2}{\gamma_t} \bbE_{I_t} \left[ \phi (\w_t) - \phi (\w) \right] + \frac{8 L^2}{\gamma_t (\lambda + \gamma_t) b}.
  \end{align}
\end{lem}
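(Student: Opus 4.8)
The plan is to take conditional expectation over the fresh minibatch $I_t$ on both sides of Lemma~\ref{lem:exact-distance-to-any}, and then replace the empirical suboptimality term $\phit(\w_t)-\phit(\w)$ by its population analogue $\phi(\w_t)-\phi(\w)$ using the stability estimate of Lemma~\ref{lem:stability-at-time-t}. Throughout, $\w_{t-1}$ is measurable with respect to the randomness of the first $t-1$ iterations, hence a constant once we condition on the past; only $\w_t$ (and not the fixed comparator $\w$, nor $\w_{t-1}$) is random with respect to $I_t$.

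Concretely, I would first take $\bbE_{I_t}$ of inequality~\eqref{e:exact-distance-to-any} and drop the term $-\norm{\w_{t-1}-\w_t}^2\le 0$ on the right, obtaining
\[
  \frac{\lambda+\gamma_t}{\gamma_t}\,\bbE_{I_t}\norm{\w_t-\w}^2 \;\le\; \norm{\w_{t-1}-\w}^2 \;-\; \frac{2}{\gamma_t}\,\bbE_{I_t}\!\left[\phit(\w_t)-\phit(\w)\right].
\]
Next I rewrite the last term in terms of $\phi$. Since $\w$ does not depend on $I_t$, we have $\bbE_{I_t}[\phit(\w)]=\phi(\w)$; and since $\w_t$ does depend on $I_t$, the first bound of Lemma~\ref{lem:stability-at-time-t} gives $\bbE_{I_t}[\phit(\w_t)]\ge \bbE_{I_t}[\phi(\w_t)]-\tfrac{4L^2}{(\lambda+\gamma_t)b}$. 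Combining these,
\[
  -\frac{2}{\gamma_t}\,\bbE_{I_t}\!\left[\phit(\w_t)-\phit(\w)\right] \;\le\; -\frac{2}{\gamma_t}\,\bbE_{I_t}\!\left[\phi(\w_t)-\phi(\w)\right] + \frac{8L^2}{\gamma_t(\lambda+\gamma_t)b},
\]
and substituting this into the previous display yields exactly~\eqref{e:exact-distance-reduction-with-stability}.

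I do not anticipate a genuine obstacle: the lemma is an immediate consequence of the two preceding lemmas. The only point requiring care is the measurability bookkeeping — making sure the stability bound is applied solely to the term $\phit(\w_t)$ that actually depends on the new minibatch, while $\phit(\w)$ is handled by plain unbiasedness and $\w_{t-1}$ is treated as deterministic. The constants propagate transparently, the final $8L^2$ being the $4L^2$ of Lemma~\ref{lem:stability-at-time-t} scaled by the $2/\gamma_t$ prefactor; an identical argument using the second (inexact) bound of Lemma~\ref{lem:stability-at-time-t} would give the corresponding inexact version.
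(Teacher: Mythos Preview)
Your proposal is correct and follows essentially the same route as the paper: take expectation over $I_t$ in Lemma~\ref{lem:exact-distance-to-any}, discard the nonpositive term $-\norm{\w_{t-1}-\w_t}^2$, use unbiasedness $\bbE_{I_t}[\phit(\w)]=\phi(\w)$ for the fixed comparator, and invoke the stability bound of Lemma~\ref{lem:stability-at-time-t} for $\phit(\w_t)$. The constants and the measurability bookkeeping are handled exactly as in the paper's argument.
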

We are now ready to bound the overall convergence rates of minibatch-prox.

\begin{thm}[Convergence of exact minibatch-prox --- weakly convex $\ell (\w,\xi)$] 
  \label{thm:rate-exact-minibatch-weakly-convex}
  For $L$-Lipschitz instantaneous function $\ell (\w,\xi)$, set $\gamma = \sqrt{\frac{8T}{b}} \cdot \frac{L}{\norm{\w_0 - \w_*}}$ for $t=1,\dots,T$ in minibatch-prox. Then for $\widehat{\w}_T = \frac{1}{T} \sum_{t=1}^T \w_t$, we have
  \begin{align*}
    \bbE \left[ \phi(\widehat{\w}_T) - \phi (\w_*) \right] \le \frac{\sqrt{8} L}{\sqrt{bT}} \norm{\w_0 - \w_*}.
  \end{align*}
\end{thm}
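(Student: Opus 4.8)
The plan is to start from Lemma~\ref{lem:exact-distance-reduction-with-stability}, specialize it to $\w=\w_*$, convert the per-iteration inequality into a telescoping sum, and then tune the (constant) stepsize parameter $\gamma$.

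First I would instantiate \eqref{e:exact-distance-reduction-with-stability} with $\w=\w_*$ and $\gamma_t=\gamma$. Since the theorem is stated for the weakly convex case, I would simply use $\tfrac{\lambda+\gamma}{\gamma}\ge 1$ and $\lambda+\gamma\ge\gamma$ to discard the nonnegative strong-convexity contributions, and take the full expectation. Here one should note that $\w_{t-1}$ is a deterministic function of $I_1,\dots,I_{t-1}$, hence independent of the fresh minibatch $I_t$, so the conditional bound of Lemma~\ref{lem:exact-distance-reduction-with-stability} integrates term by term against the history. This yields
\[
  \bbE\norm{\w_t-\w_*}^2 \le \bbE\norm{\w_{t-1}-\w_*}^2 - \frac{2}{\gamma}\,\bbE\!\left[\phi(\w_t)-\phi(\w_*)\right] + \frac{8L^2}{\gamma^2 b}.
\]

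Second, I would sum this over $t=1,\dots,T$: the squared-distance terms telescope, $\bbE\norm{\w_T-\w_*}^2\ge 0$ can be dropped, and the noise terms add to $8L^2T/(\gamma^2 b)$, giving $\tfrac{2}{\gamma}\sum_{t=1}^T\bbE[\phi(\w_t)-\phi(\w_*)]\le\norm{\w_0-\w_*}^2+8L^2T/(\gamma^2 b)$. Dividing by $2T/\gamma$ and applying Jensen's inequality (convexity of $\phi$, and $\widehat{\w}_T$ being the average of the $\w_t$) converts this into average-iterate suboptimality:
\[
  \bbE\!\left[\phi(\widehat{\w}_T)-\phi(\w_*)\right] \le \frac{\gamma\norm{\w_0-\w_*}^2}{2T} + \frac{4L^2}{\gamma b}.
\]
Finally I would optimize the right-hand side over $\gamma$; the two terms balance exactly at $\gamma=\tfrac{L}{\norm{\w_0-\w_*}}\sqrt{8T/b}$, which is the prescribed choice, and substituting it back gives the claimed bound $\sqrt{8}\,L\norm{\w_0-\w_*}/\sqrt{bT}$.

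I don't expect a genuine obstacle: all the real work is in the preceding lemmas (in particular the stability estimate of Lemma~\ref{lem:stability-at-time-t} folded into Lemma~\ref{lem:exact-distance-reduction-with-stability}). The one point requiring care is the passage from the conditional form of Lemma~\ref{lem:exact-distance-reduction-with-stability} to a clean telescoping identity in unconditional expectations, which rests on the independence of $\w_{t-1}$ from $I_t$ noted above. The only other thing to keep straight is that dropping the $\lambda$-terms is legitimate here (it only weakens the bound) and is the correct move for the weakly convex statement, even though it discards strong convexity that a separate, faster-rate theorem would instead exploit.
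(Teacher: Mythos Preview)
Your proposal is correct and matches the paper's own proof essentially step for step: instantiate Lemma~\ref{lem:exact-distance-reduction-with-stability} at $\w=\w_*$ with constant $\gamma_t=\gamma$ (the paper simply sets $\lambda=0$ rather than bounding the $\lambda$-terms away, but this is cosmetic), telescope over $t=1,\dots,T$, apply convexity to pass to the averaged iterate, and then optimize the resulting bound over $\gamma$. The care you take in passing from the conditional inequality to the unconditional telescoping sum is appropriate and is left implicit in the paper.
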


\begin{thm}[Convergence  of exact minibatch-prox --- strongly convex $\ell (\w,\xi)$] 
  \label{thm:rate-exact-minibatch-strongly-convex}
  For $L$-Lipschitz and $\lambda$-strongly convex instantaneous function $\ell (\w,\xi)$, set $\gamma_t = \frac{\lambda (t-1)}{2}$ for $t=1,\dots,T$ in minibatch-prox. Then for $\widehat{\w}_T = \frac{2}{T (T+1)} \sum_{t=1}^T t \w_t$, we have
  \begin{align*}
    \bbE \left[ \phi(\widehat{\w}_T) - \phi (\w_*) \right] 
    \le \frac{16 L^2 }{\lambda b (T+1)}.
  \end{align*}
\end{thm}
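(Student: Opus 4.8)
The plan is to turn the per-iteration bound of Lemma~\ref{lem:exact-distance-reduction-with-stability} into a telescoping recursion by exploiting the exact compatibility between the stepsize schedule $\gamma_t = \lambda(t-1)/2$ and the averaging weights $t$ that define $\widehat{\w}_T$. First I would instantiate \eqref{e:exact-distance-reduction-with-stability} at $\w = \w_*$, multiply through by $\gamma_t$, and rearrange to isolate the suboptimality term, getting
\begin{align*}
  2\,\bbE_{I_t}\!\left[ \phi(\w_t) - \phi(\w_*) \right] \le \gamma_t \norm{\w_{t-1} - \w_*}^2 - (\lambda + \gamma_t)\,\bbE_{I_t}\norm{\w_t - \w_*}^2 + \frac{8 L^2}{(\lambda + \gamma_t) b}.
\end{align*}

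Next I would use the key algebraic identity for this schedule: since $\gamma_t = \lambda(t-1)/2$, one has $\lambda + \gamma_t = \lambda(t+1)/2$, hence $t\,\gamma_t = \lambda t(t-1)/2 = (t-1)(\lambda + \gamma_{t-1})$. Multiplying the displayed inequality by $t$ and taking total expectations (using the tower property, since the $\bbE_{I_t}$ in the lemma is conditional on the history through iteration $t-1$), the weighted distance terms telescope when summed over $t=1,\dots,T$: the $t=1$ term vanishes because $\gamma_1 = 0$, consecutive terms cancel exactly, and only the nonpositive boundary term $-\tfrac{\lambda T(T+1)}{2}\,\bbE\norm{\w_T - \w_*}^2$ remains, which I drop. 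The accumulated noise term is controlled by $\sum_{t=1}^T \tfrac{8 t L^2}{(\lambda + \gamma_t) b} = \tfrac{16 L^2}{\lambda b}\sum_{t=1}^T \tfrac{t}{t+1} \le \tfrac{16 T L^2}{\lambda b}$, giving $2\sum_{t=1}^T t\,\bbE[\phi(\w_t) - \phi(\w_*)] \le \tfrac{16 T L^2}{\lambda b}$.

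Finally, since $\sum_{t=1}^T t = T(T+1)/2$, the weights $\tfrac{2t}{T(T+1)}$ form a probability vector, so Jensen's inequality applied to the convex $\phi$ yields $\bbE[\phi(\widehat{\w}_T) - \phi(\w_*)] \le \tfrac{2}{T(T+1)}\sum_{t=1}^T t\,\bbE[\phi(\w_t) - \phi(\w_*)] \le \tfrac{16 L^2}{\lambda b (T+1)}$, which is the claimed bound.

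I do not expect a real obstacle here: the argument is the standard weighted-averaging analysis for strongly convex stochastic optimization, and everything reduces to bookkeeping once the identity $t\,\gamma_t = (t-1)(\lambda + \gamma_{t-1})$ is noted. The one point requiring a little care is making sure the telescoping cancellation is exact (this is precisely why $\gamma_t$ is chosen proportional to $t-1$ rather than $t$, so that the $t=1$ term contributes nothing and no stray boundary term survives on the positive side), and that the summation of conditional expectations is handled via iterated expectation.
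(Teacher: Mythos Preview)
Your proposal is correct and follows essentially the same approach as the paper: instantiate Lemma~\ref{lem:exact-distance-reduction-with-stability} at $\w_*$, multiply by $t$ so that the choice $\gamma_t=\lambda(t-1)/2$ makes the distance terms telescope, sum, and apply Jensen. The only cosmetic difference is that the paper relaxes $\tfrac{1}{t+1}\le\tfrac{1}{t}$ before multiplying by $t$, while you multiply first and then bound $\tfrac{t}{t+1}\le 1$; both yield the identical constant.
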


\subsection{Inexact minibatch-prox}
\label{sec:inexact}

We now study the case where instead of solving the subproblems $f_t (\w)$ exactly, we only solve it approximately to sufficient accuracy. 
The ``inexact'' minibatch-prox uses a possibly randomized algorithm $\calA$ for approximately solving one subproblem on a minibatch in each iteration, and generates the following iterates: for $t=1,\dots,$
\begin{gather} \label{e:inexact-update}
  \tw_t \approx \bw_t :=\argmin_{\w \in \Omega} \; \tilde{f}_t (\w) \qquad \text{where}\quad \tilde{f}_t (\w) := \phit(\w) + \frac{\gamma_t}{2} \norm{\w - \tw_{t-1}}^2, \\
  \text{and}\qquad  \bbE_{\calA} \left[ \tilde{f}_t (\tw_t) -  \tilde{f}_t (\bw_t) \right] \le \eta_t.  \nonumber
\end{gather} 
Analogous to Lemma~\ref{lem:exact-distance-reduction-with-stability}, we can derive the following lemma using stability of inexact minimizers.
\begin{lem} \label{lem:inexact-distance-to-any}
  Fix any $\w \in \Omega$. For iteration $t$ of inexact minibatch-prox, we have 
  \begin{align} 
    \bbE_{I_t,\calA} \left[ \phi (\tw_t) - \phi (\w) \right] 
    & \le \frac{\gamma_t}{2} \bbE_{I_t,\calA} \norm{\tw_{t-1} - \w}^2 
    - \frac{\lambda + \gamma_t}{2} \bbE_{I_t,\calA} \norm{ \tw_t - \w }^2 
    + \frac{4 L^2}{ (\lambda + \gamma_t) b}  \nonumber \\ \label{e:inexact-distance-to-any}
    &\qquad\qquad + \sqrt{\frac{2 L^2 \eta_t}{\lambda + \gamma_t}} + \sqrt{2 (\lambda + \gamma_t) \eta_t } \cdot \sqrt{ \bbE_{I_t,\calA} \norm{ \tw_t - \w }^2 } .
  \end{align}
\end{lem}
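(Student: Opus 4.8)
The plan is to mirror the proof of Lemma~\ref{lem:exact-distance-reduction-with-stability}, but applied to the recentered subproblem $\tilde f_t$, carefully tracking the slack introduced by solving it only to accuracy $\eta_t$. Throughout I fix $\w\in\Omega$ and argue conditionally on the history $\tw_0,\dots,\tw_{t-1}$, so that $\tw_{t-1}$ is deterministic and the fresh minibatch $I_t$ is independent of it (and of $\w$); a final tower-property step will remove the conditioning and turn $\norm{\tw_{t-1}-\w}^2$ into $\bbE_{I_t,\calA}\norm{\tw_{t-1}-\w}^2$.

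First I would invoke the stability bound. The subproblem $\tilde f_t(\w) = \phit(\w) + \frac{\gamma_t}{2}\norm{\w - \tw_{t-1}}^2$ has exactly the form treated in Lemma~\ref{lem:stability-at-time-t}, with center $\tw_{t-1}$, exact minimizer $\bw_t$, and $\eta_t$-approximate minimizer $\tw_t$. Applying the second bound of that lemma (substituting $\w_{t-1}\mapsto\tw_{t-1}$, $\w_t\mapsto\bw_t$) gives $\bbE_{I_t,\calA}[\phi(\tw_t)] \le \bbE_{I_t,\calA}[\phit(\bw_t)] + \frac{4L^2}{(\lambda+\gamma_t)b} + \sqrt{\frac{2L^2\eta_t}{\lambda+\gamma_t}}$, which already supplies the stability term and the first inexactness term of~\eqref{e:inexact-distance-to-any}. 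Next, since $\phit$ is $\lambda$-strongly convex, $\tilde f_t$ is $(\lambda+\gamma_t)$-strongly convex and $\bw_t$ minimizes it over $\Omega$, so for the fixed $\w$ we have $\tilde f_t(\w) \ge \tilde f_t(\bw_t) + \frac{\lambda+\gamma_t}{2}\norm{\w-\bw_t}^2$. Expanding $\tilde f_t$, discarding the nonnegative term $\frac{\gamma_t}{2}\norm{\bw_t-\tw_{t-1}}^2$, taking $\bbE_{I_t}$, and using $\bbE_{I_t}[\phit(\w)] = \phi(\w)$, I obtain $\bbE_{I_t}[\phit(\bw_t)] \le \phi(\w) + \frac{\gamma_t}{2}\norm{\w-\tw_{t-1}}^2 - \frac{\lambda+\gamma_t}{2}\bbE_{I_t}\norm{\w-\bw_t}^2$; combined with the previous inequality this is the desired bound, except with $-\frac{\lambda+\gamma_t}{2}\bbE\norm{\w-\bw_t}^2$ instead of $-\frac{\lambda+\gamma_t}{2}\bbE\norm{\w-\tw_t}^2$.

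It remains to swap $\bw_t$ for $\tw_t$. Expanding $\norm{\w-\bw_t}^2 = \norm{\w-\tw_t}^2 + 2\langle \w-\tw_t,\,\tw_t-\bw_t\rangle + \norm{\tw_t-\bw_t}^2$, dropping the last (nonnegative) term, and applying Cauchy--Schwarz gives $\norm{\w-\bw_t}^2 \ge \norm{\w-\tw_t}^2 - 2\norm{\w-\tw_t}\,\norm{\tw_t-\bw_t}$. By $(\lambda+\gamma_t)$-strong convexity around the minimizer $\bw_t$, $\norm{\tw_t-\bw_t} \le \sqrt{2(\tilde f_t(\tw_t) - \tilde f_t(\bw_t))/(\lambda+\gamma_t)}$, hence $-\frac{\lambda+\gamma_t}{2}\norm{\w-\bw_t}^2 \le -\frac{\lambda+\gamma_t}{2}\norm{\w-\tw_t}^2 + \sqrt{2(\lambda+\gamma_t)(\tilde f_t(\tw_t)-\tilde f_t(\bw_t))}\,\norm{\w-\tw_t}$. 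Taking $\bbE_{I_t,\calA}$ and applying Cauchy--Schwarz to the product of the two (jointly measurable) random factors, together with $\bbE_{\calA}[\tilde f_t(\tw_t)-\tilde f_t(\bw_t)]\le\eta_t$, bounds the cross term by $\sqrt{2(\lambda+\gamma_t)\eta_t}\,\sqrt{\bbE_{I_t,\calA}\norm{\w-\tw_t}^2}$. Substituting into the bound from the previous paragraph and taking the tower expectation over the history yields~\eqref{e:inexact-distance-to-any}.

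The only delicate point is the bookkeeping around the randomized solver: one must check that Lemma~\ref{lem:stability-at-time-t} is applied to the correctly recentered subproblem, and that the final Cauchy--Schwarz is taken over the joint $(I_t,\calA)$ randomness (with the past fixed), so that the factor depending on $I_t$ through $\tilde f_t$ and the factor depending on both $I_t$ and $\calA$ through $\tw_t$ are handled legitimately; the strong-convexity manipulations are otherwise identical to the exact case.
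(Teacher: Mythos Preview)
Your proposal is correct and follows essentially the same route as the paper: decompose $\phi(\tw_t)-\phi(\w)$ into $[\phi(\tw_t)-\phit(\bw_t)]+[\phit(\bw_t)-\phit(\w)]$, handle the first bracket by the inexact stability bound (Lemma~\ref{lem:stability-at-time-t}), the second by the three-point inequality of Lemma~\ref{lem:exact-distance-to-any} (which you re-derive from strong convexity of $\tilde f_t$), and then swap $\norm{\w-\bw_t}^2$ for $\norm{\w-\tw_t}^2$ via Cauchy--Schwarz and the strong-convexity bound $\bbE\norm{\tw_t-\bw_t}^2\le 2\eta_t/(\lambda+\gamma_t)$. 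The only cosmetic differences are that the paper invokes Lemma~\ref{lem:exact-distance-to-any} directly rather than re-deriving it, and does the swap by the triangle inequality $\norm{\bw_t-\w}\ge\bigl|\norm{\tw_t-\w}-\norm{\tw_t-\bw_t}\bigr|$ before squaring, rather than expanding the square; both are equivalent.
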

Note that when $\eta_t=0$, the above guarantee reduces to that of exact minibatch-prox.

We now show that when the minibatch subproblems are solved sufficiently accurately, we still obtain the $\calO ( 1/\sqrt{bT} )$ rate for weakly-convex loss and $\calO ( 1/(\lambda bT) )$ rate for strongly-convex loss.

\begin{thm}[Convergence of inexact minibatch-prox --- weakly convex $\ell (\w,\xi)$]
  \label{thm:inexact-weakly-convex-sufficient}
  For $L$-Lipschitz instantaneous function $\ell (\w,\xi)$, set $\gamma_t = \gamma = \sqrt{\frac{8T}{b}} \cdot \frac{L}{\norm{\w_0 - \w_*}}$ for all $t \ge 1$ in inexact minibatch-prox. Assume that for all $t\ge 1$, the error in minimizing $\tilde{f}_t (\w)$ satisfies for some $\delta>0$ that 
  \begin{align*}
    \bbE_{\calA} \left[ \tilde{f}_t (\tw_t) - \min_{\w} \tilde{f}_t (\w) \right]  
    \le \min \left( c_1 \left( \frac{T}{b} \right)^{\frac{1}{2}}, \, c_2 \left( \frac{T}{b} \right)^{\frac{3}{2}}  \right) 
    \cdot \frac{ L \norm{\tw_0 - \w_*}}{t^{2+2\delta}}. 
  \end{align*} 
  Then for $\widehat{\w}_T = \frac{1}{T} \sum_{t=1}^T \tw_t$, we have 
  $
  \bbE \left[ \phi(\widehat{\w}_T) - \phi (\w_*) \right] \le \frac{c_3  L \norm{\w_0 - \w_*}}{\sqrt{bT}},
  $ 
  where $c_3$ only depends on $c_1,c_2$ and $\delta$. 
  For example, by setting $c_1 = 10^{-4}, c_2 = 10^{-4}, \delta = 1/2$, we have
  \[
  \bbE \left[ \phi(\widehat{\w}_T) - \phi (\w_*) \right] \le \frac{ \sqrt{10}  L \norm{\w_0 - \w_*}}{\sqrt{bT}}.
  \]
\end{thm}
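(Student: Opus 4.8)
The plan is to mirror the telescoping argument behind Theorem~\ref{thm:rate-exact-minibatch-weakly-convex}, but now carrying the two extra error terms in Lemma~\ref{lem:inexact-distance-to-any}. Instantiate \eqref{e:inexact-distance-to-any} at $\w = \w_*$ with $\lambda = 0$ and $\gamma_t \equiv \gamma = \sqrt{8T/b}\cdot L/D$, where $D := \norm{\w_0-\w_*} = \norm{\tw_0-\w_*}$, take full expectations, and write $d_t^2$ for $\bbE\norm{\tw_t-\w_*}^2$ (using Jensen to replace $\bbE\sqrt{\bbE_{I_t,\calA}\norm{\tw_t-\w_*}^2}$ by $d_t$). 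Summing the per-step inequality over $t=1,\dots,T$ and telescoping the $\frac{\gamma}{2}(d_{t-1}^2-d_t^2)$ terms (bounding the last one by $\frac{\gamma}{2}D^2$) gives
\[
\sum_{t=1}^T \bbE[\phi(\tw_t)-\phi(\w_*)] \;\le\; \frac{\gamma}{2}D^2 + \frac{4L^2 T}{\gamma b} + \sum_{t=1}^T \sqrt{\tfrac{2L^2\eta_t}{\gamma}} + \sum_{t=1}^T \sqrt{2\gamma\eta_t}\,d_t .
\]
With this $\gamma$ the first two terms sum to exactly $\sqrt{8}\,LD\sqrt{T/b}$, which after dividing by $T$ is the exact-case rate $\sqrt{8}LD/\sqrt{bT}$; so it remains to show the two $\eta_t$-sums add only a further $O(LD\sqrt{T/b})$. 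Note that since the hypothesis bounds $\eta_t$ by a \emph{minimum}, it simultaneously provides $\eta_t \le c_1(T/b)^{1/2}LD\,t^{-(2+2\delta)}$ and $\eta_t \le c_2(T/b)^{3/2}LD\,t^{-(2+2\delta)}$, and the proof will use whichever branch is convenient in each place.

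I expect the main obstacle to be the last sum $\sum_t \sqrt{2\gamma\eta_t}\,d_t$, since $d_t$ is not controlled a priori — the familiar difficulty for inexact proximal-point methods. The first step is therefore to establish a uniform bound $d_t \le C_u D$ for all $t \le T$. To do so, drop the nonnegative term $\bbE[\phi(\tw_t)-\phi(\w_*)]$ from \eqref{e:inexact-distance-to-any} and rearrange into $\frac{\gamma}{2}d_t^2 \le \frac{\gamma}{2}d_{t-1}^2 + A_t + B_t d_t$ with $A_t := \frac{4L^2}{\gamma b} + \sqrt{2L^2\eta_t/\gamma} \ge 0$ and $B_t := \sqrt{2\gamma\eta_t}\ge 0$; summing yields $d_t^2 \le c_t + \sum_{k\le t}\lambda_k d_k$ with $c_t := D^2 + \frac{2}{\gamma}\sum_{k\le t}A_k$ (nondecreasing) and $\lambda_k := \frac{2}{\gamma}B_k$. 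Then invoke the standard elementary lemma for such recursions: if $v_t^2 \le c_t + \sum_{k\le t}\lambda_k v_k$ with $c_t$ nondecreasing and $\lambda_k\ge0$, then $v_t \le \tfrac12\sum_{k\le t}\lambda_k + \sqrt{c_t + (\tfrac12\sum_{k\le t}\lambda_k)^2}$. The point of the calibration is that $\frac{2}{\gamma}\sum_k \frac{4L^2}{\gamma b} \le \frac{8L^2 T}{\gamma^2 b} = D^2$ exactly, while $\sum_k k^{-(1+\delta)}$ converges to $\zeta(1+\delta)<\infty$; feeding the $c_2$-branch of the hypothesis into the $\sqrt{2L^2\eta_k/\gamma}$ terms keeps $c_t \le C_c D^2$, and feeding the $c_1$-branch into the $B_k$ keeps $\sum_k \lambda_k \le C_\Lambda D$, both uniformly in $t\le T$, so $d_t \le C_u D$ with $C_u$ depending only on $c_1,c_2,\delta$ (through $\zeta(1+\delta)$). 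This is the step requiring the most care: one must verify that these constants genuinely do not depend on $T$ or $b$, and it is exactly here that the two-sided $\min$ is used — the $(T/b)^{1/2}$ branch tames the $B_k$ (hence $d_t$), while the slack in the $(T/b)^{3/2}$ branch is absorbed by the $1/\gamma$ factor.

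Given the a priori bound, the finish is routine. Plugging $d_t \le C_u D$ into the last sum and using the $c_1$-branch, $\sqrt{2\gamma\eta_t} \le \sqrt{2\sqrt{8}\,c_1}\,(T/b)^{1/2}L\,t^{-(1+\delta)}$, so $\sum_t \sqrt{2\gamma\eta_t}\,d_t \le C_u\sqrt{2\sqrt{8}\,c_1}\,(T/b)^{1/2}LD\,\zeta(1+\delta)$; likewise, using the $c_2$-branch, $\sum_t \sqrt{2L^2\eta_t/\gamma} \le \sqrt{2c_2/\sqrt{8}}\,(T/b)^{1/2}LD\,\zeta(1+\delta)$. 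Adding these to $\sqrt{8}\,LD\sqrt{T/b}$, dividing by $T$, and applying Jensen's inequality $\phi(\widehat{\w}_T) \le \frac1T\sum_t \phi(\tw_t)$ gives $\bbE[\phi(\widehat{\w}_T)-\phi(\w_*)] \le c_3 LD/\sqrt{bT}$ with $c_3 = c_3(c_1,c_2,\delta)$. For the explicit statement one tracks $\zeta(3/2)\approx 2.612$ and checks that with $c_1=c_2=10^{-4}$ and $\delta=1/2$ the total error contribution is at most $(\sqrt{10}-\sqrt{8})LD/\sqrt{bT}$, so $c_3 = \sqrt{10}$ suffices.
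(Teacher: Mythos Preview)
Your proposal is correct and follows essentially the same route as the paper: sum the per-step bound from Lemma~\ref{lem:inexact-distance-to-any}, obtain a uniform control on $d_t = \sqrt{\bbE\norm{\tw_t-\w_*}^2}$ via the same recursion lemma (the paper states it as Lemma~\ref{lem:resolve-recursion}, due to Schmidt et al.), use the $c_1$-branch to bound $\sum_t\sqrt{\eta_t/\gamma}$ and the $c_2$-branch to bound $\sum_t\sqrt{L^2\eta_t/\gamma^3}$, and then feed the $d_t$-bound back into the function-value inequality. The only cosmetic difference is that the paper first establishes the $d_t$-bound and then replaces $d_t$ by $\max_{t\le T} d_t$ in the function-value sum, whereas you bound each $d_t$ individually; this changes nothing.
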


\begin{thm}[Convergence of inexact minibatch-prox --- strongly convex $\ell (\w,\xi)$]
  \label{thm:inexact-strongly-convex-sufficient}
  For $L$-Lipschitz and $\lambda$-strongly convex instantaneous function $\ell (\w,\xi)$, set $\gamma_t = \frac{\lambda (t-1)}{2}$ for $t=1,\dots$ in inexact minibatch-prox. Assume that for all $t\ge 1$, the error in minimizing $\tilde{f}_t (\w)$ satisfies for some $\delta>0$ that 
  \begin{align*}
    \bbE_{\calA} \left[ \tilde{f}_t (\tw_t) - \min_{\w} \tilde{f}_t (\w) \right] 
    \le \min \left( c_1 \left( \frac{T}{b} \right),\, c_2 \left( \frac{T}{b} \right)^2 \right) \cdot  \frac{L^2 }{t^{3+2\delta} \lambda }. 
  \end{align*}
  Then for $\widehat{\w}_T = \frac{2}{T (T+1)} \sum_{t=1}^T t \tw_t$, we have 
  $
  \bbE \left[ \phi(\widehat{\w}_T) - \phi (\w_*) \right] \le \frac{c_3 L^2}{\lambda b T},
  $ 
  where $c_3$ only depends on $c_1,c_2$ and $\delta$. 
\end{thm}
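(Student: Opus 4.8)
The plan is to run the weighted-telescoping argument that (one expects) underlies Theorem~\ref{thm:rate-exact-minibatch-strongly-convex}, now carrying the two error terms produced by Lemma~\ref{lem:inexact-distance-to-any} and absorbing the troublesome one back into the left-hand side using strong convexity. First I would invoke Lemma~\ref{lem:inexact-distance-to-any} with $\w=\w_*$ and the prescribed $\gamma_t=\lambda(t-1)/2$, so that $\lambda+\gamma_t=\lambda(t+1)/2$ and $\gamma_1=0$. Writing $g_t:=\bbE\left[\phi(\tw_t)-\phi(\w_*)\right]\ge 0$ and $D_t:=\bbE\norm{\tw_t-\w_*}^2$ (total expectations, the per-iteration bound being valid after the tower rule), this reads
\begin{align*}
  g_t \le \frac{\lambda(t-1)}{4}D_{t-1} - \frac{\lambda(t+1)}{4}D_t + \frac{8L^2}{\lambda(t+1)b} + \sqrt{\frac{4L^2\eta_t}{\lambda(t+1)}} + \sqrt{\lambda(t+1)\eta_t}\cdot\sqrt{D_t}.
\end{align*}
Multiplying by $t$ and summing over $t=1,\dots,T$, the quadratic terms telescope to $-\frac{\lambda T(T+1)}{4}D_T\le 0$ (the $t=1$ boundary term is killed by $\gamma_1=0$) and may be dropped, while $\sum_{t=1}^T \frac{8L^2 t}{\lambda(t+1)b}\le \frac{8L^2 T}{\lambda b}$. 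This leaves $\sum_t t\,g_t$ bounded by $\frac{8L^2T}{\lambda b}$ plus two residual sums, one in $\eta_t$ and one also involving the still-uncontrolled $D_t$.

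The crux is the cross term $t\sqrt{\lambda(t+1)\eta_t}\cdot\sqrt{D_t}$. The key observation is that $\phi$ is itself $\lambda$-strongly convex (an average of the $\lambda$-strongly convex losses $\ell(\cdot,\xi)$), and since $\w_*$ minimizes $\phi$ over $\Omega$ this gives $\phi(\tw_t)-\phi(\w_*)\ge \frac{\lambda}{2}\norm{\tw_t-\w_*}^2$ pointwise, hence $D_t\le \frac{2}{\lambda}g_t$. Therefore
\begin{align*}
  \sqrt{\lambda(t+1)\eta_t}\cdot\sqrt{D_t} \le \sqrt{\lambda(t+1)\eta_t}\cdot\sqrt{\frac{2g_t}{\lambda}} = \sqrt{2(t+1)\eta_t g_t} \le \frac{g_t}{2} + (t+1)\eta_t,
\end{align*}
and after multiplying by $t$ and summing, the piece $\frac{1}{2}\sum_t t\,g_t$ is moved to the left-hand side, yielding
\begin{align*}
  \frac{1}{2}\sum_{t=1}^T t\,g_t \le \frac{8L^2T}{\lambda b} + \sum_{t=1}^T t\sqrt{\frac{4L^2\eta_t}{\lambda(t+1)}} + \sum_{t=1}^T t(t+1)\eta_t.
\end{align*}
I expect this absorption step to be the main obstacle: cruder treatments of $D_t$ — the uniform bound $D_t\le 4L^2/\lambda^2$ (from Lipschitzness and strong convexity), or a recursive bound obtained by dropping $g_t$ in the per-iteration inequality — carry an extra power of $T$ and fall short of the optimal $1/(\lambda bT)$ rate.

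It remains to control the two residual $\eta$-sums using the hypothesis $\eta_t \le \min\!\left(c_1\frac{T}{b},\,c_2\left(\frac{T}{b}\right)^2\right)\cdot\frac{L^2}{t^{3+2\delta}\lambda}$. For the square-root sum I would use $\frac{t}{\sqrt{t+1}}\le\sqrt t$ together with the $c_2$-branch $\sqrt{\eta_t}\le \sqrt{c_2}\,\frac{T}{b}\,\frac{L}{t^{3/2+\delta}\sqrt\lambda}$, so that each summand is $O\!\left(\frac{L^2}{\lambda}\cdot\frac{T}{b}\cdot t^{-1-\delta}\right)$; for the other sum I would use the $c_1$-branch, giving $t(t+1)\eta_t = O\!\left(\frac{L^2}{\lambda}\cdot\frac{T}{b}\cdot t^{-1-2\delta}\right)$. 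Since $\sum_{t\ge1}t^{-1-\delta}$ and $\sum_{t\ge1}t^{-1-2\delta}$ converge for every $\delta>0$, both sums are at most a constant (depending only on $c_1,c_2,\delta$) times $\frac{L^2T}{\lambda b}$; this is exactly why the hypothesis asks for the \emph{minimum} of a $T/b$-rate and a $(T/b)^2$-rate — the quadratic branch is what the square-root sum needs (it only sees $\sqrt{(T/b)^2}=T/b$), while the linear branch suffices for the other. Combining, $\sum_{t=1}^T t\,g_t \le c_3'\frac{L^2T}{\lambda b}$, and finally convexity of $\phi$ together with Jensen's inequality gives $\phi(\widehat{\w}_T)-\phi(\w_*)\le \frac{2}{T(T+1)}\sum_{t=1}^T t\left(\phi(\tw_t)-\phi(\w_*)\right)$, so that $\bbE\left[\phi(\widehat{\w}_T)-\phi(\w_*)\right]\le \frac{2}{T(T+1)}\sum_t t\,g_t \le \frac{2c_3'}{T+1}\cdot\frac{L^2}{\lambda b}\le \frac{c_3 L^2}{\lambda bT}$ with $c_3=2c_3'$, as claimed.
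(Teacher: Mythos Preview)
Your proof is correct, but it takes a genuinely different route from the paper's. The paper does \emph{not} use the strong-convexity bound $D_t\le \frac{2}{\lambda}g_t$ to absorb the cross term. Instead, after multiplying by $t$ and summing, the paper keeps both $\sum_t t\,g_t$ and $\frac{\lambda T(T+1)}{4}D_T$ on the left side and treats them separately: first it drops the function-value sum to obtain a self-referential recursion in $u_T:=\sqrt{\bbE[T(T+1)\,\norm{\tw_T-\w_*}^2]}$, which it resolves via the sequence lemma of Schmidt et al.\ (Lemma~\ref{lem:resolve-recursion}); then it drops $D_T$ and plugs the resulting bound on $\max_t u_t$ back into the cross term. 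This mirrors exactly the proof of the weakly-convex Theorem~\ref{thm:inexact-weakly-convex-sufficient}, where no analogue of $D_t\le \frac{2}{\lambda}g_t$ is available. Your argument is more elementary for the strongly-convex case---it avoids Lemma~\ref{lem:resolve-recursion} entirely and replaces the two-pass analysis by a single AM--GM absorption---at the cost of not transferring to the weakly-convex setting. Both analyses use the $c_1$ and $c_2$ branches of the $\eta_t$ hypothesis for the same pair of residual sums, so the final constants have the same qualitative dependence on $c_1,c_2,\delta$.
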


\begin{rmk}
  The final inequalities in
  Theorem~\ref{thm:rate-exact-minibatch-weakly-convex}
  and~\ref{thm:inexact-weakly-convex-sufficient} actually apply more
  generally to all predictors in the domain.  That is, our proofs still hold
  with $\w^*$ replaced by any $\w \in \Omega$:
  \begin{align*}
    \bbE \left[ \phi(\widehat{\w}_T) - \phi (\w) \right] \le \calO \left( \frac{ L \norm{\w_0 - \w}}{\sqrt{bT}} \right), \qquad \w \in \Omega.
  \end{align*}
  This allows us to compete with any predictor in the domain (other than the minimizer). 
  For example, in order to compete on $\phi(\w)$ with the set of predictors with small norm $\left\{ \w: \norm{\w} \le B \right\}$, we can set the domain $\Omega=\bbR^d$ and initialize with $\w_0=\0$. In view of the above inequality, we still obtain the optimal rate $\calO \left( \frac{LB}{\sqrt{b T}} \right)$ from minibatch-prox by solving simpler, unconstrained subproblems (though we might have $\norm{\hw_T} > B$).
\end{rmk}

\section{Communication-efficient distributed minibatch-prox with SVRG}
\label{sec:mbdsvrg}

\begin{algorithm}[!t]
  \caption{Minibatch-prox with DSVRG for distributed stochastic convex optimization.}
  \label{alg:mbdsvrg}
  \renewcommand{\algorithmicrequire}{\textbf{Input:}}
  \renewcommand{\algorithmicensure}{\textbf{Output:}}
  \begin{algorithmic}
    \STATE Initialize $\w_0=\0$.
    \FOR{$t=1,2,\dots,T$}
    \STATE \% Outer loop performs minibatch-prox.
    \STATE Each machine $i$ draws a minibatch $I_t^{(i)}$ of $b$ samples from the underlying data distribution, and split $I_t^{(i)}$ to $p_i$ batches of size $b/p_i$: $B^{(i)}_{1},B^{(i)}_{2},...,B^{(i)}_{p_i}$
    \STATE Initialize $\z_0 \leftarrow \w_{t-1}, \quad \x_0 \leftarrow \w_{t-1}, \quad j\leftarrow 1, \quad s\leftarrow 1$
    \FOR{$k=1,2,\dots,K$}
    \STATE 1. All machines perform one round of communication to compute the average gradient: 
    \begin{align*} \nabla \phit (\z_{k-1}) \leftarrow \frac{1}{m} \sum_{i=1}^m \nabla \phii (\z_{k-1})
    \end{align*}
    \vspace{-2ex}
    \STATE 2. Machine $j$ performs stochastic updates by going through $B^{(j)}_{s}$ once without replacement:
    \[ \x_r \leftarrow \x_{r-1} - \eta \left( \nabla \ell (\x_{r-1}, \xi_l) - \nabla \ell (\z_{k-1}, \xi_l) + \nabla \phit (\z_{k-1}) +  \gamma ( \x_{r-1} - \w_{t-1} ) \right)
    \]
    for $\xi_l \in B^{(j)}_{s}$.
    \STATE 3. Machine $j$ update $\z_{k}$:
    \[
    \z_{k} \leftarrow \frac{1}{|B^{(j)}_{s}|} \sum_{r=0}^{|B^{(j)}_{s}|} \x_{r},
    \]
    and broadcast $\z_{k}$ to other machines.
    \STATE 4. Update indices: $s \leftarrow s+1$,
    \IF{$s > p_j$}
    \STATE $s \leftarrow 1,\quad j \leftarrow j + 1.$
    \ENDIF
    \ENDFOR
    \STATE Update $\w_t \leftarrow \z_K$. 
    \ENDFOR
    \ENSURE $\w_T$ is the approximate solution.
  \end{algorithmic}
\end{algorithm}

We now apply the theoretical results of minibatch-prox to the distributed stochastic learning setting, and propose a novel algorithm that is both communication and computation efficient, and being able to explore trade-offs between memory and communication efficiency.

Suppose we have $m$ machines in a distributed environment. For each outer loop of our algorithm, each machine $i$ draws a minibatch $I_t^{(i)}$ of $b$ samples independently from other machines, and denote $I_t = \cup_{i=1}^m I_t^{(i)}$ which contains $bm$ samples. To apply the minibatch-prox algorithm from the previous section, we need to find an approximate solution to the following problem:
\begin{align}  \label{e:bm-minibatch-problem} 
  \min_{\w}\; \tilde f_t(\w) :=
  \phi_{I_t} (\w) + \frac{\gamma}{2} \norm{ \w - \w_{t-1}}^2.
\end{align}

Since the objective~\eqref{e:bm-minibatch-problem} involves functions from different machines, we use distributed optimization algorithms for solving it. In~\citet{Li_14e}, the authors proposed a simple algorithm EMSO to approximately solve~\eqref{e:bm-minibatch-problem}, where each machine first solve its own local objective, \ie, 
\begin{align}
  \label{e:distributed-emso-local-objective}
  \w_t^{(i)} = \argmin_{\w}\; \phi_{I_t^{(i)}} + \frac{\gamma}{2} \norm{ \w - \w_{t-1}}^2,
\end{align}
and then all machines average their local solutions via one round of communication: $\w_t = \frac{1}{m} \sum_{i=1}^m \w_t^{(i)}.$

We note that this can be considered as the ``one-shot-averaging'' approach~\citep{zhang2012communication} for solving~\eqref{e:bm-minibatch-problem}. Although this approach was shown to work well empirically, no convergence guarantee for the original stochastic objective~\eqref{e:obj} was provided by~\citet{Li_14e}. Here we instead use the distributed SVRG (DSVRG) algorithm~\citep{lee2015distributed,shamir2016without} to approximately solve~\eqref{e:bm-minibatch-problem}, as DSVRG enjoys excellent communication and computation cost when the problem is well conditioned (cf. Table~\ref{table:summary}).\footnote{It is also possible to equip minibatch-prox with other communication-efficient distributed optimization algorithms, for example in Appendix~\ref{sec:stochastic-dane}, we present a minibatch-prox DANE (MP-DANE) algorithm which uses the accelerated DANE method for solving~\eqref{e:bm-minibatch-problem}.}

We detail our algorithm, named MP-DSVRG (minibatch-prox with DSVRG), in Algorithm~\ref{alg:mbdsvrg}. The algorithm consists of two nested loops, where $t$, $k$ are iteration counters for minibatch-prox (the outer \textbf{for}-loop), and DSVRG (the inner \textbf{for}-loop) respectively. In each outer loop, each machine draws a minibatch $I_t^{(i)}$ to form the objective~\eqref{e:bm-minibatch-problem}, which will be solved approximately by the inner loops. Moreover, each machine splits its local dataset into $p_i$ batches: $I^{(i)}=\cup_{j=1}^{p_i} B^{(i)}_j$. In each inner loop, all machines communicate to calculate the global gradient (averaged local gradients) of~\eqref{e:bm-minibatch-problem}, and then one of the machines $j$ picks a local batch $B^{(j)}_{s}$ to perform the stochastic updates, where the local batch contains enough samples such that one pass of stochastic updates on $B^{(j)}_{s}$ decrease the objective quickly. We perform two rounds of communication in each inner loop, one for computing the global gradient, and one for broadcasting the new predictor obtained by a machine $j$.
As we will show in the next section, by carefully choosing the parameters, 
we will obtain a convergent algorithm for distributed stochastic convex optimization with better efficiency guarantees than previous methods.

We now present detailed analysis for the computation/communication complexity of Algorithm~\ref{alg:mbdsvrg} for stochastic quadratic problems, and compare it with related methods in the literature. 
Throughout this section, we have $\ell(\w,\xi) = \frac{1}{2} (\w^\top \x - y)^2$ where $\xi = (\x, y)$. We assume that $\ell(\w,\xi)$ is $\beta$-smooth and $L$-Lipschitz in $\w$,\footnote{We can equivalently assume $\norm{\x}^2 \le \beta$ and $y$ is bounded.} 
and we would like to learn a predictor that is competitive to all predictors with norm at most $B$. Note that each $\ell(\w,\xi)$ is only weakly convex.

\subsection{Efficiency of MP-DSVRG}

For the distributed stochastic convex optimization problems, we are
concerned with efficiency in terms of sample, communication,
computation and memory. Recall that for convex $L$-Lipshitz,
$B$-bounded problems, 
to learn a predictor $\hat \w$ with
$\varepsilon$-generalization error, \ie, $\mathbb{E} \left[ \phi(\hat
  \w) - \phi(\w_*) \right] \le \varepsilon$, we require the sample
size to be at least $n(\varepsilon) = \calO (L^2 B^2 /
\varepsilon^2)$. This sample complexity matches the worst case lower
bound, and can be achieved by vanilla SGD.

The theorem below shows that with careful choices of parameters in the outer and inner loops, MP-DSVRG achieves both communication and computation efficiency with the optimal sample complexity. 

\begin{thm}[Efficiency of MP-DSVRG]
  \label{thm:main-mbdsvrg}
  Set the parameters in Algorithm~\ref{alg:mbdsvrg} as follows:
  \begin{align*}
    \text{(outer loop)} & \qquad  T = \frac{n(\varepsilon)}{bm}, \quad \gamma = \frac{\sqrt{8 n(\varepsilon)}  L}{b m B}, \quad p_i = {\calO} \left( \frac{\sqrt{n(\varepsilon)}  L}{\beta m B} \right) \\
    \text{(inner loop)} & \qquad  K =  \calO \left( \log n(\varepsilon)  \right) .
  \end{align*}
  Then we have
  $
  \bbE \left[ \phi \left( \frac{1}{T} \sum_{t=1}^T \w_t \right) - \phi(\w_*) \right] \leq \frac{\sqrt{40} B L}{\sqrt{n(\varepsilon)}}  =  \calO \left( \varepsilon \right).
  $
  
  Moreover, Algorithm \ref{alg:mbdsvrg} can be implemented with 
  $
{\calO} \left( \frac{n(\varepsilon)}{bm} \log n (\varepsilon) \right)
  $
  rounds of communication, and each machine performs 
  $
{\calO} \left( \frac{n(\varepsilon)}{m} \log n (\varepsilon) \right)
  $
  vector operations in total.  
\end{thm}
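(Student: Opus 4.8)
The plan is to assemble the three ingredients already in place: the convergence rate of exact/inexact minibatch-prox (Theorem~\ref{thm:rate-exact-minibatch-weakly-convex} and especially Theorem~\ref{thm:inexact-weakly-convex-sufficient}) applied in the weakly-convex regime, the linear convergence of SVRG on the subproblems~\eqref{e:bm-minibatch-problem}, and a bookkeeping of the per-iteration communication and computation costs of Algorithm~\ref{alg:mbdsvrg}. The outer loop runs minibatch-prox with effective minibatch size $bm$ (since $I_t=\cup_i I_t^{(i)}$ has $bm$ samples), so with $T=n(\varepsilon)/(bm)$ outer iterations and $\gamma=\sqrt{8Tm b}\cdot L/(bmB)=\sqrt{8n(\varepsilon)}L/(bmB)$ matching the choice in Theorem~\ref{thm:inexact-weakly-convex-sufficient} (with $\norm{\w_0-\w_*}\le B$, $\w_0=\0$), the outer rate is $\calO(LB/\sqrt{bm\cdot T})=\calO(LB/\sqrt{n(\varepsilon)})=\calO(\varepsilon)$ — provided the subproblems are solved to the accuracy $\eta_t$ required by that theorem. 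So the first step is: invoke Theorem~\ref{thm:inexact-weakly-convex-sufficient} with these parameters to reduce everything to showing that $K=\calO(\log n(\varepsilon))$ inner SVRG passes achieve $\bbE_\calA[\tilde f_t(\tw_t)-\min\tilde f_t]\le \eta_t$, where $\eta_t$ decays polynomially in $t$ (like $1/t^{2+2\delta}$, times $L\norm{\w_0-\w_*}\cdot\min(c_1(T/b)^{1/2},c_2(T/b)^{3/2})$).

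The second step is the inner-loop analysis. The subproblem~\eqref{e:bm-minibatch-problem} is $\gamma$-strongly convex (the quadratic regularizer) and $(\beta+\gamma)$-smooth, so its condition number is $\kappa=(\beta+\gamma)/\gamma=1+\beta/\gamma$. With $\gamma=\sqrt{8n(\varepsilon)}L/(bmB)$ and $n(\varepsilon)=\calO(L^2B^2/\varepsilon^2)$, one has $\beta/\gamma=\calO(\beta bmB/(\sqrt{n(\varepsilon)}L))$, which is exactly the quantity controlling $p_i$: the local batch size $b/p_i$ must exceed $\calO(\kappa)$ so that one without-replacement pass over $B^{(j)}_s$ contracts the suboptimality of $\tilde f_t$ by a constant factor (here I use the without-replacement SVRG guarantee of~\citet{shamir2016without} / \citet{lee2015distributed}). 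Chaining $K$ such passes across the $\sum_i p_i$ local batches gives geometric decay $\exp(-\Omega(K))$ times the initial suboptimality $\tilde f_t(\w_{t-1})-\min\tilde f_t$. The latter I bound by $\calO(\gamma\norm{\w_{t-1}-\bw_t}^2)$ or more crudely by a polynomial in the problem parameters and in $t$ (using that $\w_{t-1}$ stays in a bounded region — this follows from the distance-decrease Lemma~\ref{lem:inexact-distance-to-any} or a simple a priori bound on $\norm{\w_t-\w_*}$); then choosing $K=\calO(\log(n(\varepsilon)\cdot t^{O(1)}))=\calO(\log n(\varepsilon))$ (absorbing the $\log t\le\log T\le\log n(\varepsilon)$ overhead) makes the residual smaller than the required $\eta_t$.

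The third step is resource accounting, which is essentially arithmetic. Each inner iteration uses two communication rounds (one to average local gradients $\nabla\phi_{I_t}(\z_{k-1})$, one to broadcast $\z_k$), so total communication is $2TK=\calO((n(\varepsilon)/(bm))\log n(\varepsilon))$ vectors per machine. For computation: each inner iteration costs each machine $\calO(b)$ for its local gradient plus (for the active machine $j$) $\calO(b/p_j)$ for the stochastic pass, i.e.\ $\calO(b)$ per machine per inner iteration, for a total of $\calO(TKb)=\calO((n(\varepsilon)/m)\log n(\varepsilon))$ per machine; one should also check that forming the minibatch and the final averaging contribute only lower-order terms. Memory is $\calO(b)$ per machine since each outer iteration only needs its fresh minibatch $I_t^{(i)}$. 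The main obstacle is the second step — specifically, verifying that the $t$-dependent required accuracy $\eta_t$ of Theorem~\ref{thm:inexact-weakly-convex-sufficient} is met with a \emph{$t$-independent} number of SVRG passes $K$; this requires showing the initial inner-loop suboptimality grows at most polynomially in $t$ (so its logarithm is $\calO(\log T)$) and that the per-pass contraction factor is a constant uniformly in $t$ given the stated choice of $p_i$, i.e.\ that $b/p_i=\Omega(\kappa)$ holds for the $\kappa$ induced by the fixed $\gamma$. Everything else is routine substitution into results already established above.
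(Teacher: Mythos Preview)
Your proposal is correct and follows essentially the same route as the paper: invoke Theorem~\ref{thm:inexact-weakly-convex-sufficient} with effective minibatch $bm$ and the stated $\gamma$, then verify that $K=\calO(\log n(\varepsilon))$ inner DSVRG iterations (each contracting by a constant factor once $b/p_i$ dominates the condition number $(\beta+\gamma)/\gamma$) drive the subproblem error below the required $\eta_t$; the resource accounting is identical.

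One simplification worth noting: you worry that the initial inner-loop suboptimality might grow polynomially in $t$ and propose bounding it via boundedness of the iterates, but the paper obtains a clean $t$-independent bound directly from the first-order optimality condition for $\tilde f_t$. Since $\nabla\phi_{I_t}(\w_t^*)+\gamma(\w_t^*-\w_{t-1})=0$ and $\norm{\nabla\phi_{I_t}}\le L$, one has $\norm{\w_t^*-\w_{t-1}}\le L/\gamma$, whence by Lipschitzness $\tilde f_t(\w_{t-1})-\tilde f_t(\w_t^*)\le L\norm{\w_{t-1}-\w_t^*}\le L^2/\gamma$. The only $t$-dependence in the initial/final ratio then comes from the $1/t^{3}$ in $\eta_t$, which is bounded by $T^3$ and hence by a polynomial in $n(\varepsilon)$; this removes what you flagged as the ``main obstacle'' without needing any a priori control on $\norm{\w_{t-1}-\w_*}$.
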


We comment on the choice of parameters. For sample efficiency, we fix the sample size $n (\varepsilon)$ and number of machines $m$, and so we can tradeoff the local minibatch size $b$ and the total number of outer iterations $T$, maintaining $b T = \frac{n (\varepsilon)}{m}$. For any $b$, the regularization parameters in the ``large minibatch'' problem is set to $\gamma = \sqrt{\frac{8T}{bm}} \cdot \frac{L}{B} = \frac{\sqrt{8 n (\varepsilon)} L}{bmB} $ according to Theorem~\ref{thm:inexact-weakly-convex-sufficient}. 
Moreover, we choose the number of batches $p_i$ in each local machine in a way that performing one pass of stochastic updates over a single batch by without-replacement sampling is sufficient to reduce the objective by a constant factor.

\section{Discussion and conclusion}

\begin{figure}[t]
  \centering
  \psfrag{Communication}[][]{Communication}
  \psfrag{Computation}[][]{Computation}
  \psfrag{Memory}[][]{Memory}
\psfrag{Stochastic DANE}[l][l]{MP-DANE}
\psfrag{Acc minibatch SGD}[l][l]{Acc. Minibatch SGD}
\psfrag{DiSCO/AIDE}[l][l]{DiSCO/AIDE}
\psfrag{DSVRG}[l][l]{DSVRG}
\psfrag{MP-DSVRG}[l][l]{MP-DSVRG}
\psfrag{MB-DSVRG}[l][l]{MP-DSVRG}
\psfrag{bacc}[][]{$b_{\operatorname{acc-sgd}}$}
\psfrag{bstar}[][]{$b_{\operatorname{mp-dane}}$}
\psfrag{bmax}[][]{$b_{max}$}
    \includegraphics[height=0.5\textheight, width=0.7\textwidth]{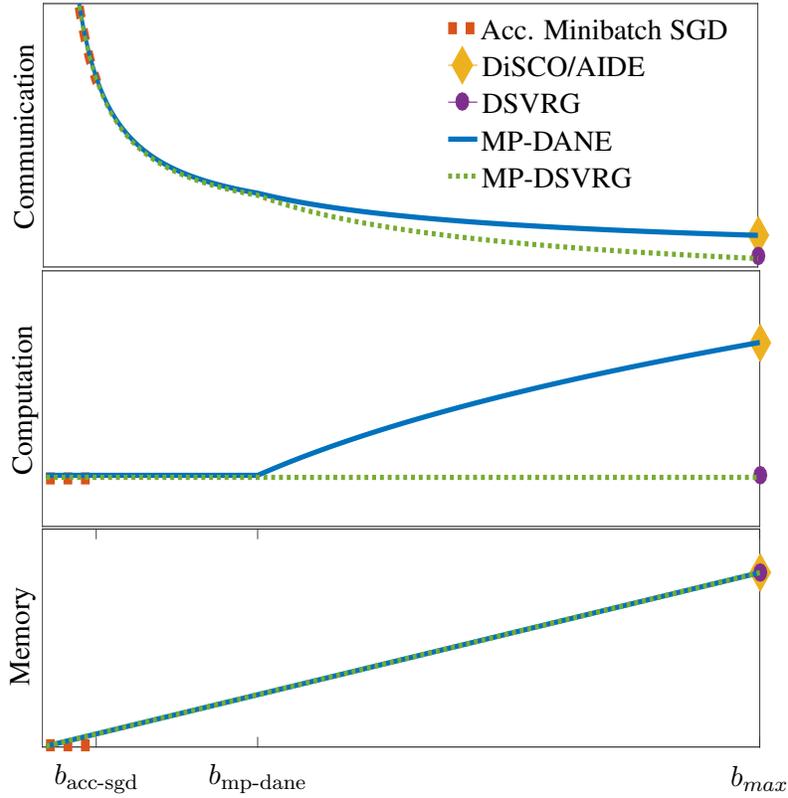}
  \caption{Illustration of theoretical guarantees for MP-DSVRG and the comparison with accelerated minibatch SGD~\citep{Cotter_11a}, DiSCO~\citep{zhang2015disco}, AIDE~\citep{reddi2016aide}, DSVRG~\citep{lee2015distributed}, and MP-DANE (proposed and analyzed in Appendix~\ref{sec:stochastic-dane}). We plot the communication, computation and memory requirements while ensuring sample efficiency. Here $b_{\operatorname{acc-sgd}} \asymp n(\varepsilon)^{3/4}/(m\sqrt{B})$, $b_{\operatorname{mp-dane}} \asymp n(\varepsilon)/(m^2 B^2)$, and $b_{\max} = n(\varepsilon)/m$.}
  \label{fig:cartoon}
\end{figure}

In this paper, we made progress toward linear speedup, communication
and memory efficient methods for distributed stochastic optimization,
although we still do not have an algorithm that obtains the ``ideal''
distributed stochastic optimization performance of linear speedup with
constant or near-constant communication and memory.  There is also no
single known algorithm that dominates all others, with different
methods being preferable in terms of different resources.  These
tradeoffs, up to $\log$-factors, are given in
Table~\ref{table:summary} and the memory, communication and runtime
requirements are also schematically depicted in
Figure~\ref{fig:cartoon}.  In the figure, the horizontal axis
corresponds to the ``minibatch'' size, which can be controlled with
accelerated minibatch SGD and MP-DSVRG, while other methods are
batch methods which consider the entire data set.

From Figure~\ref{fig:cartoon} we can see that DSVRG (equivalent to
MP-DSVRG when $b = n(\varepsilon)/m$) dominates the other methods (up
to $\log$-factors) in terms of runtime and communication---it has
smaller communication requirements than DiSCO/AIDE (and better than
DANE) with nearly the same optimal runtime of accelerated minibatch
SGD.  But like other batch methods, it requires storing and
re-accessing the entire data set.  Accelerated minibatch SGD is the
only one of these methods requiring only $\calO(1)$ memory per
machine, and it achieves true linear speedup, but due to the limit on
the maximal allowed minibatch size, has relatively high communication
cost.  MP-DSVRG allows bridging these two extremes of memory and
communication, trading off between memory usage and communication.
The trade-off is almost an extrapolation, except that in the
low-memory high-communication extreme, MP-DSVRG still requires (small)
polynomial memory, not minibatch-SGD's $O(1)$ memory, and its runtime
still involve a logarithmic factor while minibatch-SGD achieves true
linear speedup.

Instead of using DSVRG to solve each proximal subproblem in a
minibatch-prox iteration, we can also use any other distributed
optimization approach.  For example, we can consider using DiSCO or
DANE.  This is depicted as ``MP-DANE'' in Figure
\ref{fig:cartoon}.  Again, an external minibatch-prox loop allows
trading off memory for communication.  For small minibatch sizes, up
to a critical value of $b_{\operatorname{mp-dane}} = \Theta( n(\varepsilon)/(m^2 B^2))$,
MP-DANE enjoys the same guarantees as MP-DSVRG.  But for
larger minibatch sizes, such an approach starts suffering from
DANE/DiSCO's inferior runtime and communication requirements compared
to DSVRG.

We emphasize that the above discussion is based on guarantees
established only for least square problems and ignores $\log$-factors.
We are unfortunately not aware of distributed stochastic optimization guarantees 
that improve over minibatch SGD (\ie,~achieve even near-linear speedup with lower
communication requirements) for general smooth objectives, or achieve
true linear speedup (and improved communication guarantees) even for
least-square problems.

\section*{Acknowledgement}
Research was partially supported by an Intel ICRI-CI award and NSF
awards IIS 1302662 and BIGDATA 1546500.  We would like to thank Ohad Shamir for discussions about
Distributed SVRG and Tong Zhang for discussions about minibatch-prox.

\bibliography{minibatch}

\appendix
\section{Analysis of exact minibatch-prox}

\subsection{Proof of Lemma~\ref{lem:exact-distance-to-any}}

\begin{proof}
Observe that~\eqref{e:exact-condition-t} implies $\gamma_t (\w_{t-1} - \w_t)$ is a subgradient at $\w_t$ of the sum of $\phit (\w)$ and the indicator function of $\Omega$ (which has value $0$ in $\Omega$ and $\infty$ otherwise), and thus we have for any $\w \in \Omega$ that
\begin{align} \label{e:exact-subgradient}
\phit (\w) - \phit (\w_t) \ge \gamma_t \left<\w_{t-1} - \w_t,\, \w - \w_t\right> + \frac{\lambda}{2} \norm{ \w - \w_t }^2.
\end{align}
For any $\w \in \Omega$, we can bound its distance to $\w_{t-1}$ as
\begin{align*}
\norm{\w_{t-1} - \w}^2 & = \norm{\w_{t-1} - \w_t + \w_t - \w}^2  \\
& = \norm{ \w_{t-1} - \w_t }^2 + 2 \left< \w_{t-1} - \w_{t},\, \w_t - \w \right> + \norm{ \w_{t} - \w }^2  \\
& \ge \norm{ \w_{t-1} - \w_t }^2 + \frac{2}{\gamma_t} \left( \phit (\w_t) - \phit (\w) \right)  + \frac{\lambda}{\gamma_t} \norm{ \w - \w_t }^2
+ \norm{ \w_t - \w }^2  \\ 
& = \frac{\lambda + \gamma_t}{\gamma_t} \norm{ \w_t - \w }^2
+ \frac{2}{\gamma_t} \left( \phit (\w_t) - \phit (\w) \right) + \norm{ \w_{t-1} - \w_t }^2
\end{align*}
where we have used~\eqref{e:exact-subgradient} in the first inequality. 
Rearranging the terms yields the desired result.
\end{proof}



\subsection{Proof of Lemma~\ref{lem:stability-at-time-t}}

The following lemma, which is essentially shown by~\citet[Theorem~6]{Shalev_09a}, characterizes the convergence of the empirical loss to the population counterpart for the (approximate) regularized empirical risk minimizer.

\begin{lem} \label{lem:stability}
Let the instantaneous function $\ell (\w, \xi)$ be $L$-Lipschitz and $\lambda$-strongly convex in $\w$. Consider the following regularized ERM problem with sample set $Z=\{ \xi_1,\dots,\xi_n \}$: 
\begin{align*}
\hw = \argmin_{\w \in \Omega} \ \hF (\w) \qquad \text{where}\quad \hF (\w) := \frac{1}{n} \sum_{i=1}^n \ell (\w, \xi_i)  + r (\w),
\end{align*}
and the regularizer $r (\w)$ is $\gamma$-strongly convex. 
Denote by $G (\w) = \bbE_{\xi} \left[\ell (\w, \xi) \right]$ and $\hG (\w) = \frac{1}{n} \sum_{i=1}^n \ell (\w, \xi_i)$ the expected and the empirical losses respectively. 
\begin{enumerate}
\item For the regularized empirical risk minimizer $\hw$,  we have 
\begin{align*}
\abs{ \bbE_{Z} \left[ G(\hw) - \hG(\hw) \right] } \le \frac{4 L^2}{(\lambda + \gamma) n}.
\end{align*}
\item If for any given dataset $Z$, a possibly randomized algorithm $\calA$ minimizes $\hF (\w)$ up to an error of $\eta$, \ie, $\calA$ returns an approximate solution $\tw$ such that
$ \bbE_\calA \left[ \hF (\tw) -  \hF (\hw)\right] \le \eta $,
we have 
\begin{align*}
\abs{ \bbE_{Z,\calA} \left[ G(\tw) - \hG(\hw) \right] } \le \frac{4 L^2}{(\lambda + \gamma) n} + \sqrt{\frac{2 L^2 \eta}{\lambda + \gamma}}.
\end{align*}
\end{enumerate}
\end{lem}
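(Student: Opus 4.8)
The plan is to establish Lemma~\ref{lem:stability} via a standard uniform-stability argument for strongly convex regularized ERM, in the spirit of~\citet{Shalev_09a}. The starting point is the observation that, since each $\ell(\cdot,\xi_j)$ is $\lambda$-strongly convex and $r$ is $\gamma$-strongly convex, the regularized empirical objective $\hF$ is $(\lambda+\gamma)$-strongly convex; write $\mu := \lambda+\gamma$. Because $\hw$ minimizes $\hF$ over the convex set $\Omega$, the first-order optimality inequality $\langle \nabla \hF(\hw),\, \w - \hw\rangle \ge 0$ holds for all $\w \in \Omega$, which upgrades to the quadratic growth bound $\hF(\w) - \hF(\hw) \ge \frac{\mu}{2}\norm{\w - \hw}^2$ for every $\w \in \Omega$; this is the single inequality I will use over and over.

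For part (i), fix an index $i$, let $Z^{(i)}$ be the sample obtained from $Z$ by replacing $\xi_i$ with an independent fresh copy $\xi_i'$, let $\hF^{(i)}$ be the corresponding regularized objective, and let $\hw^{(i)}$ be its minimizer. First I would bound the perturbation: applying the quadratic growth bound for $\hF$ at $\hw^{(i)}$ and for $\hF^{(i)}$ at $\hw$ and adding the two inequalities, the regularizer and all common-sample terms cancel, and only the single differing term $\frac{1}{n}(\ell(\hw^{(i)},\xi_i) - \ell(\hw^{(i)},\xi_i') + \ell(\hw,\xi_i') - \ell(\hw,\xi_i))$ survives; bounding it by $\frac{2L}{n}\norm{\hw - \hw^{(i)}}$ via Lipschitzness gives $\norm{\hw - \hw^{(i)}} \le 2L/(\mu n)$. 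Next I would convert this to a generalization bound by the usual renaming trick: by exchangeability $\bbE_Z[\hG(\hw)] = \bbE[\ell(\hw,\xi_i)]$, and $\bbE_Z[G(\hw)] = \bbE[\ell(\hw^{(i)},\xi_i)]$ since swapping the names of $\xi_i$ and a fresh test point leaves the joint distribution unchanged while turning $\hw$ into $\hw^{(i)}$. Hence $\bbE_Z[G(\hw) - \hG(\hw)] = \bbE[\ell(\hw^{(i)},\xi_i) - \ell(\hw,\xi_i)]$, which is at most $L\,\bbE\norm{\hw^{(i)} - \hw} \le 2L^2/(\mu n)$ in absolute value, within the claimed $4L^2/(\mu n)$.

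For part (ii), I would first translate the optimization error into a distance bound: since $\hF$ is $\mu$-strongly convex with minimizer $\hw$, $\frac{\mu}{2}\norm{\tw - \hw}^2 \le \hF(\tw) - \hF(\hw)$, so taking $\bbE_\calA$ and using Jensen gives $\bbE_\calA\norm{\tw - \hw} \le \sqrt{2\eta/\mu}$. Then I would decompose $G(\tw) - \hG(\hw) = (G(\tw) - G(\hw)) + (G(\hw) - \hG(\hw))$: the second term is controlled by part (i), while the first is at most $L\norm{\tw - \hw}$ because $G$, an expectation of $L$-Lipschitz functions, is itself $L$-Lipschitz; taking expectations and applying the triangle inequality yields the stated $4L^2/(\mu n) + \sqrt{2L^2\eta/\mu}$.

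The step I expect to require the most care is the symmetrization/renaming argument in part (i) --- making precise why $\bbE_Z[G(\hw)] = \bbE[\ell(\hw^{(i)},\xi_i)]$, i.e. that exchanging one training point for a fresh test point and relabeling does not change the distribution --- together with bookkeeping of which terms cancel when the two quadratic-growth inequalities are summed to extract the $\norm{\hw - \hw^{(i)}}$ estimate. The remaining ingredients (quadratic growth from the constrained minimizer, Jensen, the Lipschitz bounds on $\ell$ and $G$) are routine, and the only subtlety introduced by the domain $\Omega$ is already absorbed into the optimality-based quadratic growth bound noted above.
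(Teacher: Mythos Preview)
Your proposal is correct and follows essentially the same stability argument as the paper: bound $\norm{\hw - \hw^{(i)}}$ via strong convexity and Lipschitzness, convert to a generalization bound by the renaming trick, and for the inexact case combine the distance bound $\bbE_\calA\norm{\tw-\hw}\le\sqrt{2\eta/\mu}$ with Lipschitzness of $G$ and part~(i). The only minor difference is that you sum the two quadratic-growth inequalities symmetrically to get $\norm{\hw-\hw^{(i)}}\le 2L/(\mu n)$, whereas the paper uses a one-sided argument (quadratic growth for $\hF$ plus optimality of $\hw^{(i)}$ for $\hF^{(i)}$) and obtains $4L/(\mu n)$; your route is slightly tighter but otherwise identical in structure.
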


\begin{proof}
We prove the lemma by a stability argument.

\paragraph{Exact ERM} Denote by $Z^{(i)}$ the sample set that is identical to $Z$ except that the $i$-th sample $\xi_i$ is replaced by another random sample $\xi_i^\prime$, by $\hF^{(i)} (\w)$ the empirical objective defined using $Z^{(i)}$, \ie,
\begin{align*}
\hF^{(i)} (\w) := \frac{1}{n} \left ( \sum_{j\neq i} \ell (\w, \xi_i) +  \ell (\w, \xi_i^\prime) \right) + r (\w),
\end{align*}
and by $\hw^{(i)}=\argmin_{\w\in \Omega}\ \hF^{(i)} (\w)$ the empirical risk minimizer of $\hF^{(i)} (\w)$.

By the definition of the empirical objectives, we have
\begin{align} 
\hF (\hw^{(i)}) - \hF (\hw) 
& = \frac{\ell (\hw^{(i)}, \xi_i) - \ell (\hw, \xi_i) }{n} 
+ \frac{\sum_{j\neq i} \ell (\hw^{(i)}, \xi_i) - \ell (\hw, \xi_i) }{n} 
+ r (\hw^{(i)}) - r (\hw) \nonumber \\
& = \frac{\ell (\hw^{(i)}, \xi_i) - \ell (\hw, \xi_i) }{n} 
+ \frac{ \ell (\hw, \xi_i^\prime) - \ell (\hw^{(i)}, \xi_i^\prime) }{n} 
+ \left( \hF^{(i)} (\hw^{(i)}) - \hF^{(i)} (\hw) \right) \nonumber  \\ 
& \le \frac{ \abs{\ell (\hw^{(i)}, \xi_i) - \ell (\hw, \xi_i) }}{n} 
+ \frac{ \abs{ \ell (\hw, \xi_i^\prime) - \ell (\hw^{(i)}, \xi_i^\prime) } }{n} \nonumber \\ \label{e:replacement-stability}
& \le \frac{2 L}{n} \norm{ \hw^{(i)} - \hw }
\end{align}
where we have used the fact that $\hw^{(i)}$ is the minimizer of $\hF^{(i)} (\w)$ in the first inequality, and the $L$-Lipschitz continuity of $\ell(\w,\xi)$ in the second inequality.

On the other hand, it follows from the $(\lambda + \gamma)$-strong convexity of $\hF (\w)$ that 
\begin{align} \label{e:stability-strong-convexity}
\hF (\hw^{(i)}) - \hF (\hw) \ge \frac{(\lambda + \gamma)}{2} \norm{ \hw^{(i)} - \hw }^2.
\end{align}

Combining~\eqref{e:replacement-stability} and~\eqref{e:stability-strong-convexity} yields $\norm{ \hw^{(i)} - \hw } \le \frac{4 L}{(\lambda + \gamma) n}$. 

Again, by the $L$-Lipschitz continuity of $\ell(\w,\xi)$, we have that for any sample $\xi$ that 
\begin{align} \label{e:generror-0}
\abs{ \ell(\hw,\xi) - \ell(\hw^{(i)},\xi) } \le L \norm{ \hw^{(i)} - \hw } \le \frac{4 L^2}{(\lambda + \gamma) n}.
\end{align}

Since $Z$ and $Z^{(i)}$ are both i.i.d. sample sets, we have
\begin{align*}
\bbE_Z \left[ G (\hw) \right] = \bbE_{Z^{(i)}} \left[ G (\hw^{(i)}) \right] 
= \bbE_{Z^{(i)} \cup \{\xi_i\}} \left[ \ell (\hw^{(i)}, \xi_i) \right].
\end{align*}
As this holds for all $i=1,\dots,n$, we can also write 
\begin{align} \label{e:generror-1}
\bbE_Z \left[ G (\hw) \right] 
= \frac{1}{n} \sum_{i=1}^n \bbE_{Z^{(i)} \cup \{\xi_i\}} \left[ \ell (\hw^{(i)}, \xi_i) \right].
\end{align}
On the other hand, we have
\begin{align} \label{e:generror-2}
\bbE_Z \left[ \hG (\hw) \right] 
=  \bbE_{Z} \left[ \frac{1}{n} \sum_{i=1}^n \ell (\hw, \xi_i) \right]
=  \frac{1}{n} \sum_{i=1}^n  \bbE_{Z} \left[ \ell (\hw, \xi_i) \right].
\end{align}
Combining~\eqref{e:generror-1} and~\eqref{e:generror-2} and using the stability~\eqref{e:generror-0}, we obtain
\begin{align*}
\bbE_Z \left[ G (\hw) - \hG (\hw) \right] = \frac{1}{n} \sum_{i=1}^n  \bbE_{Z \cup \{\xi_i^\prime\}} \left[ \ell (\hw^{(i)}, \xi_i) - \ell (\hw, \xi_i) \right] \in \left[- \frac{4 L^2}{(\lambda + \gamma) n},\, \frac{4 L^2}{(\lambda + \gamma) n} \right].
\end{align*}

\paragraph{Inexact ERM} For the approximate solution $\tw$, due to the $(\lambda + \gamma)$-strong convexity of $\hF (\w)$, we have
\begin{align*}
\bbE_\calA \norm{\tw - \hw}^2 \le \frac{2}{\lambda + \gamma} \bbE_\calA \left[ \hF(\tw) - \hF(\hw) \right] \le \frac{2 \eta}{\lambda + \gamma},
\end{align*}
and thus $\bbE_\calA  \norm{\tw - \hw} \le \sqrt{\frac{2 \eta}{\lambda + \gamma}}$ by the fact that $\bbE x^2 \ge \left( \bbE x \right)^2$ for any random variable $x$.

It then follows from the Lipschitz continuity of $G (\w)$ that
\begin{align*}
\bbE_\calA \abs{ G (\tw) - G (\hw) } \le L \cdot \bbE_\calA \norm{\tw - \hw} \le \sqrt{\frac{2 L^2 \eta}{\lambda + \gamma}}.
\end{align*}
Finally, we have by the triangle inequality and the stability of exact ERM that 
\begin{align*}
\abs{ \bbE_{Z,\calA} \left[ G(\tw) - \hG(\hw) \right] } 
& \le  \bbE_Z \left[ \bbE_{\calA} \abs{ G(\tw) - G(\hw) } \right]
+  \abs{ \bbE_Z \left[ G(\hw) - \hG(\hw) \right] } \\ 
& \le \sqrt{\frac{2 L^2 \eta}{\lambda + \gamma}} + \frac{4 L^2}{(\lambda + \gamma) n}.
\end{align*}
\end{proof}

Then Lemma~\ref{lem:stability-at-time-t} follows from the fact that that our stochastic objective~\eqref{e:exact-update-expt} is equipped with $L$-Lipschitz, $\lambda$-strongly convex loss $\phi(\w)$ and $\gamma_t$-strongly convex regularizer $\frac{\gamma_t}{2} \norm{\w - \w_{t-1}}^2$. 

\subsection{Proof of Lemma~\ref{lem:exact-distance-reduction-with-stability}}

\begin{proof}
We have by Lemma~\ref{lem:stability-at-time-t} that
\begin{align*}
\abs{ \bbE_{I_t} \left[ \phit (\w_t)  - \phi (\w_t) \right] } \le \frac{4 L^2}{ (\lambda + \gamma_t) b}.
\end{align*}
Take expectation of~\eqref{e:exact-distance-to-any} over the random sampling of $I_t$ and we obtain
\begin{align*}
\frac{\lambda + \gamma_t}{\gamma_t} \bbE_{I_t} \norm{ \w_t - \w }^2 
& \le \norm{\w_{t-1} - \w}^2 - \frac{2}{\gamma_t} \left( \bbE_{I_t} \left[ \phit (\w_t) \right] - \phi (\w) \right) \\
& = \norm{\w_{t-1} - \w}^2 - \frac{2}{\gamma_t} \left( \bbE_{I_t} \left[ \phit (\w_t)  - \phi (\w_t) \right] + \bbE_{I_t} \left[ \phi (\w_t) - \phi (\w) \right]\right)  \\
& \le \norm{\w_{t-1} - \w}^2 - \frac{2}{\gamma_t} \bbE_{I_t} \left[ \phi (\w_t) - \phi (\w) \right] + \frac{2}{\gamma_t} \abs{ \bbE_{I_t} \left[ \phit (\w_t)  - \phi (\w_t) \right] } \\
& \le \norm{\w_{t-1} - \w}^2 - \frac{2}{\gamma_t} \bbE_{I_t} \left[ \phi (\w_t) - \phi (\w) \right] + \frac{8 L^2}{\gamma_t (\lambda + \gamma_t) b}.
\end{align*}
\end{proof}

\subsection{Proof of Theorem~\ref{thm:rate-exact-minibatch-weakly-convex}}

\begin{proof}
When $\ell (\w,\xi)$ is weakly convex (\ie, $\lambda=0$), we further set $\gamma_t=\gamma$ for all $t\ge 1$. 
Applying Lemma~\ref{lem:exact-distance-reduction-with-stability} with $\w=\w_*$ yields 
\begin{align} \label{e:exact-minibatch-weakly-convex-distance-to-optimum}
\bbE_{I_t} \left[ \phi(\w_{t}) - \phi (\w_*) \right]  \le  \frac{\gamma}{2} \left( \norm{\w_{t-1} - \w_*}^2 - \bbE_{I_t} \norm{ \w_t - \w_* }^2 \right) + \frac{4 L^2}{\gamma b} .
\end{align}
Summing~\eqref{e:exact-minibatch-weakly-convex-distance-to-optimum} for $t=1,\dots,T$ yields 
\begin{align*}
\sum_{t=1}^T \bbE \left[ \phi(\w_{t}) - \phi (\w_*) \right] \le \frac{\gamma}{2}  \norm{\w_0 - \w_*}^2 + \frac{4 L^2 T}{\gamma b}.
\end{align*}
Minimizing the RHS over $\gamma$ gives the optimal choice
\begin{align*}
\gamma = \sqrt{\frac{8T}{b}} \cdot \frac{L}{\norm{\w_0 - \w_*}},
\end{align*}
with a corresponding regret
\begin{align*}
\frac{1}{T} \sum_{t=1}^T \bbE \left[ \phi(\w_{t}) - \phi (\w_*) \right] \le 
\frac{\sqrt{8} L}{\sqrt{bT}} \norm{\w_0 - \w_*}.
\end{align*}
As a result, by returning the uniform average $\widehat{\w}_T = \frac{1}{T} \sum_{t=1}^T \w_t$, we have due to the convexity of $\phi (\w)$ that
\begin{align*}
\bbE \left[ \phi(\widehat{\w}_T) - \phi (\w_*) \right] \le \frac{\sqrt{8} L}{\sqrt{bT}} \norm{\w_0 - \w_*}.
\end{align*}
\end{proof}

\subsection{Proof of Theorem~\ref{thm:rate-exact-minibatch-strongly-convex}}
\begin{proof}
Let $\ell (\w,\xi)$ be $\lambda$-strongly convex for some $\lambda>0$. 
Applying Lemma~\ref{lem:exact-distance-reduction-with-stability} with $\w=\w_*$ yields 
\begin{align} \label{e:exact-minibatch-strongly-convex-distance-to-optimum}
\bbE_{I_t} \left[ \phi(\w_{t}) - \phi (\w_*) \right]  \le \left( \frac{\gamma_t}{2}  \norm{\w_{t-1} - \w_*}^2 - \frac{\lambda + \gamma_t}{2} \bbE_{I_t} \norm{ \w_t - \w_* }^2 \right) + \frac{4 L^2}{ (\lambda + \gamma_t) b} .
\end{align}
Setting $\gamma_t = \frac{\lambda (t-1)}{2}$ for $t=1,\dots,$\footnote{This choice is inspired by the stepsize rule of~\citet{Lacost_12a} for stochastic gradient descent.}, the above inequality becomes
\begin{align*} 
\bbE_{I_t} \left[ \phi(\w_{t}) - \phi (\w_*) \right]  
& \le  \left(  \frac{\lambda (t-1)}{4} \norm{\w_{t-1} - \w_*}^2 - \frac{\lambda (t+1)}{4} \bbE_{I_t} \norm{ \w_t - \w_* }^2 \right) + \frac{8 L^2}{ \lambda b (t+1)} \\
& \le  \left( \frac{\lambda (t-1)}{4} \norm{\w_{t-1} - \w_*}^2 - \frac{\lambda (t+1)}{4} \bbE_{I_t} \norm{ \w_t - \w_* }^2 \right) + \frac{8 L^2}{ \lambda b t},
\end{align*}
and therefore
\begin{align*} 
t \cdot \bbE_{I_t} \left[ \phi(\w_{t}) - \phi (\w_*) \right]  
& \le  \frac{\lambda}{4} \left( (t-1) t \norm{\w_{t-1} - \w_*}^2 - t (t+1) \bbE_{I_t} \norm{ \w_t - \w_* }^2 \right) + \frac{8 L^2}{ \lambda b} .
\end{align*}
Summing this inequality for $t=1,\dots,T$ yields
\begin{align*} 
\sum_{t=1}^T t \cdot \bbE \left[ \phi(\w_{t}) - \phi (\w_*) \right]   \le  \frac{8 L^2 T}{\lambda b} .
\end{align*}
As a result, by returning the weighted average $\widehat{\w}_T = \frac{2}{T (T+1)} \sum_{t=1}^T t \w_t$, we have due to the convexity of $\phi (\w)$ that 
$\phi (\widehat{\w}_T) \le \frac{2}{T (T+1)} \sum_{t=1}^T t \cdot \phi (\w_t)$ and 
\begin{align*}
\bbE \left[ \phi(\widehat{\w}_T) - \phi (\w_*) \right] 
\le  \frac{2}{T (T+1)} \sum_{t=1}^T t \cdot \bbE \left[ \phi(\w_{t}) - \phi (\w_*) \right] 
\le \frac{16 L^2 }{\lambda b (T+1)}.
\end{align*}
\end{proof}

\section{Analysis of inexact minibatch-prox}

\subsection{Proof of Lemma~\ref{lem:inexact-distance-to-any}}

\begin{proof}
  Due to the $(\lambda + \gamma_t)$-strong convexity of $\tilde{f}_t (\w)$, we have
  \begin{align*}
    \bbE_{\calA} \norm{ \tw_t - \bw_t }^2 \le \frac{2}{\lambda + \gamma_t} \bbE_{\calA} \left[ \tilde{f}_t (\tw_t) - \tilde{f}_t (\bw_t) \right] 
    \le \frac{2 \eta_t}{\lambda + \gamma_t}.
  \end{align*}

  Applying Lemma~\ref{lem:exact-distance-to-any} to the exact minimizer $\bw_t$ yields 
  \begin{align*}
    \phit (\bw_t) - \phit (\w) \le \frac{\gamma_t}{2} \norm{\tw_{t-1} - \w}^2 - \frac{\lambda + \gamma_t}{2} \norm{ \bw_t - \w }^2 .
  \end{align*}
  Therefore, for the $t$-th iteration, we have 
  \begin{align*}
    & \bbE_{I_t,\calA} \left[ \phi (\tw_t) - \phi (\w) \right] \\
    = \  & \bbE_{I_t,\calA} \left[ \phi (\tw_t) - \phit (\bw_t) \right] + \bbE_{I_t} \left[ \phit (\bw_t) - \phit (\w) \right] \\
    \le \  & \frac{4 L^2}{(\lambda + \gamma_t) b} + \sqrt{\frac{2 L^2 \eta_t}{\lambda + \gamma_t}} + \frac{\gamma_t}{2} \norm{\tw_{t-1} - \w}^2 - \frac{\lambda + \gamma_t}{2} \bbE_{I_t} \norm{ \bw_t - \w }^2  \\
    \le \  & \frac{4 L^2}{(\lambda + \gamma_t) b} + \sqrt{\frac{2 L^2 \eta_t}{\lambda + \gamma_t}} + \frac{\gamma_t}{2} \norm{\tw_{t-1} - \w}^2 - \frac{\lambda + \gamma_t}{2}  \bbE_{I_t,\calA} \left( \norm{ \tw_t - \w } - \norm{\tw_t - \bw_t} \right)^2  \\
    \le \  & \frac{4 L^2}{(\lambda + \gamma_t) b} + \sqrt{\frac{2 L^2 \eta_t}{\lambda + \gamma_t}} + \frac{\gamma_t}{2} \norm{\tw_{t-1} - \w}^2 - \frac{\lambda + \gamma_t}{2} \bbE_{I_t,\calA} \norm{ \tw_t - \w }^2 \\
    & \qquad\qquad\qquad\qquad\qquad\qquad\qquad\qquad + (\lambda + \gamma_t) \cdot \bbE_{I_t,\calA} \left[ \norm{\tw_t - \bw_t} \cdot \norm{ \tw_t - \w } \right] \\
    \le \  & \frac{4 L^2}{ (\lambda + \gamma_t) b} + \sqrt{\frac{2 L^2 \eta_t}{\lambda + \gamma_t}} + \frac{\gamma_t}{2} \norm{\tw_{t-1} - \w}^2 - \frac{\lambda + \gamma_t}{2} \bbE_{I_t,\calA} \norm{ \tw_t - \w }^2 \\
    & \qquad\qquad\qquad\qquad\qquad\qquad\qquad\qquad + (\lambda + \gamma_t)  \sqrt{ \bbE_{I_t,\calA} \norm{\tw_t - \bw_t}^2 } \cdot \sqrt{ \bbE_{I_t,\calA} \norm{ \tw_t - \w }^2 } \\
    \le \  & \frac{4 L^2}{(\lambda + \gamma_t) b} + \sqrt{\frac{2 L^2 \eta_t}{\lambda + \gamma_t}} + \frac{\gamma_t}{2} \norm{\tw_{t-1} - \w}^2 - \frac{\lambda + \gamma_t}{2} \bbE_{I_t,\calA} \norm{ \tw_t - \w }^2 \\
    & \qquad\qquad\qquad\qquad\qquad\qquad\qquad\qquad +  \sqrt{2 (\lambda + \gamma_t) \eta_t} \cdot \sqrt{ \bbE_{I_t,\calA} \norm{ \tw_t - \w }^2 }
  \end{align*}
  where we have applied Lemma~\ref{lem:stability} to the approximate minimizer $\tw_t$ in the first inequality, used the triangle inequality $\norm{ \bw_t - \w } \ge \abs{ \norm{ \tw_t - \w } - \norm{\tw_t - \bw_t} }$ in the second inequality, dropped a negative term in the third inequality, and used the Cauchy-Schwarz inequality for random variables in the fourth inequality.
\end{proof}

\subsection{Proof of Theorem~\ref{thm:inexact-weakly-convex-sufficient}}

When $\ell (\w,\xi)$ is weakly convex (\ie, $\lambda=0$), set $\gamma_t = \gamma$ for all $t\ge 1$ as in exact minibatch-prox. Then summing~\eqref{e:inexact-distance-to-any} for $t=1,\dots,T$ yields 
\begin{align} 
  \sum_{t=1}^T \bbE \left[ \phi (\tw_t) - \phi (\w_*) \right] + \frac{\gamma}{2} \bbE \norm{\tw_T - \w_*}^2 
  & \le \frac{\gamma}{2} \norm{\tw_0 - \w_*}^2 +   \frac{4 L^2 T}{\gamma b}  + \sum_{t=1}^T \sqrt{\frac{2 L^2 \eta_t}{\gamma}} \nonumber \\ \label{e:inexact-recursion-weakly-convex}
  &\qquad + \sum_{t=1}^T \sqrt{2 \gamma \eta_t} \cdot \sqrt{ \bbE \norm{ \tw_t - \w_*}^2 }
\end{align}
where the expectation is taken over random sampling and the randomness of $\calA$ in the first $T$ iterations. 
To resolve the recursion, we need the following lemma by~\citet{Schmid_11a}. 
\begin{lem}\label{lem:resolve-recursion}
  Assume that the non-negative sequence $\{u_T\}$ satisfies the following recursion for all $T \ge 1$:
  \begin{align*}
    u_T^2 \le S_T + \sum_{t=1}^T \lambda_t u_t,
  \end{align*}
  with $S_T$ an increasing sequence, $S_0 \ge u_0^2$ and $\lambda_t \ge 0$ for all $t$. Then, for all $T\ge 1$, we have 
  \begin{align*}
    u_T \le \frac{1}{2} \sum_{t=1}^T \lambda_t + \left( S_T + \left(\frac{1}{2} \sum_{t=1}^T \lambda_t \right)^2 \right)^{\frac{1}{2}} \le \sqrt{S_T} + \sum_{t=1}^T \lambda_t.
  \end{align*}
\end{lem}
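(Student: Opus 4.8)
\emph{Proof proposal.}
The plan is to avoid a direct induction on $u_T$ --- which is awkward because the right-hand side of the recursion couples $u_T$ to all earlier iterates $u_1,\dots,u_T$ --- and instead to control the running maximum $v_T := \max_{0 \le t \le T} u_t$, noting that $u_T \le v_T$. Write $\Lambda_T := \tfrac12 \sum_{t=1}^T \lambda_t$ for brevity. First I would fix $T \ge 1$ and let $t^\star \in \{0,1,\dots,T\}$ be an index attaining $v_T = u_{t^\star}$, and split into two cases. If $t^\star = 0$, then by hypothesis $v_T^2 = u_0^2 \le S_0 \le S_T$ using that $\{S_t\}$ is increasing. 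If $t^\star \ge 1$, apply the assumed recursion at time $t^\star$ to get
\[
v_T^2 = u_{t^\star}^2 \le S_{t^\star} + \sum_{t=1}^{t^\star} \lambda_t u_t .
\]
Then bound $S_{t^\star} \le S_T$ by monotonicity, bound each $u_t \le v_T$, and extend the sum to $t=1,\dots,T$ (legitimate since $\lambda_t \ge 0$), obtaining $v_T^2 \le S_T + \bigl(\sum_{t=1}^T \lambda_t\bigr) v_T = S_T + 2\Lambda_T v_T$. In both cases we have reduced the multi-term recursion to the single scalar quadratic inequality $v_T^2 - 2\Lambda_T v_T - S_T \le 0$.

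The remaining steps are routine. Solving the quadratic: since $v_T \ge 0$ and $S_T, \Lambda_T \ge 0$, the inequality forces $v_T \le \Lambda_T + \sqrt{\Lambda_T^2 + S_T}$, and since $u_T \le v_T$ this is exactly the first claimed bound (with $\Lambda_T = \tfrac12\sum_{t=1}^T\lambda_t$). The second bound follows from the elementary subadditivity $\sqrt{a+b} \le \sqrt a + \sqrt b$ applied to $\sqrt{S_T + \Lambda_T^2} \le \sqrt{S_T} + \Lambda_T$, which gives $u_T \le 2\Lambda_T + \sqrt{S_T} = \sum_{t=1}^T \lambda_t + \sqrt{S_T}$.

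The only genuine subtlety --- and the step I would be most careful about --- is the case analysis on whether the running maximum is attained at the initial index $0$; this is precisely what the hypothesis $S_0 \ge u_0^2$ is there to cover, and it is the reason passing to $v_T$ rather than attempting a direct induction on $u_T$ makes the argument close cleanly. Everything else is the quadratic formula and one application of $\sqrt{a+b}\le\sqrt a+\sqrt b$.
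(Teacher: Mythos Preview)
Your proof is correct. The paper itself does not prove this lemma; it merely states it and attributes it to \citet{Schmid_11a}. Your argument --- pass to the running maximum $v_T=\max_{0\le t\le T}u_t$, use the hypothesis $S_0\ge u_0^2$ to handle the case where the maximum is attained at $t=0$, reduce to the scalar quadratic $v_T^2-2\Lambda_T v_T-S_T\le 0$, and finish with $\sqrt{a+b}\le\sqrt a+\sqrt b$ --- is exactly the standard proof from that reference, so there is nothing to compare beyond noting that you have reproduced the original argument faithfully.
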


We are now ready to prove Theorem~\ref{thm:inexact-weakly-convex-sufficient}.
\begin{proof}
  \textbf{Bounding $\sqrt{ \bbE \norm{ \tw_t - \w_* }^2 }$.} 
  Dropping the $\sum_{t=1}^T \bbE \left[ \phi (\tw_t) - \phi (\w_*) \right]$ term from~\eqref{e:inexact-recursion-weakly-convex} which is non-negative due to the optimality of $\w_*$, we obtain
  \begin{align*}
    \bbE \norm{\tw_T - \w_*}^2 \le \norm{\tw_0 - \w_*}^2 +  \frac{8 L^2 T}{\gamma^2 b} + \sum_{t=1}^T \sqrt{\frac{8 L^2 \eta_t}{\gamma^3}} +  \sum_{t=1}^T \sqrt{\frac{8 \eta_t }{\gamma}} \cdot \sqrt{ \bbE \norm{ \tw_t - \w_*}^2 }.
  \end{align*}
  Now apply Lemma~\ref{lem:resolve-recursion} (using $u_T=\sqrt{\bbE \norm{\tw_T - \w_*}^2}$, $S_T=\norm{\tw_0 - \w_*}^2 + \frac{8 L^2 T}{\gamma^2 b} + \sum_{t=1}^T \sqrt{\frac{8 L^2 \eta_t}{\gamma^3}}$, and $\lambda_t = \sqrt{\frac{8 \eta_t }{\gamma}}$) 
  and the fact that $\sqrt{x+y}\le \sqrt{x} + \sqrt{y}$ for $x,\, y\ge 0$, we have
  \begin{align*}
    \sqrt{ \bbE \norm{\tw_T - \w_*}^2 } 
    & \le \norm{\tw_0 - \w_*} + \sqrt{ \frac{8 L^2 T}{\gamma^2 b}} 
    + \sum_{t=1}^T \sqrt{\frac{8 \eta_t }{\gamma}}  + \sqrt{ \sum_{t=1}^T \sqrt{\frac{8 L^2 \eta_t}{\gamma^3}} }
  \end{align*}
  We have thus bounded the sequence of $\sqrt{ \bbE \norm{\tw_T - \w_*}^2 }$ by a non-negative increasing sequence. 

  \textbf{Bounding function values.} 
  Dropping the $\bbE \norm{\tw_T - \w_*}^2$ term from~\eqref{e:inexact-recursion-weakly-convex} which is non-negative, we obtain
  \begin{align} 
    & \sum_{t=1}^T \bbE \left[ \phi (\tw_t) - \phi (\w_*) \right] \nonumber \\
    \le\ & \frac{\gamma}{2} \norm{\tw_0 - \w_*}^2 + \frac{4 L^2 T}{\gamma b} +  \sum_{t=1}^T  \sqrt{\frac{2 L^2 \eta_t}{\gamma}} +  \sum_{t=1}^T \sqrt{2 \eta_t \gamma} \cdot \sqrt{ \bbE \norm{ \tw_t - \w_*}^2 } \nonumber \\
    \le\ & \frac{\gamma}{2} \norm{\tw_0 - \w_*}^2 + \frac{4 L^2 T}{\gamma b} +  \sum_{t=1}^T  \sqrt{\frac{2 L^2 \eta_t}{\gamma}} + \left( \sum_{t=1}^T \sqrt{2 \eta_t \gamma} \right) \cdot \max_{1\le t\le T} \sqrt{ \bbE \norm{ \tw_t - \w_*}^2 } \nonumber \\ 
    \le\ & \frac{\gamma}{2} \norm{\tw_0 - \w_*}^2 +  \frac{4 L^2 T}{\gamma b} +  \sum_{t=1}^T  \sqrt{\frac{2 L^2 \eta_t}{\gamma}} \nonumber \\ \label{e:inexact-weakly-convex-function-values}
    & \quad + \left( \sum_{t=1}^T \sqrt{2 \eta_t \gamma} \right) \cdot \left( \norm{\tw_0 - \w_*} + \sqrt{ \frac{8 L^2 T}{\gamma^2 b} } + \sum_{t=1}^T \sqrt{\frac{8 \eta_t }{\gamma}} +  \sqrt{ \sum_{t=1}^T \sqrt{\frac{8 L^2 \eta_t}{\gamma^3}} } \right) .
  \end{align}

  To achieve the same order of regret as in exact minibatch-prox, we require that $\eta_t$ decays with $t$, and in particular 
  \begin{align} \label{e:inexact-weakly-convex-eta_t}
    \eta_t \le \min \left( c_1 \left( \frac{T}{b} \right)^{\frac{1}{2}}, \, c_2 \left( \frac{T}{b} \right)^{\frac{3}{2}}  \right) 
    \cdot \frac{ L \norm{\tw_0 - \w_*}}{t^{2+2\delta}}
  \end{align}
  for some $\delta>0$. Note that $\eta_t$ has the unit of function value. Let $c:=\sum_{i=1}^{\infty} \frac{1}{i^{1+\delta}} \le \frac{1+\delta}{\delta}$ which only depends on $\delta$ (as a concrete example, we have $c = \frac{\pi^2}{6}$ when $\delta=2$). 

  Using the choice of $\gamma=\sqrt{\frac{8T}{b}} \cdot \frac{L}{\norm{\w_0 - \w_*}}$, we obtain from~\eqref{e:inexact-weakly-convex-eta_t} that 
  \begin{align*}
    \sum_{t=1}^T \sqrt{\frac{8 \eta_t }{\gamma}} 
    & = \sum_{t=1}^T \sqrt{ \sqrt{\frac{8b}{T}} \cdot \frac{\norm{\w_0 - \w_*}} {L} \cdot \eta_t } 
    \le 8^{\frac{1}{4}} c_1^{\frac{1}{2}} \norm{\tw_0 - \w_*} \sum_{t=1}^T \frac{1}{t^{1+\delta}} \\
    & \le 8^{\frac{1}{4}} c_1^{\frac{1}{2}} c \norm{\tw_0 - \w_*}, \\
    \sum_{t=1}^T \sqrt{\frac{8 L^2 \eta_t }{\gamma^3}} 
    & = \sum_{t=1}^T \sqrt{ \sqrt{\frac{b^3}{8 T^3}} \cdot \frac{\norm{\w_0 - \w_*}^3} {L} \cdot \eta_t } 
    \le  8^{-\frac{1}{4}} c_2^{\frac{1}{2}} \norm{\tw_0 - \w_*}^2 \sum_{t=1}^T \frac{1}{t^{1+\delta}} \\
    & \le  8^{-\frac{1}{4}} c_2^{\frac{1}{2}} c \norm{\tw_0 - \w_*}^2.
  \end{align*}

  Continuing from~\eqref{e:inexact-weakly-convex-function-values} and substituting in the value of $\gamma$, we have 
  \begin{align*} 
    \sum_{t=1}^T \bbE \left[ \phi (\tw_t) - \phi (\w_*) \right] & 
    \le \sqrt{\frac{8 T}{b}} \cdot L \norm{\tw_0 - \w_*}  + \frac{\gamma}{2} \sum_{t=1}^T \sqrt{\frac{8 L^2 \eta_t }{\gamma^3}} \\ 
    & \quad +  \frac{\gamma}{2} \left( \sum_{t=1}^T \sqrt{\frac{8 \eta_t }{\gamma}} \right) \cdot \left( 2 \norm{\tw_0 - \w_*} + \sum_{t=1}^T \sqrt{\frac{8 \eta_t }{\gamma}} + \sqrt{ \sum_{t=1}^T \sqrt{\frac{8 L^2 \eta_t}{\gamma^3}} } \right) \\
    & = \sqrt{\frac{8 T}{b}} \cdot L \norm{\tw_0 - \w_*} + \sqrt{\frac{2T}{b}} \cdot \frac{L}{\norm{\w_0 - \w_*}} \cdot 8^{-\frac{1}{4}} c_2^{\frac{1}{2}} c \norm{\tw_0 - \w_*}^2 \\
    & \qquad + \sqrt{\frac{2T}{b}} \cdot \frac{L}{\norm{\w_0 - \w_*}} \cdot 8^{\frac{1}{4}} c_1^{\frac{1}{2}} c \norm{\tw_0 - \w_*} \times  \\
    & \qquad \left( 2 \norm{\tw_0 - \w_*} + 8^{\frac{1}{4}} c_1^{\frac{1}{2}} c \norm{\tw_0 - \w_*} + \sqrt{ 8^{-\frac{1}{4}} c_2^{\frac{1}{2}} c \norm{\tw_0 - \w_*}^2  }  \right) \\
    & = c_3 \sqrt{\frac{T}{b}} \cdot L \norm{\tw_0 - \w_*}.
  \end{align*}
  The suboptimality of $\widehat{\w}_T$ is then due to the convexity of $\phi (\w)$:
  \begin{align*}
    \bbE \left[ \phi(\widehat{\w}_T) - \phi (\w_*) \right] \le \frac{1}{T} \sum_{t=1}^T \bbE \left[ \phi (\tw_t) - \phi (\w_*) \right]
    =  \frac{c_3 L \norm{\tw_0 - \w_*}}{\sqrt{bT}} .
  \end{align*}
\end{proof}

\subsection{Proof of Theorem~\ref{thm:inexact-strongly-convex-sufficient}}

\begin{proof}
  We have by Lemma~\ref{lem:inexact-distance-to-any} that 
  \begin{align*} 
    \bbE_{I_t,\calA} \left[ \phi (\tw_t) - \phi (\w_*) \right]  
    & \le \frac{\lambda (t-1)}{4} \norm{\tw_{t-1} - \w_*}^2 
    - \frac{\lambda (t+1)}{4} \bbE_{I_t,\calA} \norm{\tw_t - \w_*}^2 
    \\
    & \qquad\qquad + \frac{8 L^2}{\lambda b (t+1)} + \sqrt{\frac{4 L^2 \eta_t}{\lambda (t+1)}} + \sqrt{\lambda (t+1) \eta_t} \cdot \sqrt{ \bbE_{I_t,\calA} \norm{ \tw_t - \w_*}^2 }.
  \end{align*}
  Relaxing the $\frac{1}{t+1}$ to $\frac{1}{t}$ on the RHS, and multiplying both sides by $t$, we further obtain
  \begin{align*} 
    t \cdot \bbE_{I_t,\calA} \left[ \phi (\tw_t) - \phi (\w_*) \right]  
    & \le \frac{\lambda (t-1) t}{4} \norm{\tw_{t-1} - \w_*}^2 
    - \frac{\lambda t (t+1)}{4} \bbE_{I_t,\calA} \norm{\tw_t - \w_*}^2   \\
    & \qquad\qquad + \frac{8 L^2}{\lambda b} + \sqrt{\frac{4 L^2 t \eta_t}{\lambda}} + \sqrt{\lambda t \eta_t} \cdot \sqrt{ \bbE_{I_t,\calA} \left[ t (t+1) \norm{ \tw_t - \w_*}^2 \right] }.
  \end{align*}

  Summing this inequality for $t=1,\dots,T$ yields 
  \begin{gather} 
    \sum_{t=1}^T t \cdot \bbE \left[ \phi (\tw_t) - \phi (\w_*) \right] + \frac{\lambda T (T+1)}{4} \bbE \norm{\tw_T - \w_*}^2   \nonumber \\ \label{e:inexact-recursion-strongly-convex}
    \le \frac{8 L^2 T}{\lambda b} + \sum_{t=1}^T \sqrt{\frac{4 L^2 t \eta_t}{\lambda}}
    + \sum_{t=1}^T \sqrt{\lambda t \eta_t} \cdot \sqrt{ \bbE \left[ t (t+1) \norm{ \tw_t - \w_*}^2 \right] }.
  \end{gather}

  \textbf{Bounding $\sqrt{ \bbE \norm{ \tw_t - \w_* }^2 }$.} 
  Dropping the $\sum_{t=1}^T  t \cdot \bbE \left[ \phi (\tw_t) - \phi (\w_*) \right]$ term from~\eqref{e:inexact-recursion-strongly-convex} which is non-negative due to the optimality of $\w_*$, we obtain
  \begin{align*} 
    \bbE \left[ T (T+1) \norm{\tw_T - \w_*}^2 \right]
    \le & \frac{32 L^2 T}{\lambda^2 b} + \sum_{t=1}^T \sqrt{\frac{64 L^2 t \eta_t}{\lambda^3}}
    + \sum_{t=1}^T \sqrt{\frac{16 t \eta_t}{\lambda}} \cdot \sqrt{ \bbE \left[ t (t+1) \norm{ \tw_t - \w_*}^2 \right] }.
  \end{align*}
  Applying Lemma~\ref{lem:resolve-recursion} (using $u_T=\sqrt{\bbE \left[ T (T+1) \norm{\tw_T - \w_*}^2 \right]}$, $S_T=\frac{32 L^2 T}{\lambda^2 b} + \sum_{t=1}^T \sqrt{\frac{64 L^2 t \eta_t}{\lambda^3}}$, and $\lambda_t = \sqrt{\frac{16 t \eta_t}{\lambda}}$), we have 
  \begin{align*} 
    \sqrt{\bbE \left[ T (T+1) \norm{\tw_T - \w_*}^2 \right]} \le \sqrt{\frac{32 L^2 T}{\lambda^2 b}} +  \sum_{t=1}^T \sqrt{\frac{16 t \eta_t}{\lambda}} + \sqrt{\sum_{t=1}^T \sqrt{\frac{64 L^2 t \eta_t}{\lambda^3}}} .
  \end{align*}

  \textbf{Bounding function values.} 
  Dropping the $\bbE \norm{\tw_T - \w_*}^2$ term from~\eqref{e:inexact-recursion-strongly-convex} which is non-negative, we obtain
  \begin{align} 
    \sum_{t=1}^T t \cdot \bbE \left[ \phi (\tw_t) - \phi (\w_*) \right] & 
    \le \frac{8 L^2 T}{\lambda b} + \sum_{t=1}^T \sqrt{\frac{4 L^2 t \eta_t}{\lambda}} 
    \nonumber \\ \label{e:inexact-strongly-convex-function-values}
    & \hspace*{-4em} + \left( \sum_{t=1}^T \sqrt{\lambda t \eta_t} \right) \cdot \left( \sqrt{\frac{32 L^2 T}{\lambda^2 b}} +  \sum_{t=1}^T \sqrt{ \frac{16 t \eta_t}{\lambda} } + \sqrt{\sum_{t=1}^T \sqrt{\frac{64 L^2 t \eta_t}{\lambda^3}}} \right).
  \end{align}

  To achieve the same order of regret as in exact minibatch-prox, we require that $\eta_t$ decays with $t$, and in particular 
  \begin{align} \label{e:inexact-strongly-convex-eta_t}
    \eta_t \le \min \left( c_1 \left( \frac{T}{b} \right),\, c_2 \left( \frac{T}{b} \right)^2 \right) \cdot  \frac{L^2 }{t^{3+2\delta} \lambda } 
  \end{align}
  for some $\delta>0$. Note that $\eta_t$ has the unit of function value. Let $c:=\sum_{i=1}^{\infty} \frac{1}{i^{1+\delta}} \le \frac{1+\delta}{\delta}$. Then~\eqref{e:inexact-strongly-convex-eta_t} ensures that 
  \begin{align*}
    \sum_{t=1}^T \sqrt{\frac{t \eta_t }{\lambda}} \le c \sqrt{c_1} \sqrt{\frac{L^2 T}{\lambda^2 b}}, \qquad \text{and} \qquad 
    \sum_{t=1}^T \sqrt{\frac{L^2 t \eta_t }{\lambda^3}} \le c \sqrt{c_2} \cdot \frac{L^2 T}{\lambda^2 b}.
  \end{align*}
  Continuing from~\eqref{e:inexact-strongly-convex-function-values}, we have
  \begin{align*} 
    \sum_{t=1}^T t \cdot \bbE \left[ \phi (\tw_t) - \phi (\w_*) \right] 
    & \le \frac{8 L^2 T}{\lambda b}  + 2 c \sqrt{c_2} \cdot \frac{L^2 T}{\lambda b} \\
    &\quad + c \sqrt{c_1} \sqrt{\frac{L^2 T}{b}} \left( \sqrt{\frac{32 L^2 T}{\lambda^2 b}} + 4 c \sqrt{c_1} \sqrt{\frac{L^2 T}{\lambda^2 b}} + \sqrt[4]{64 c^2 c_2} \sqrt{\frac{L^2 T}{\lambda^2 b}} \right) \\
    & = \frac{c_3}{2} \cdot \frac{L^2 T}{\lambda b}.
  \end{align*}
  In view of the convexity of $\phi(\w)$, by returning the weighted average $\widehat{\w}_T = \frac{2}{T (T+1)} \sum_{t=1}^T t \tw_t$, we have 
  \begin{align*}
    \bbE \left[ \phi(\widehat{\w}_T) - \phi (\w_*) \right] 
    \le \frac{2}{T (T+1)} \sum_{t=1}^T t \cdot \bbE \left[ \phi (\tw_t) - \phi (\w_*) \right] 
    \le \frac{c_3 L^2}{\lambda b (T+1)}.
  \end{align*}
\end{proof}

\subsection{Connection to minibatch stochastic gradient descent}
\label{sec:minibatch-SGD}
To see the connection between minibatch-prox and minibatch SGD, note that if we solve the linearized minibatch problem exactly, we obtain the minibatch stochastic gradient descent algorithm: 
\begin{align*}
  \tw_t = \argmin_{\w \in \Omega}\; \phit (\tw_{t-1}) + \nabla \left< \phit (\tw_{t-1}),\, \w - \tw_{t-1} \right> + \frac{\gamma_t}{2} \norm{\w-\tw_{t-1}}^2 .
\end{align*}

Following~\citet{Cotter_11a}, we assume that $\ell(\w,\xi)$ is $\beta$-smooth: 
\begin{align*}
  \norm{ \nabla \ell(\w,\xi) - \nabla \ell(\w^\prime,\xi) } \le \beta \norm{\w - \w^\prime},
  \qquad \forall \w, \w^\prime \in \Omega.
\end{align*}
We then have the following guarantee for each iterate of minbatch SGD.

\begin{prop}\label{prop:minibatch-SGD-time-t}
  For iteration $t$ of minibatch SGD, we have
  \begin{align} \label{e:minibatch-sgd-time-t}
    \bbE_{I_t} \left[ \phi (\tw_{t}) - \phi (\w_*) \right] \le 
    \frac{2 L^2}{(\gamma_t - \beta) b} + \frac{\gamma_t - \lambda}{2} \norm{\w_* - \tw_{t-1}}^2 - \frac{\gamma_t}{2} \bbE_{I_t} \norm{\w_* - \tw_{t}}^2 .
  \end{align}
\end{prop}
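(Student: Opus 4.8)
The plan is to run the standard ``one-step'' analysis of projected stochastic gradient descent, with two twists: (i) the update is written as the \emph{exact} minimizer of a $\gamma_t$-strongly convex model, so I can compare its value at $\tw_t$ against the value at $\w_*$; and (ii) the smoothness constant $\beta$ is used to create a negative $\norm{\tw_t-\tw_{t-1}}^2$ term which, together with the $-\frac{\gamma_t}{2}\norm{\tw_t-\tw_{t-1}}^2$ coming from the prox-step optimality and a Young's-inequality term, cancels \emph{exactly}. Write $g_t:=\nabla\phit(\tw_{t-1})=\frac1b\sum_{\xi\in I_t}\nabla\ell(\tw_{t-1},\xi)$ for the minibatch gradient and $\bar g_t:=\nabla\phi(\tw_{t-1})$ for its conditional expectation. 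Since each per-sample gradient has norm at most $L$, we have $\bbE_{I_t}[g_t]=\bar g_t$ and, bounding the per-sample deviation crudely by $\norm{\nabla\ell(\tw_{t-1},\xi)-\bar g_t}\le 2L$, $\bbE_{I_t}\norm{g_t-\bar g_t}^2\le\frac{4L^2}{b}$.

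First I would combine the descent lemma for the $\beta$-smooth $\phi$ with the $\lambda$-strong convexity of $\phi$ at $\tw_{t-1}$:
\begin{align*}
\phi(\tw_t)-\phi(\w_*)\le\langle\bar g_t,\,\tw_t-\w_*\rangle+\frac{\beta}{2}\norm{\tw_t-\tw_{t-1}}^2-\frac{\lambda}{2}\norm{\tw_{t-1}-\w_*}^2.
\end{align*}
Next, because $\tw_t$ minimizes the $\gamma_t$-strongly convex function $\w\mapsto\langle g_t,\w-\tw_{t-1}\rangle+\frac{\gamma_t}{2}\norm{\w-\tw_{t-1}}^2$ over $\Omega$ and $\w_*\in\Omega$, comparing its value at $\tw_t$ and at $\w_*$ (using first-order optimality) yields
\begin{align*}
\langle g_t,\,\tw_t-\w_*\rangle\le\frac{\gamma_t}{2}\norm{\tw_{t-1}-\w_*}^2-\frac{\gamma_t}{2}\norm{\tw_t-\w_*}^2-\frac{\gamma_t}{2}\norm{\tw_t-\tw_{t-1}}^2.
\end{align*}

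Then I would handle the gradient noise by splitting $\langle\bar g_t,\tw_t-\w_*\rangle=\langle g_t,\tw_t-\w_*\rangle+\langle\bar g_t-g_t,\,\tw_t-\tw_{t-1}\rangle+\langle\bar g_t-g_t,\,\tw_{t-1}-\w_*\rangle$, bounding the middle term by Young's inequality as $\le\frac{1}{2(\gamma_t-\beta)}\norm{\bar g_t-g_t}^2+\frac{\gamma_t-\beta}{2}\norm{\tw_t-\tw_{t-1}}^2$ (legitimate since $\gamma_t>\beta$), and substituting the two displays into the first one. The coefficient of $\norm{\tw_t-\tw_{t-1}}^2$ is $\frac{\beta}{2}-\frac{\gamma_t}{2}+\frac{\gamma_t-\beta}{2}=0$, so that term vanishes, leaving
\begin{align*}
\phi(\tw_t)-\phi(\w_*)\le\frac{\gamma_t-\lambda}{2}\norm{\tw_{t-1}-\w_*}^2-\frac{\gamma_t}{2}\norm{\tw_t-\w_*}^2+\frac{1}{2(\gamma_t-\beta)}\norm{\bar g_t-g_t}^2+\langle\bar g_t-g_t,\,\tw_{t-1}-\w_*\rangle.
\end{align*}
Taking $\bbE_{I_t}$ conditional on the history (so $\tw_{t-1}$, $\bar g_t$, $\w_*$ are fixed) kills the last term by unbiasedness and bounds the variance term by $\frac{1}{2(\gamma_t-\beta)}\cdot\frac{4L^2}{b}=\frac{2L^2}{(\gamma_t-\beta)b}$, which is exactly~\eqref{e:minibatch-sgd-time-t}.

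The step needing the most care is the treatment of the noise term $\langle\bar g_t-g_t,\tw_t-\w_*\rangle$: since $\tw_t$ depends on $I_t$ this is \emph{not} mean-zero, and the point is to peel off the mean-zero piece $\langle\bar g_t-g_t,\tw_{t-1}-\w_*\rangle$ and to dump the remaining piece $\langle\bar g_t-g_t,\tw_t-\tw_{t-1}\rangle$ into the spare negative quadratic — which is precisely why the condition $\gamma_t>\beta$ must appear in the statement. Everything else is routine ``prox-step $=$ strongly convex minimization'' bookkeeping.
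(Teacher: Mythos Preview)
Your proof is correct and follows essentially the same route as the paper's: both use the $\beta$-smoothness descent lemma, the $\gamma_t$-strong-convexity optimality of the linearized prox step, Young's inequality with parameter $\gamma_t-\beta$ to absorb the noise--displacement cross term, the $\lambda$-strong convexity of $\phi$, and the $4L^2/b$ variance bound. The only difference is organizational: the paper keeps the inner product with $\tw_t-\tw_{t-1}$ throughout and lets the expectation turn $\langle\nabla\phit(\tw_{t-1}),\w_*-\tw_{t-1}\rangle$ into $\langle\nabla\phi(\tw_{t-1}),\w_*-\tw_{t-1}\rangle$, whereas you combine smoothness and strong convexity first and then explicitly split $\langle\bar g_t-g_t,\tw_t-\w_*\rangle$ into the mean-zero piece and the piece handled by Young---but the algebra and the ideas are the same.
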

\begin{proof} Our proof closely follows that of~\citet{Cotter_11a}. 

  Due to the smoothness of $\phi$, we have that
  \begin{align}
    \phi (\tw_{t}) & \le \phi (\tw_{t-1}) + \left< \nabla \phi (\tw_{t-1}),\, \tw_{t} - \tw_{t-1} \right> + \frac{\beta}{2} \norm{\tw_{t} - \tw_{t-1}}^2 \nonumber \\
    & \le \phi (\tw_{t-1}) + \left< \nabla \phi (\tw_{t-1}) - \nabla \phit (\tw_{t-1}),\, \tw_{t} - \tw_{t-1} \right> + \frac{\beta}{2} \norm{\tw_{t} - \tw_{t-1}}^2 \nonumber \\
    & \qquad\qquad\qquad\qquad\qquad\qquad\qquad\qquad 
    + \left< \nabla \phit (\tw_{t-1}),\, \tw_{t} - \tw_{t-1} \right> \nonumber \\
    & =  \phi (\tw_{t-1}) + \norm{ \nabla \phi (\tw_{t-1}) - \nabla \phit (\tw_{t-1}) } \cdot \norm{ \tw_{t} - \tw_{t-1} } + \frac{\beta}{2} \norm{\tw_{t} - \tw_{t-1}}^2 \nonumber \\ 
    & \qquad\qquad\qquad\qquad\qquad\qquad\qquad\qquad 
    + \left< \nabla \phit (\tw_{t-1}),\, \tw_{t} - \tw_{t-1} \right> \nonumber \\
    & \le   \phi (\tw_{t-1}) + \frac{1}{2 (\gamma_t - \beta)} \norm{ \nabla \phi (\tw_{t-1}) - \nabla \phit (\tw_{t-1}) }^2 + \frac{\gamma_t - \beta}{2} \norm{ \tw_{t} - \tw_{t-1} }^2 \nonumber \\
    & \qquad\qquad\qquad\qquad\qquad\qquad\qquad\qquad
    + \frac{\beta}{2} \norm{\tw_{t} - \tw_{t-1}}^2 + \left< \nabla \phit (\tw_{t-1}),\, \tw_{t} - \tw_{t-1} \right> \nonumber \\ 
    & = \phi (\tw_{t-1}) + \frac{1}{2 (\gamma_t  - \beta)} \norm{ \nabla \phi (\tw_{t-1}) - \nabla \phit (\tw_{t-1}) }^2 + \frac{\gamma_t}{2} \norm{\tw_{t} - \tw_{t-1}}^2 \nonumber \\ \label{e:minibatch-sgd-1}
    & \qquad\qquad\qquad\qquad\qquad\qquad\qquad\qquad
    + \left< \nabla \phit (\tw_{t-1}),\, \tw_{t} - \tw_{t-1} \right>
  \end{align}
  where we have used the Cauchy-Schwarz inequality in the second inequality, and the inequality $x y \le \frac{x^2}{2 \alpha} + \frac{\alpha y^2}{2}$ in the third inequality. 

  Now, since $\tw_{t}$ is the minimizer of the $\gamma_t$-strongly convex function 
\begin{align*}
\frac{\gamma_t}{2} \norm{\w - \tw_{t-1}}^2 + \left< \nabla \phit (\tw_{t-1}),\, \w - \tw_{t-1} \right>
\end{align*}
in $\Omega$, we have according to Lemma~\ref{lem:exact-distance-to-any} (replacing the local objective with its linear approximation) that 
  \begin{gather*}
    \frac{\gamma_t}{2} \norm{\w_* - \tw_{t-1}}^2 + \left< \nabla \phit (\tw_{t-1}),\, \w_* - \tw_{t-1} \right>  \\
    \ge \frac{\gamma_t}{2} \norm{\tw_{t} - \tw_{t-1}}^2 + \left< \nabla \phit (\tw_{t-1}),\, \tw_{t} - \tw_{t-1} \right> + \frac{\gamma_t}{2} \norm{\w_* - \tw_{t}}^2.
  \end{gather*}

  Substituting this into~\eqref{e:minibatch-sgd-1} gives
  \begin{align*}
    \phi (\tw_{t}) 
    & \le \phi (\tw_{t-1}) + \frac{1}{2 (\gamma_t - \beta)} \norm{ \nabla \phi (\tw_{t-1}) - \nabla \phit (\tw_{t-1}) }^2 + \frac{\gamma_t}{2} \norm{\w_* - \tw_{t-1}}^2 \\ 
    & \qquad\quad  + \left< \nabla \phit (\tw_{t-1}),\, \w_* - \tw_{t-1} \right> - \frac{\gamma_t}{2} \norm{\w_* - \tw_{t}}^2.
  \end{align*}
  Taking expectation of this inequality over the random sampling of $I_t$ further leads to
  \begin{align}
    \bbE_{I_t} \left[ \phi (\tw_{t})  \right]
    & \le \phi (\tw_{t-1}) + \frac{1}{2 (\gamma_t - \beta)} \bbE_{I_t} \norm{ \nabla \phi (\tw_{t-1}) - \nabla \phit (\tw_{t-1}) }^2 + \frac{\gamma_t}{2} \norm{\w_* - \tw_{t-1}}^2 \nonumber \\
    & \qquad\qquad\qquad\qquad\qquad\qquad
    + \left< \nabla \phi (\tw_{t-1}),\, \w_* - \tw_{t-1} \right> - \frac{\gamma_t}{2} \bbE_{I_t} \norm{\w_* - \tw_{t}}^2 \nonumber \\ \label{e:minibatch-sgd-2}
    & \le \phi (\w_*) + \frac{1}{2 (\gamma_t - \beta)} \bbE_{I_t} \norm{ \nabla \phi (\tw_{t-1}) - \nabla \phit (\tw_{t-1}) }^2 \nonumber \\
    & \qquad\qquad\qquad\qquad\qquad\qquad
    + \frac{\gamma_t-\lambda}{2} \norm{\w_* - \tw_{t-1}}^2 - \frac{\gamma_t}{2} \bbE_{I_t} \norm{\w_* - \tw_{t}}^2
  \end{align}
  where in the second inequality we have used the fact that 
  \begin{align*}
    \phi (\w_*) \ge \phi (\tw_{t-1}) + \left< \nabla \phi (\tw_{t-1}),\, \w_* - \tw_{t-1} \right> + \frac{\lambda}{2} \norm{\w_* - \tw_{t-1}}^2 
  \end{align*}
  due to the convexity of $\phi(\w)$. 
  
On the other hand, let $I_t=\{\xi_1,\dots,\xi_b\}$,  we have 
  \begin{align*}
    & \bbE_{I_t} \norm{ \nabla \phi (\w) - \nabla \phit (\w) }^2 \\
    =\ & \bbE_{I_t} \norm{ \nabla \phi (\w) - \frac{1}{b} \sum_{i=1}^b \nabla \ell (\w,\xi_i) }^2 \\
    =\ & \bbE_{I_t} \norm{  \frac{1}{b} \sum_{i=1}^b \left(\nabla \phi (\w) - \nabla \ell (\w,\xi_i) \right) }^2 \\
    =\ & \frac{1}{b^2} \sum_{i=1}^b \bbE_{\xi_i} \norm{ \nabla \phi (\w) - \nabla \ell (\w,\xi_i) }^2 + \frac{1}{b^2} \sum_{i \neq j} \bbE_{I_t} \left< \nabla \phi (\w) - \nabla \ell (\w,\xi_i) ,\, \nabla \phi (\w) - \nabla \ell (\w,\xi_j) \right> \\
    =\ & \frac{1}{b} \cdot \bbE_{\xi} \norm{ \nabla \phi (\w) - \nabla \ell (\w, \xi) }^2 \\
    \le\ & \frac{4L^2}{b}
  \end{align*}
where we used the fact that the samples are i.i.d. in the fourth equality, and that $\norm{\nabla \phi (\w)},\ \norm{\nabla \ell (\w, \xi)} \le L$ in the last inequality. 
  Continuing from~\eqref{e:minibatch-sgd-2} yields the desired result.
\end{proof}

Comparing this result to~\eqref{e:exact-minibatch-weakly-convex-distance-to-optimum} and~\eqref{e:exact-minibatch-strongly-convex-distance-to-optimum}, we observe that minibatch SGD has  a similar recursion to that exact minibatch-prox, except the appearance of $\beta$ in the denominator of the ``stability'' term. We now show that this difference leads to significant difference in convergence rate. 

Let $\ell(\w,\xi)$ be weakly convex ($\lambda=0$), and $\gamma_t=\gamma$ for all $t\ge 1$. Summing~\eqref{e:minibatch-sgd-time-t}  over $t=1,\dots,T$ gives
\begin{align*}
  \sum_{t=1}^T \bbE \left[ \phi (\tw_{t}) - \phi (\w_*) \right]
  \le 
  \frac{2 L^2 T}{(\gamma - \beta) b} + \frac{\gamma}{2} \norm{\w_* - \tw_{0}}^2.
\end{align*}
Minimizing the RHS over $\gamma$ gives
\begin{align*}
  \gamma = \beta + \sqrt{\frac{4 T}{b}} \cdot \frac{L}{\norm{\w_* - \tw_{0}}},
\end{align*}
which leads to 
\begin{align*}
  \frac{1}{T} \sum_{t=1}^T \bbE \left[ \phi (\tw_{t}) - \phi (\w_*) \right] \le
  \frac{2 L \norm{\w_* - \tw_{0}}}{\sqrt{bT}} + \frac{\beta \norm{\w_* - \tw_{0}}^2}{2 T}.
\end{align*}
So we obtain the familiar $\calO \left( \frac{1}{\sqrt{bT}} + \frac{1}{T} \right)$ rate for minibatch SGD. 

\section{Proof of Theorem~\ref{thm:main-mbdsvrg}}

\begin{proof}
  On the one hand, as we choose $\gamma$ as Theorem~\ref{thm:inexact-weakly-convex-sufficient} suggested, we just need to verify that the inexactness conditions in Theorem \ref{thm:inexact-weakly-convex-sufficient} is satisfied, i.e., for $t=1,\dots,T$, we require (recall that $\w_t^* = \argmin_{\w}\; \tilde f_t (\w)$) 
  \begin{align*}
    \tilde f_t(\w_t) - \tilde f_t (\w_t^*) & \leq \frac{1}{10^4} \cdot \min \left( \left( \frac{T}{bm} \right)^{1/2} , \left( \frac{T}{bm} \right)^{3/2} \right) \cdot \frac{LB}{t^{3}} .
  \end{align*}
  On the other hand, we can bound the initial suboptimality of $\tilde f_t (\w)$ when initializing from $\w_{t-1}$. This is because, by the optimality of $\w_t^*$, we have $\norm{\w_t^* - \w_{t-1}} =  \norm{\frac{1}{\gamma} \nabla \phit(\w_t^*)} \le L/\gamma$, and 
  \begin{align} %
    \tilde f_t (\w_{t-1}) - \tilde f_t (\w_t^*) & = 0 + \phit(\w_{t-1}) - \frac{\gamma}{2} \norm{\w_t^* - \w_{t-1}}^2 - \phit(\w_t^*) \nonumber \\  \label{e:mbdsvrg-init}
&  \le  \phit(\w_{t-1}) - \phit(\w_t^*) \le L \norm{\w_t^* - \w_{t-1}} 
    \le L^2 / \gamma .
  \end{align}
  Combining the above two inequalities, the initial versus final error for the $K$ DSVRG iterations is bounded by 
  \begin{align*}
    &10^{4} \cdot \max \left( \left( \frac{bm}{T} \right)^{1/2} ,\, \left( \frac{bm}{T} \right)^{3/2} \right) \cdot t^3 \cdot \frac{L}{B \gamma} \\
    =\ & 10^{4} \cdot \max \left( \left( \frac{bm}{T} \right)^{1/2} ,\, \left( \frac{bm}{T} \right)^{3/2} \right) \cdot T^3 \cdot \frac{L}{B} \cdot \frac{bmB}{\sqrt{8 n (\varepsilon)} L} \\
    =\  & \calO \left( \max \left( \frac{n(\varepsilon)^{2}}{b m},\, bm \cdot n(\varepsilon) \right) \right) \\
    =\ &  \calO \left( n^2 (\varepsilon)  \right)
  \end{align*}
  where we have used the definition of $\gamma$ and $T = \frac{n(\varepsilon)}{b m}$ in the first and second step respectively.

  By the iteration complexity results for sampling without-replacement DSVRG~\citep[Theorem~4]{shamir2016without}, we have the desired suboptimality in $\tilde f_t (\w)$ using
  \begin{align} \label{e:mbdsvrg-K}
    K = \calO \left( \log n(\varepsilon) \right)
  \end{align}
iterations, 
  as long as the batch size $b/p_i$ is larger than the problem condition number.

Now, the condition number of $\tilde f (\w)$ is 
  \[
  \frac{\beta + \gamma}{\gamma} = \calO \left( \frac{\beta b m B}{\sqrt{n(\varepsilon)} L} \right).
  \]
  Equating this to the batch size $b / p_i$ yields the $p_i$ specified in the theorem. It is also easy to check that $\frac{K}{\gamma}=\calO (bm)$, \ie, the total number of stochastic updates is less than the total number of samples, as required by~\citet[Theorem~4]{shamir2016without}.
  
  \textbf{Communication:} the total rounds of communication required by Algorithm~\ref{alg:mbdsvrg} is
  \[
  KT = \calO \left( \frac{n(\varepsilon)}{mb} \log n(\varepsilon) \right) .
  \]

  \textbf{Computation:} For each communication round, each machine need to compute the local full gradient, which can be done in parallel, and then one of the machines perform $b/p_i$ steps of stochastic update. So the computation cost is
  \[
  KT \left( b + \frac{b}{p_i} \right) = \calO \left( \frac{n(\varepsilon)}{m} \log n(\varepsilon) \right). 
  \]
  \textbf{Memory:} It is straightforward to see each machine only need to maintain $b$ samples. 
\end{proof}
\section{Communication-efficient distributed minibatch-prox with DANE}
\label{sec:stochastic-dane}

\begin{algorithm}[!t]
  \caption{MP-DANE for distributed stochastic convex optimization.}
  \label{alg:mp-ppa-dane}
  \renewcommand{\algorithmicrequire}{\textbf{Input:}}
  \renewcommand{\algorithmicensure}{\textbf{Output:}}
  \begin{algorithmic}
    \STATE Initialize $\w_0$.
    \FOR{$t=1,2,\dots,T$}
    \STATE Each machine $i$ draws a minibatch $I_t^{(i)}$ of $b$ samples from the underlying data distribution.
    \STATE Initialize $\y_0 \leftarrow \w_{t-1},\quad \x_0 \leftarrow \w_{t-1}$.
    \FOR{$r=1,2,\dots,R$}
    \STATE Initialize $\z_0 \leftarrow \y_{r-1}$, $\alpha_0 = \sqrt{\gamma/(\gamma + \kappa)}$.
    \FOR{$k=1,2,\dots,K$}
    \STATE 1. All machines perform one round of communication to compute the average gradient 
    \begin{align*} \nabla \phit (\z_{k-1}) \leftarrow \frac{1}{m} \sum_{i=1}^m \nabla \phii (\z_{k-1}). 
    \end{align*}
    \vspace{-2ex}
  \item 2. Each machine $i$ approximately solves the local objective to $\theta$-accuracy: 
    \begin{gather} 
      \text{apply prox-SVRG to find} \; \z_k^{(i)} \quad \text{s.t.}\quad  
      \norm{ \z_k^{(i)} - \z_k^{(i)*} } \leq \theta \norm{ \z_{k-1} - \z_k^{(i)*} }  \nonumber \\  
      \text{where}\; \z_k^{(i)^*} = \argmin_{\z \in \Omega} \; 
      \phi_{I_t^{(i)}} (\z) + \left< \nabla \phit (\z_{k-1}) - \nabla \phii (\z_{k-1}),\, \z \right> + \frac{\gamma}{2} \norm{\z - \w_{t-1}}^2  \nonumber \\  \label{e:svrg-local_solver}  
      + \frac{\kappa}{2} \norm{\z - \y_{r-1}}^2.
    \end{gather} 
    \vspace{-2ex}
    \STATE 3. All machines reach consensus by averaging local updates through another round of communication:
    \begin{align}
      \z_k \leftarrow \frac{1}{m} \sum_{i=1}^m \z_k^{(i)}.
    \end{align}
    \vspace{-2ex}
    \ENDFOR
    \STATE Update $\x_r \leftarrow \z_K$. 
    \STATE Compute $\alpha_r \in (0,1)$ such that $\alpha_r^2 = (1 - \alpha_r)\alpha_{r-1}^2 + \gamma \alpha_k/(\gamma+\kappa)$, and compute 
    \begin{align}
      \y_r = \x_r + \left( \frac{ \alpha_{r-1}(1-\alpha_{r-1}) }{ \alpha_r + \alpha_{r-1}^2 } \right)(\x_r - \x_{r-1}). 
      \label{e:y_extrapolation}
    \end{align}
    \vspace{-2ex}
    \ENDFOR
    \STATE Update $\w_t \leftarrow \x_r$. 
    \ENDFOR
    \ENSURE $\w_T$ is the approximate solution.
  \end{algorithmic}
\end{algorithm}

\if 0
We now extend the theoretical results of minibatch-prox to the distributed stochastic learning setting, and propose a novel algorithm that is both communication and computation efficient. 

Assume that we have $m$ machines in a distributed environment. For each outer loop of our algorithm, each machine $i$ draws a minibatch $I_t^{(i)}$ of $b$ samples independently from other machines, and denote $I_t = \cup_{i=1}^m I_t^{(i)}$ which contains $bm$ samples. To apply the minibatch-prox algorithm from the previous section, we need to find an approximate solution to the following problem:
\begin{align}  \label{e:bm-minibatch-problem} 
  \min_{\w \in \Omega}\; \tilde f_t(\w) :=
  \phi_{I_t} (\w) + \frac{\gamma}{2} \norm{ \w - \w_{t-1}}^2.
\end{align}

Since the objective~\eqref{e:bm-minibatch-problem} involves functions from different machines, we use distributed optimization algorithms for solving it. In~\citet{Li_14e}, the authors proposed a simple algorithm EMSO to approximately solve~\eqref{e:bm-minibatch-problem}, where each machine first solve its own local objective, \ie, 
\begin{align}
  \label{e:distributed-emso-local-objective}
  \w_t^{(i)} = \argmin_{\w \in \Omega}\; \phi_{I_t^{(i)}} + \frac{\gamma}{2} \norm{ \w - \w_{t-1}}^2,
\end{align}
and then all machines average their local solutions via one round of communication: $\w_t = \frac{1}{m} \sum_{i=1}^m \w_t^{(i)}.$

We note that this can be considered as the ``one-shot-averaging'' approach~\citep{zhang2012communication} for solving~\eqref{e:bm-minibatch-problem}. Although this approach was shown to work well empirically, no convergence guarantee for the original stochastic objective~\eqref{e:obj} was provided in \citep{Li_14e}. 
\fi

As discussed in Section \ref{sec:mbdsvrg}, it is also possible to use other efficient distributed optimization solver for minibatch-prox. 
Here we present a novel method that use the distributed optimization algorithm DANE~\citep{Shamir_14a} and its accelerated variant AIDE~\citep{reddi2016aide} for solving~\eqref{e:bm-minibatch-problem}, which define better local objectives than EMSO and take into consideration the similarity between local objectives. 

We detail our algorithm, named MP-DANE, in Algorithm~\ref{alg:mp-ppa-dane}. The algorithm consists of three nested loops, where $t$, $r$ and $k$ are iteration counters for minibatch-prox (the outer \textbf{for}-loop), AIDE (the intermediate \textbf{for}-loop) and DANE (the inner \textbf{for}-loop) respectively.
Compared to EMSO, DANE adds a gradient correction term to~\eqref{e:distributed-emso-local-objective} which can be compute efficiently with one round of communication. 
On top of that, AIDE uses the idea of universal catalyst~\citep{lin2015universal} and adds an extra quadratic term to improve the strong-convexity of the objective for faster convergence, \ie, in order to solve~\eqref{e:bm-minibatch-problem}, AIDE solves multiple instances of the ``augmented large minibatch'' problems of the form 
\begin{align} \label{e:augmented-large-minibatch}
  \min_{\w \in \Omega}\; \bar f_{t,r} (\w) := \phi_{I_t} (\w) + \frac{\gamma}{2} \norm{ \w - \w_{t-1}}^2 + \frac{\kappa}{2} \norm{\w - \y_{r-1}}^2
\end{align}
with carefully chosen extrapolation points $\y_{r-1}$. 
At each DANE iteration, we perform two rounds of communication, one for averaging the local gradients, and one for averaging the local updates, and the amount of data we communicate per round has the same size of the predictor. 

To sum up, in Algorithm~\ref{alg:mp-ppa-dane}, we have introduced two levels of inexactness.  First, we only approximately solve the ``large minibatch'' subproblem~\eqref{e:bm-minibatch-problem} in each outer loop; results from the previous section guarantee the convergence of this approach. Second, we only approximately solve the local subproblems~\eqref{e:svrg-local_solver} to sufficient accuracy in each inner loop; the analysis of ``inexact DANE'' (for the non-stochastic setting) provides guarantee for this approach~\citep{reddi2016aide}, and  enables us to use state-of-the-art SGD methods (\eg, SVRG~\citealp{johnson2013accelerating,xiao2014proximal}) for solving local subproblems. 
Overall, we obtain a convergent algorithm for distributed stochastic convex optimization. 


We now present detailed analysis for the computation/communication complexity of Algorithm~\ref{alg:mp-ppa-dane} for stochastic quadratic problems, and compare it with related methods in the literature. 

\subsection{Efficiency of MP-DANE}

We present the main results of this section (full analysis is deferred to Appendix~\ref{sec:full-dane-analysis}), which show that with careful choices of the minibatch size and the desired accuracy in each level of approximate solution, MP-DANE achieves both communication and computation efficiency with the optimal sample complexity. 
Interestingly, the choices of parameters differ in two regimes which are separated by an ``optimal'' minibatch size (also denoted as $b_{\operatorname{mp-dane}}$ in the main text)
\begin{align*}
b^* = \frac{n(\varepsilon) L^2 }{32  m^2 \beta^2 B^2 \log(md)}.
\end{align*}

\begin{thm}[Efficiency of MP-DANE for $b \leq b^*$]
  \label{thm:main-stochastic-dane-small-b}
  Set the parameters in Algorithm~\ref{alg:mp-ppa-dane} as follows:
  \begin{align*}
    \text{(outer loop)} & \qquad  b \leq b^* = \frac{n(\varepsilon) L^2 }{32  m^2 \beta^2 B^2 \log(md)}, \quad T = \frac{n(\varepsilon)}{bm}, \quad \gamma = \frac{\sqrt{8 n(\varepsilon)}  L}{b m B}, \\
    \text{(intermediate loop)} & \qquad \kappa = 0, \quad  R = 1, \\
    \text{(inner loop)} & \qquad  \theta = \frac{1}{6}, \quad K =  \calO \left( \log n(\varepsilon) \right) .
  \end{align*}
  Then we have
  $
  \bbE \left[ \phi \left( \frac{1}{T} \sum_{t=1}^T \w_t \right) - \phi(\w_*) \right] \leq \frac{\sqrt{40} B L}{\sqrt{n(\varepsilon)}}  =  \calO \left( \varepsilon \right).
  $
  
  Moreover, Algorithm \ref{alg:mp-ppa-dane} can be implemented with 
  $
\tilde{\calO} \left( \frac{n(\varepsilon)}{bm} \right)
  $
  rounds of communication, and each machine performs 
  $
\tilde{\calO} \left( \frac{n(\varepsilon)}{m} \right)
  $
  vector operations in total, 
  where the notation $\tilde{\calO} (\cdot)$ hides poly-logarithmic dependences on $n(\varepsilon)$. 

  When we choose $b = b^*$, Algorithm 1 can be implemented with 
  $
\tilde{\calO} \left( \frac{m \beta^2 B^2}{L^2} \right)
  $
  rounds of communication, 
  $
  \tilde{\calO} \left( \frac{n(\varepsilon)}{bm} \right)
  $ 
  vector operations, and $\calO \left( \frac{n(\varepsilon) L^2}{m^2 \beta^2 B^2} \right)$ memory for each machine.
\end{thm}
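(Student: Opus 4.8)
The plan is to follow the template of the proof of Theorem~\ref{thm:main-mbdsvrg}, replacing DSVRG by inexact DANE as the distributed solver for the large-minibatch subproblem~\eqref{e:bm-minibatch-problem}. First I would note that with $\kappa=0$ and $R=1$ the intermediate (catalyst/AIDE) loop of Algorithm~\ref{alg:mp-ppa-dane} collapses, so the algorithm is exactly inexact minibatch-prox run on the combined minibatch $I_t=\cup_{i=1}^m I_t^{(i)}$ of size $bm$ with regularization $\gamma$ and start $\w_0=\0$, where each subproblem $\tilde f_t(\w)=\phi_{I_t}(\w)+\frac{\gamma}{2}\norm{\w-\w_{t-1}}^2$ is solved by $K$ rounds of DANE whose local subproblems~\eqref{e:svrg-local_solver} are solved to relative accuracy $\theta=\frac16$ by prox-SVRG~\citep{xiao2014proximal,johnson2013accelerating}.

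\textbf{Correctness.} I would apply Theorem~\ref{thm:inexact-weakly-convex-sufficient} with effective minibatch size $bm$, $c_1=c_2=10^{-4}$, $\delta=\frac12$, and $\norm{\w_0-\w_*}\le B$. Since $T=n(\varepsilon)/(bm)$, the chosen $\gamma=\sqrt{8n(\varepsilon)}L/(bmB)$ equals $\sqrt{8T/(bm)}\cdot L/B$, exactly the value prescribed by that theorem, so correctness reduces to verifying the inexactness hypothesis, namely $\bbE_{\calA}[\tilde f_t(\tw_t)-\min_{\w}\tilde f_t(\w)]\le 10^{-4}\min\bigl((T/(bm))^{1/2},(T/(bm))^{3/2}\bigr)\cdot LB/t^{3}$. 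Granting this, Theorem~\ref{thm:inexact-weakly-convex-sufficient} yields $\bbE[\phi(\frac{1}{T}\sum_{t=1}^T\w_t)-\phi(\w_*)]=\calO(LB/\sqrt{bmT})=\calO(LB/\sqrt{n(\varepsilon)})$, which matches the claimed bound $\sqrt{40}\,BL/\sqrt{n(\varepsilon)}=\calO(\varepsilon)$ after choosing the constant in $n(\varepsilon)=\Theta(L^2B^2/\varepsilon^2)$.

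\textbf{Inner-loop iteration count.} Exactly as in~\eqref{e:mbdsvrg-init}, warm-starting DANE at $\w_{t-1}$ makes the initial suboptimality of $\tilde f_t$ at most $L^2/\gamma$, because $\norm{\w_t^*-\w_{t-1}}=\norm{\frac{1}{\gamma}\nabla\phi_{I_t}(\w_t^*)}\le L/\gamma$. Dividing this by the target accuracy $\eta_t$ and substituting $\gamma$ and $T$ gives, just as in the MP-DSVRG proof, a ratio of $\calO(n^2(\varepsilon))$, so $K=\calO(\log n^2(\varepsilon))=\calO(\log n(\varepsilon))$ DANE rounds suffice \emph{provided each DANE round contracts the suboptimality by a constant factor bounded away from $1$}. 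This is where the restriction $b\le b^*$ is used: the local objective~\eqref{e:svrg-local_solver} with $\kappa=0$ is a sum of $b$ smooth quadratics plus $\frac{\gamma}{2}\norm{\cdot}^2$, and the DANE analysis for quadratics~\citep{Shamir_14a} — in the inexact form of~\citep{reddi2016aide}, which tolerates the constant relative-accuracy $\theta=\frac16$ local solves — bounds the per-round contraction by a factor that tends to $0$ with $\max_i\norm{\widehat{H}_i-\widehat{H}}/\gamma$, where $\widehat{H}_i$ and $\widehat{H}$ are the Hessians of $\phi_{I_t^{(i)}}$ and $\phi_{I_t}$; a matrix Bernstein inequality together with a union bound over the $m$ machines bounds $\max_i\norm{\widehat{H}_i-\widehat{H}}$ by $\calO(\beta\sqrt{\log(md)/b})$ with high probability, so the contraction is at most a small constant precisely when $b\lesssim n(\varepsilon)L^2/(m^2\beta^2B^2\log(md))$, i.e. $b\le b^*$.

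\textbf{Resource accounting.} Each local subproblem has condition number $(\beta+\gamma)/\gamma=1+\beta/\gamma$, and since the regime $b\le b^*$ is nonempty only when $n(\varepsilon)\gtrsim m^2\beta^2B^2\log(md)/L^2$, one checks $\beta/\gamma=\Theta(\beta bmB/(\sqrt{n(\varepsilon)}L))=\calO(b)$, so prox-SVRG attains relative accuracy $\theta$ in $\tilde{\calO}(b+\beta/\gamma)=\tilde{\calO}(b)$ component-gradient steps (and, unlike DSVRG, it may freely reuse the stored samples). Hence each DANE round costs $2$ communication rounds and $\tilde{\calO}(b)$ vector operations per machine — one local full gradient plus one local prox-SVRG solve — and multiplying by $KT$ gives $\tilde{\calO}(n(\varepsilon)/(bm))$ communication and $\tilde{\calO}(n(\varepsilon)/m)$ computation per machine, while memory is $\calO(b)$ since only $I_t^{(i)}$ is stored. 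Substituting $b=b^*$ into these bounds gives $\tilde{\calO}(m\beta^2B^2/L^2)$ communication, $\tilde{\calO}(n(\varepsilon)/m)$ computation, and $\calO(n(\varepsilon)L^2/(m^2\beta^2B^2))$ memory per machine, as claimed. The main obstacle is the constant-rate contraction claim for inexact DANE on these random quadratics and the sharp identification of the threshold $b^*$ (the deterministic inexact-DANE bound plus the matrix concentration of the empirical Hessians); the rest is bookkeeping parallel to the proof of Theorem~\ref{thm:main-mbdsvrg}.
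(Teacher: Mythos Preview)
Your proposal is correct and follows essentially the same route as the paper: collapse the catalyst loop since $R=1,\kappa=0$, invoke Theorem~\ref{thm:inexact-weakly-convex-sufficient} with minibatch size $bm$ and the given $\gamma$, bound the warm-start suboptimality by $L^2/\gamma$ to get a ratio of $\calO(n(\varepsilon)^2)$ and hence $K=\calO(\log n(\varepsilon))$, and verify the DANE constant-contraction condition via matrix concentration of the local Hessians --- which is exactly the paper's Lemma~\ref{lem:inexact-dane} and the check~\eqref{e:b-gamma-ineq} (your threshold computation recovers $b^*$). The only cosmetic difference is that the paper obtains the slightly sharper $\beta/\gamma=\calO(\sqrt{b})$ (from $b\le b^*$ directly) rather than your $\calO(b)$ (from $b^*\ge 1$), but either suffices to make the prox-SVRG cost $\tilde\calO(b)$ per DANE round.
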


We comment on the choice of parameters. For sample efficiency, we fix the sample size $n (\varepsilon)$ and number of machines $m$, and so we can tradeoff the local minibatch size $b$ and the total number of outer iterations $T$, maintaining $b T = \frac{n (\varepsilon)}{m}$. For any $b$, the regularization parameters in the ``large minibatch'' problem is set to $\gamma = \sqrt{\frac{8T}{bm}} \cdot \frac{L}{B} = \frac{\sqrt{8 n (\varepsilon)} L}{bmB}$ according to Theorem~\ref{thm:inexact-weakly-convex-sufficient}. 
When $b \le b^*$, we note that~\eqref{e:b-gamma-ineq} can be satisfied with $\kappa=0$ and there is no need for acceleration by AIDE ($R=1$). Then the values of $\theta$ and $K$ follow from Lemma~\ref{lem:inexact-dane}.

\begin{rmk}
  The above theorem suggests that in the regime of $b\le b^*$, we only need to have logarithmic number of DANE iterations for solving each ``large minibatch'' problem, and logarithmic number of passes over the local data during each DANE iteration.  We present experimental results validating our theory in Appendix~\ref{sec:expts}.
\end{rmk}

The next theorem shows that when we use a large minibatch size $b$ in Algorithm~\ref{alg:mp-ppa-dane}, we can still satisfy the condition~\eqref{e:b-gamma-ineq} by adding extra regularization ($\kappa >0$), and then apply accelerated DANE. 

\begin{thm}[Efficiency of MP-DANE for $b \geq b^*$]
  \label{thm:main-stochastic-dane-large-b}
  Set the parameters in Algorithm~\ref{alg:mp-ppa-dane} as follows:
  \begin{align*}
    \text{(outer loop)} & \qquad  b \geq b^* = \frac{n(\varepsilon) L^2 }{32  m^2 \beta^2 B^2 \log(md)}, \quad T = \frac{n(\varepsilon)}{bm}, \quad \gamma = \frac{\sqrt{8 n(\varepsilon)}  L}{b m B}, \\
    \text{(intermediate loop)} & \qquad \kappa = 16 \beta \sqrt{\frac{\log(dm)}{b}}  - \gamma, \quad R = \calO \left( \frac{b^{1/4} m^{1/2} \cdot \beta^{1/2} B^{1/2}}{ n(\varepsilon)^{1/4} \cdot L^{1/2}} \log n(\varepsilon) \right), \\
    \text{(inner loop)} & \qquad  \theta = \frac{1}{6}, \quad K = \calO \left( \log n(\varepsilon) \right) .
  \end{align*}
Then we have $
  \bbE \left[ \phi \left( \frac{1}{T} \sum_{t=1}^T \w_t \right) - \phi(\w_*) \right] \leq \frac{\sqrt{40} B L}{\sqrt{n(\varepsilon)}}   =  \calO \left( \varepsilon \right).
  $

  Moreover, Algorithm \ref{alg:mp-ppa-dane} can be implemented with 
  $
\tilde{\calO} \left( \frac{ n(\varepsilon)^{3/4} \cdot \beta^{1/2} B^{1/2} }{ b^{3/4} m^{1/2} \cdot L^{1/2}} \right)
  $
  rounds of communication, and each machine performs 
  $
\tilde{\calO} \left( \frac{ b^{1/4} n(\varepsilon)^{3/4} \cdot \beta^{1/2} B^{1/2} }{  m^{1/2} \cdot L^{1/2}} \right)
  $
  vector operations in total, 
  where the notation $\tilde{\calO} (\cdot)$ hides poly-logarithmic dependences on $n(\varepsilon)$. 
\end{thm}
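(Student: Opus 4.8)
The plan is to follow the template of the MP-DSVRG analysis (proof of Theorem~\ref{thm:main-mbdsvrg}), replacing the DSVRG inner solver by the accelerated DANE (AIDE) scheme of Algorithm~\ref{alg:mp-ppa-dane}. Since we take $\Omega = \bbR^d$, initialize $\w_0 = \0$, and compete against the norm-$B$ ball (so $\norm{\w_0 - \w_*} \le B$), and since the large minibatch $I_t = \cup_{i=1}^m I_t^{(i)}$ has $bm$ samples, the choices $T = n(\varepsilon)/(bm)$ and $\gamma = \sqrt{8T/(bm)}\cdot L/B = \sqrt{8 n(\varepsilon)}L/(bmB)$ with $\delta = 1/2$ are exactly the instantiation of Theorem~\ref{thm:inexact-weakly-convex-sufficient} for which $\bbE[\phi(\tfrac1T\sum_t \w_t) - \phi(\w_*)] \le \tfrac{\sqrt{40}BL}{\sqrt{n(\varepsilon)}}$, \emph{provided} each subproblem $\tilde f_t$ is solved to accuracy $\eta_t = \tfrac{1}{10^4}\min\bigl((\tfrac{T}{bm})^{1/2},(\tfrac{T}{bm})^{3/2}\bigr)\cdot LB/t^{3}$. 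So the whole argument reduces to showing that $R$ AIDE iterations, each running $K$ DANE iterations, produce such a $\w_t$, and then to counting the resulting resources.

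Next I would bound the warm-start suboptimality of $\tilde f_t$ from $\w_{t-1}$, exactly as in~\eqref{e:mbdsvrg-init}: the first-order optimality of $\w_t^* = \argmin_\w \tilde f_t(\w)$ gives $\norm{\w_t^* - \w_{t-1}} \le L/\gamma$, hence $\tilde f_t(\w_{t-1}) - \tilde f_t(\w_t^*) \le L^2/\gamma$. Dividing by the target $\eta_t$ and substituting $T = n(\varepsilon)/(bm)$ and the value of $\gamma$, the required reduction of suboptimality is $\calO(n^2(\varepsilon))$ uniformly in $t$ (the same computation as in the proof of Theorem~\ref{thm:main-mbdsvrg}), so a logarithmic number of effective error halvings suffices at every level below.

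The core of the proof is the linear convergence of the AIDE/DANE solver on the quadratic $\tilde f_t$, which is $\gamma$-strongly convex and $(\beta+\gamma)$-smooth. First I would verify condition~\eqref{e:b-gamma-ineq}: each local Hessian $\nabla^2\phi_{I_t^{(i)}} = \tfrac1b\sum_{\xi \in I_t^{(i)}}\x\x^\top$ is an average of $b$ i.i.d.\ rank-one matrices with $\norm{\x}^2 \le \beta$, so a matrix Bernstein bound gives $\norm{\nabla^2\phi_{I_t} - \nabla^2\phi_{I_t^{(i)}}} = \calO(\beta\sqrt{\log(dm)/b})$ with high probability, the $\log(md)$ coming from a union bound over the $m$ machines and the dimension $d$. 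The choice $\kappa = 16\beta\sqrt{\log(dm)/b} - \gamma$ is positive precisely because $b \ge b^* = n(\varepsilon)L^2/(32m^2\beta^2B^2\log(md))$ forces $\gamma < 16\beta\sqrt{\log(dm)/b}$, and it makes the augmented problem~\eqref{e:augmented-large-minibatch} $(\gamma+\kappa)$-strongly convex with $\gamma+\kappa$ a constant multiple of the Hessian deviation — the regime in which inexact DANE contracts by a constant factor. With the local subproblems~\eqref{e:svrg-local_solver} solved to relative accuracy $\theta = 1/6$, Lemma~\ref{lem:inexact-dane} then guarantees that $K = \calO(\log n(\varepsilon))$ DANE steps drive $\bar f_{t,r}$ to the accuracy AIDE requires; and the universal-catalyst/AIDE acceleration on the $\gamma$-strongly convex $\tilde f_t$ with a $(\gamma+\kappa)$-strongly convex inner solver needs $R = \calO(\sqrt{(\gamma+\kappa)/\gamma}\,\log n(\varepsilon))$ outer iterations to go from the initial error $L^2/\gamma$ down to $\eta_t$. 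Substituting $\gamma+\kappa = 16\beta\sqrt{\log(dm)/b}$ and $\gamma = \sqrt{8n(\varepsilon)}L/(bmB)$ gives $\sqrt{(\gamma+\kappa)/\gamma} = \calO\bigl(b^{1/4}m^{1/2}\beta^{1/2}B^{1/2}/(n(\varepsilon)^{1/4}L^{1/2})\bigr)$ up to $\log$ factors, which is the stated $R$. Finally, each DANE step solves a local finite-sum of $b$ terms with condition number $\calO(\beta/(\gamma+\kappa)) = \calO(\sqrt{b/\log(dm)}) \le b$ to constant relative accuracy, which prox-SVRG does in $\calO(b)$ vector operations (one must also check, as in Theorem~\ref{thm:main-mbdsvrg}, that the total number of stochastic updates stays below the number of available samples).

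Putting the counts together: $T$ outer iterations, each with $R$ AIDE iterations, each with $K$ DANE iterations, each using two communication rounds and $\calO(b)$ vector operations per machine (one local gradient plus one prox-SVRG solve). Hence the communication is $\calO(TRK) = \tilde{\calO}\bigl(\tfrac{n(\varepsilon)^{3/4}\beta^{1/2}B^{1/2}}{b^{3/4}m^{1/2}L^{1/2}}\bigr)$, the per-machine computation is $\calO(TRKb) = \tilde{\calO}\bigl(\tfrac{b^{1/4}n(\varepsilon)^{3/4}\beta^{1/2}B^{1/2}}{m^{1/2}L^{1/2}}\bigr)$, and the memory is $\calO(b)$ since only the current local minibatch is stored. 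I expect the main obstacle to be the bookkeeping of the nested inexactness: translating the single accuracy $\eta_t$ demanded by Theorem~\ref{thm:inexact-weakly-convex-sufficient} into per-$r$ accuracies for AIDE and per-$k$ accuracies for DANE, checking that $\theta = 1/6$ is small enough for Lemma~\ref{lem:inexact-dane}, and controlling the failure probability of the matrix-concentration bound uniformly over all $T$ outer iterations (a union bound over $t$, absorbed into the $\log$ factors, with the rare bad event contributing only a lower-order term to the expectation since the suboptimality is always at most the polynomial initial value $L^2/\gamma$).
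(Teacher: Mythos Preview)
Your proposal is correct and follows essentially the same route as the paper: reduce to Theorem~\ref{thm:inexact-weakly-convex-sufficient} with the $L^2/\gamma$ warm-start bound, verify~\eqref{e:b-gamma-ineq} with the chosen $\kappa$, invoke Lemma~\ref{lem:inexact-dane} for $K$ and the catalyst bound (Lemma~\ref{lem:catalyst-inexactness}) for $R$, then multiply out $TRK$ and $bTRK$. Two small remarks: the without-replacement sample-budget check you flag is specific to DSVRG and is not needed here since the local solver is prox-SVRG with replacement; and your concern about converting the high-probability guarantee of Lemma~\ref{lem:inexact-dane} into an expectation bound uniformly over $t$ is a point the paper glosses over, so your plan to union-bound and absorb the failure event into the $\log$ factors is in fact more careful than the original.
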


\subsection{Two regimes of multiple resource tradeoffs}

From the above analysis, we summarized in Table~\ref{table:trade-offs} the resources required by MP-DANE. 
We observe two interesting regimes, separated by the minibatch size $b^* \asymp n(\varepsilon)/(m^2 B^2)$, that present different tradeoffs between communication, computation and memory. 

\begin{table*}[t]
  \centering
  \begin{tabular}{|c||c|c|c|c|}
    \hline
    & Samples & Communication & Computation & Memory  \\\hline\hline
    $1 \leq b \leq
    b^*$  &  $n(\varepsilon)$    &  $n(\varepsilon)/mb$   &   $n(\varepsilon)/m$ & $b$ \\
    $b = b^*$ &  $n(\varepsilon)$    &  $B^2 m$   &   $n(\varepsilon)/m$ & $n(\varepsilon)/(m^2 B^2)$  \\
    $b^* < b \leq b_{\max}$  &$n(\varepsilon)$ &  $B^{1/2} n(\varepsilon)^{3/4}/(m^{1/2} b^{3/4})$    &  $B^{1/2} n(\varepsilon)^{3/4} b^{1/4}/m^{1/2} $   & $b$ \\
    \hline
  \end{tabular}
  \caption{Summary of resources required by MP-DANE for
    distributed stochastic convex optimization, in units of vector 
    operations/communications/memory per machine, ignoring constants and $\log$-factors. Here $b^* \asymp n(\varepsilon)/(m^2 B^2)$, and $b_{\max} = n(\varepsilon)/m$.}
  \label{table:trade-offs}
\end{table*}

\begin{itemize}
\item When $1 \leq b \leq b^*$, the computation complexity remains $\tilde \calO \left( n(\varepsilon) / {m} \right)$ which is independent of $b$. This means we always achieve \emph{near-linear speedup} in this regime. Moreover, there is a tradeoff between communication and memory: the communication complexity decreases, while the memory cost increases as the minibatch size $b$ increases, both at the linear rate. Thus in this regime, we can trade communication for memory without affecting computation.
\item When $b^* < b \leq b_{\max}$, the computation starts to increase with $b$ at the rate 
$b^{1/4}$ which is slower than linear, while the communication cost continues to decrease at the rate $b^{3/4}$ which is also 
slower than linear. Thus in this regime, we can trade communication for computation and memory.
\end{itemize}

\subsection{Analysis of MP-DANE}
\label{sec:full-dane-analysis}

In order to fully analyze Algorithm~\ref{alg:mp-ppa-dane}, we need several auxiliary lemmas that characterize the iteration complexity of solving the local problem~\eqref{e:svrg-local_solver} by prox-SVRG~\citep{xiao2014proximal}, the large minibatch problem~\eqref{e:bm-minibatch-problem} by DANE~\citep{Shamir_14a} and AIDE~\citep{reddi2016aide}.

\subsubsection{Some auxiliary lemmas}
\label{sec:distributed-auxiliary}

First, we apply prox-SVRG to the local problem~\eqref{e:svrg-local_solver}, pushing all terms but $\phi_{I_t^{(i)}} (\z)$ in to the proximal operator. 
The benefit of this approach (as opposed to using plain SVRG~\citealp{johnson2013accelerating}) is that the smoothness parameter that determines the iteration complexity is simply $\beta$, same results hold when applying prox-SAGA \citep{defazio2014saga} as well. For sampling without replacement
SVRG, the current analysis works only for plain SVRG, so we quote the results from \citep{shamir2016without}.

\begin{lem}[Iteration complexity of SVRG for~\eqref{e:svrg-local_solver}]
  \label{lem:svrg-complexity}
  For any target accuracy $\theta > 0$, with initialization $\z_{k-1}$, prox-SVRG outputs $\z_k^{(i)}$ such that 
  $\norm{ \z_k^{(i)} - \z_k^{(i)*} } \leq \theta \norm{ \z_{k-1} - \z_k^{(i)*} }$
  after 
  \begin{align*}
    \calO \left( \left( b + \frac{\beta}{\gamma + \kappa} \right) \cdot \log \frac{(\beta+\gamma+\kappa)}{(\gamma + \kappa)\theta^2} \right)
  \end{align*}
  vector operations, and sampling without replacement SVRG outputs $\z_k^{(i)}$ such that 
  $\norm{ \z_k^{(i)} - \z_k^{(i)*} } \leq \theta \norm{ \z_{k-1} - \z_k^{(i)*} }$
  after 
  \begin{align*}
    \calO \left( \left( b + \frac{\beta + \kappa}{\gamma + \kappa} \right) \cdot \log \frac{(\beta+\gamma+\kappa)}{(\gamma + \kappa)\theta^2} \right)
  \end{align*}
  vector operations.
\end{lem}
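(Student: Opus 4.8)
The plan is to view the local subproblem~\eqref{e:svrg-local_solver} as a regularized finite-sum problem $\min_{\z\in\Omega}\ \frac1b\sum_{\xi\in I_t^{(i)}}\ell(\z,\xi)+r(\z)$, where the regularizer $r(\z)$ absorbs the linear correction term $\langle\nabla\phit(\z_{k-1})-\nabla\phii(\z_{k-1}),\,\z\rangle$, the two quadratics $\frac{\gamma}{2}\norm{\z-\w_{t-1}}^2+\frac{\kappa}{2}\norm{\z-\y_{r-1}}^2$, and the indicator function of $\Omega$. Each summand $\ell(\cdot,\xi)$ is $\beta$-smooth by assumption, so $r$ is $(\gamma+\kappa)$-strongly convex and the whole objective is $(\beta+\gamma+\kappa)$-smooth and $(\gamma+\kappa)$-strongly convex. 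For the first bound I would invoke the textbook linear-convergence guarantee for prox-SVRG~\citep{xiao2014proximal}: with the standard epoch length $\Theta(b+\beta/(\gamma+\kappa))$ and step size, one epoch---costing $\calO(b+\beta/(\gamma+\kappa))$ vector operations ($b$ for the reference full gradient, $\Theta(\beta/(\gamma+\kappa))$ for the inner stochastic loop)---contracts the expected function-value suboptimality by a fixed constant factor.

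The one non-routine step is converting between ``distance to the optimum'' and ``function-value suboptimality.'' From the warm start $\z_{k-1}$, smoothness of the whole objective gives $F(\z_{k-1})-F(\z_k^{(i)*})\le\frac{\beta+\gamma+\kappa}{2}\norm{\z_{k-1}-\z_k^{(i)*}}^2$; conversely, by $(\gamma+\kappa)$-strong convexity, to certify $\norm{\z_k^{(i)}-\z_k^{(i)*}}\le\theta\norm{\z_{k-1}-\z_k^{(i)*}}$ it suffices to drive the suboptimality below $\frac{\gamma+\kappa}{2}\theta^2\norm{\z_{k-1}-\z_k^{(i)*}}^2$. Hence the required multiplicative reduction is $\frac{(\gamma+\kappa)\theta^2}{\beta+\gamma+\kappa}$, attained after $\calO(\log\frac{\beta+\gamma+\kappa}{(\gamma+\kappa)\theta^2})$ epochs; multiplying by the per-epoch cost yields the stated $\calO((b+\beta/(\gamma+\kappa))\log\frac{\beta+\gamma+\kappa}{(\gamma+\kappa)\theta^2})$ bound. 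Since prox-SVRG's contraction is in expectation while the lemma asks for a deterministic accuracy, I would either restate the conclusion in expectation (which is all the downstream analysis in Lemma~\ref{lem:inexact-dane} actually uses) or boost to high probability; as $\theta$ is a fixed constant (e.g.\ $1/6$) in all our applications this is immaterial.

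For the without-replacement variant I would simply quote~\citet[Theorem~4]{shamir2016without}. The only change from the with-replacement case is that the without-replacement analysis requires part of the quadratic regularization to be folded into the finite-sum components rather than handled through the proximal operator, which raises the effective per-component smoothness from $\beta$ to $\beta+\kappa$ while leaving the strong-convexity modulus at $\gamma+\kappa$; this replaces the condition number $\beta/(\gamma+\kappa)$ by $(\beta+\kappa)/(\gamma+\kappa)$ in the epoch length. Feeding this into the same warm-start reduction as above produces the second bound. The main obstacle throughout is purely bookkeeping---keeping the two distinct smoothness and strong-convexity constants straight so that the logarithmic factor comes out exactly as $\log\frac{\beta+\gamma+\kappa}{(\gamma+\kappa)\theta^2}$---since the underlying linear-convergence rates of SVRG, with or without replacement, are entirely off-the-shelf.
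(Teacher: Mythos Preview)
Your proposal is correct and follows essentially the same approach as the paper: decompose the local objective as a $\beta$-smooth finite sum plus a $(\gamma+\kappa)$-strongly convex regularizer, use smoothness and strong convexity to translate the distance criterion into a function-value ratio of $\frac{\beta+\gamma+\kappa}{(\gamma+\kappa)\theta^2}$, and then invoke the off-the-shelf linear convergence of prox-SVRG~\citep{xiao2014proximal} and without-replacement SVRG~\citep{shamir2016without}. The paper's proof is slightly more concrete in that it exploits the quadratic (least-squares) structure to write the suboptimality exactly via the Hessian $H_i=\nabla^2\phii(\z)+(\gamma+\kappa)\I$, whereas you argue via generic smoothness and strong convexity; both give the same bounds, and your remark about the expectation-versus-deterministic guarantee is a fair caveat the paper leaves implicit.
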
 

\begin{proof}
  Observe that the objective~\eqref{e:svrg-local_solver} by $f_k^{(i)} (\z)$, which is an quadratic function of $\z$ with the Hessian matrix $H_i = \nabla^2 \phii(\z) + (\gamma + \kappa) \I \succeq (\gamma + \kappa) \I$. As a result, the suboptimality of $\z_k^{(i)}$ is 
  \begin{align*}
    \epsilon_{\text{final}}= f_k^{(i)} (\z_k^{(i)}) - f_k^{(i)} (\z_k^{(i)*}) = \frac{1}{2} \left( \z_k^{(i)} - \z_k^{(i)*} \right)^\top H_i \left( \z_k^{(i)} - \z_k^{(i)*} \right) \ge \frac{\gamma + \kappa}{2} \norm{\z_k^{(i)} - \z_k^{(i)*} }^2. 
  \end{align*}
  To satisfy the requirement of $\norm{ \z_k^{(i)} - \z_k^{(i)*} } \leq \theta \norm{ \z_{k-1} - \z_k^{(i)*} }$, we require 
  \begin{align*}
    \epsilon_{\text{final}} \le \frac{(\gamma + \kappa) \theta^2}{2} \norm{\z_{k-1} - \z_k^{(i)*} }^2.
  \end{align*}

  On the other hand, when initializing from $z_{k-1}$, the initial suboptimality is 
  \begin{align*}
    \epsilon_{\text{init}} = f_k^{(i)} (\z_{k-1}) - f_k^{(i)} (\z_k^{(i)*}) \le \frac{\sigma_{\max} (H_i)}{2} \norm{\z_{k-1} - \z_k^{(i)*} }^2 \le \frac{\beta+\gamma+\kappa}{2} \norm{\z_{k-1} - \z_k^{(i)*} }^2. 
  \end{align*}

  Therefore, it suffices to have 
  \begin{align*}
    \frac{\epsilon_{\text{init}}}{\epsilon_{\text{final}}} = \frac{(\beta+\gamma+\kappa)}{(\gamma + \kappa) \theta^2}. 
  \end{align*}

  Noting that $\phii(\z)$ is the sum of $b$ components, and each component is $\beta$-smooth while the overall function $f_k^{(i)}$ is $(\gamma+\kappa)$-strongly convex, the lemma follows directly from the convergence guarantee of prox-SVRG~\citep[Corollary 1]{xiao2014proximal}, and sampling without replacement SVRG~\citep[Theorem 4]{shamir2016without}.
\end{proof}

Next, we state the convergence rates of ``inexact DANE'' and AIDE, which can be easily derived from~\citet{reddi2016aide}. 
At the outer loop $t$ and intermediate loop $r$,  let $\x_r^* = \argmin_{\w}\; \bar f_{t,r} (\w)$ be the exact minimizer of the ``augmented large minibatch'' problem~\eqref{e:augmented-large-minibatch}, which is approximately solved by the inner DANE iterations.

\begin{lem}[Iteration Complexity of inexact DANE] \label{lem:inexact-dane}
  Let $\theta = \frac{1}{6}$, and assume that 
  \begin{align}
    \label{e:b-gamma-ineq}
    b (\gamma + \kappa)^2 \geq 256 \beta^2 \log(dm / \delta).
  \end{align}
  By initializing from $\y_{r-1}$, and setting the number of inner iterations in Algorithm \ref{alg:mp-ppa-dane} to be 
  \begin{align*}
    K = \ceil[\Big]{ \frac{1}{2} \log_{4/3} \frac{(\beta+\gamma+\kappa)}{ (\gamma+\kappa) \eta} },
  \end{align*}
  we have with probability $1-\delta$ over the sample set $I_t$ that 
  \begin{align*}
    \bar f_{t,r} (\x_r) - \bar f_{t,r} (\x_r^*) \leq 
    \eta \left( \bar f_{t,r} (\y_{r-1}) - \bar f_{t,r} (\x_r^*)\right) .
  \end{align*}
\end{lem}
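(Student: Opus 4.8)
The plan is to reduce the whole statement to a one-step contraction estimate for the inner DANE loop and then chain it. Fix the outer index $t$ and the intermediate index $r$, abbreviate $g := \bar f_{t,r}$, $\x^* := \x_r^*$, and $\mu := \gamma+\kappa$. Since $\phi_{I_t}$ is convex and each $\ell(\cdot,\xi)$ is $\beta$-smooth, $g$ is $\mu$-strongly convex and $(\beta+\mu)$-smooth, so $\tfrac{\mu}{2}\norm{\w-\x^*}^2 \le g(\w)-g(\x^*) \le \tfrac{\beta+\mu}{2}\norm{\w-\x^*}^2$ for every $\w$. It therefore suffices to show that each inner iteration satisfies $\norm{\z_k-\x^*} \le \tfrac34\norm{\z_{k-1}-\x^*}$: starting from $\z_0 = \y_{r-1}$ and ending at $\x_r = \z_K$, the sandwich then gives $g(\x_r)-g(\x^*) \le \tfrac{\beta+\mu}{\mu}\bigl(\tfrac34\bigr)^{2K}\bigl(g(\y_{r-1})-g(\x^*)\bigr)$, and the prescribed $K = \lceil\tfrac12\log_{4/3}\tfrac{\beta+\mu}{\mu\eta}\rceil$ makes the prefactor at most $\eta$, which is exactly the claim.

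For the per-step bound I would exploit the quadratic structure. Because $\ell$ is the squared loss, $\nabla^2\phi_{I_t^{(i)}} \equiv \HH_i$ and $\nabla^2\phi_{I_t} \equiv \HH = \tfrac1m\sum_{i=1}^m\HH_i$ are constant PSD matrices, the local subproblem \eqref{e:svrg-local_solver} has constant Hessian $\M_i := \HH_i + \mu\I \succeq \mu\I$, and (taking $\Omega=\bbR^d$, as in the least-squares setting considered here) writing out its first-order optimality condition and using $\nabla\phi_{I_t}(\z)-\nabla\phi_{I_t^{(i)}}(\z) = (\HH-\HH_i)\z + \mathrm{const}$ yields the identity $\z_k^{(i)*}-\x^* = \M_i^{-1}(\HH_i-\HH)(\z_{k-1}-\x^*)$. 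Averaging over $i$, the first-order term cancels since $\tfrac1m\sum_i(\HH_i-\HH)=0$, and using $\M_i^{-1}-\M^{-1} = \M_i^{-1}(\HH-\HH_i)\M^{-1}$ with $\M := \HH+\mu\I$ one is left with $\tfrac1m\sum_i(\z_k^{(i)*}-\x^*) = -\tfrac1m\sum_i\M_i^{-1}(\HH_i-\HH)\M^{-1}(\HH_i-\HH)(\z_{k-1}-\x^*)$, whose operator norm is at most $(\rho/\mu)^2$ once $\norm{\HH_i-\HH}\le\rho$ for all $i$. The inexactness of the local solves is then absorbed by the triangle inequality: the prox-SVRG guarantee $\norm{\z_k^{(i)}-\z_k^{(i)*}}\le\theta\norm{\z_{k-1}-\z_k^{(i)*}}\le\theta(1+\rho/\mu)\norm{\z_{k-1}-\x^*}$ together with $\z_k = \tfrac1m\sum_i\z_k^{(i)}$ gives $\norm{\z_k-\x^*} \le \bigl[(\rho/\mu)^2 + \theta(1+\rho/\mu)\bigr]\norm{\z_{k-1}-\x^*}$.

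It remains to control $\rho$, and this is where condition \eqref{e:b-gamma-ineq} enters. Each $\HH_i = \tfrac1b\sum_{\xi\in I_t^{(i)}}\x\x^\top$ is an average of $b$ i.i.d.\ PSD matrices of norm $\norm{\x}^2\le\beta$, so a matrix Bernstein/Hoeffding bound gives $\norm{\HH_i - \bbE[\x\x^\top]} \le \calO\bigl(\beta\sqrt{\log(dm/\delta)/b}\bigr)$ with probability $\ge 1-\delta/(m+1)$, and likewise (with $bm$ samples) for $\HH$; a union bound over the $m$ machines plus the triangle inequality then give $\norm{\HH_i-\HH}\le\rho$ for all $i$ simultaneously with probability $1-\delta$, and \eqref{e:b-gamma-ineq}, namely $b(\gamma+\kappa)^2\ge 256\beta^2\log(dm/\delta)$, is precisely what forces $\rho\le\mu/16$. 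Plugging $\rho/\mu\le 1/16$ and $\theta=1/6$ into the previous display makes the per-step factor $(1/16)^2 + (1/6)(17/16) < 3/4$, as needed. Alternatively, this per-step contraction together with the Hessian-dissimilarity bound can be quoted essentially verbatim from the inexact-DANE analysis of \citet{reddi2016aide}, which is how the lemma is phrased.

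The main obstacle I anticipate is making the concentration step line up cleanly with the exact constant $256$: one must be careful about which matrices are being compared (empirical local versus empirical global versus population), about how the union bound over machines produces the $\log(dm/\delta)$ factor, and about the numerical constant hiding in the tail bound, since all of these together determine whether $\rho\le\mu/16$ genuinely follows from \eqref{e:b-gamma-ineq}. The rest --- the quadratic identity for a single DANE step, the cancellation in the average, and the triangle-inequality bookkeeping for the inexact local solves --- is routine once the quadratic structure and the unconstrained domain are in place.
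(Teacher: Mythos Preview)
Your proposal is correct and follows the same skeleton as the paper's proof: establish the per-step contraction $\norm{\z_k-\x^*}\le\tfrac34\norm{\z_{k-1}-\x^*}$ via Hessian similarity plus matrix concentration, then convert to function values through the $(\beta+\mu)$-smooth / $\mu$-strongly-convex sandwich. The only difference is packaging: the paper quotes \citet[Theorem~1]{reddi2016aide} to get the per-step factor $\norm{\tilde H^{-1}H-\I}+\tfrac{\theta}{m}\sum_i\norm{H_i^{-1}H}$ and then \citet[Lemmas~1--2]{Shamir_14a} to bound the two pieces by $\tfrac12$ and $\tfrac14$, whereas you unpack these directly via the quadratic identity $\z_k^{(i)*}-\x^*=M_i^{-1}(\HH_i-\HH)(\z_{k-1}-\x^*)$ and the cancellation $\tfrac1m\sum_i(\HH_i-\HH)=0$; you yourself note that citing \citet{reddi2016aide} is the alternative, and indeed that is exactly what the paper does.

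One numerical correction on the point you flagged: with the concentration constant the paper actually uses ($\rho=\sqrt{32\beta^2\log(dm/\delta)/b}$ from \citet[Lemma~2]{Shamir_14a}), assumption~\eqref{e:b-gamma-ineq} only yields $\rho/\mu\le 1/(2\sqrt2)$, not $1/16$. This does not break your argument --- your per-step factor becomes $(\rho/\mu)^2+\theta(1+\rho/\mu)\le \tfrac18+\tfrac16\bigl(1+\tfrac{1}{2\sqrt2}\bigr)<\tfrac34$, matching the paper's $\tfrac12+\tfrac14$ decomposition --- but you should adjust the claimed $1/16$ accordingly.
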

\begin{proof}
  Denote by $H_i = \nabla^2 \phii(\z) + (\gamma + \kappa) \I$ the Hessian matrix of the local objective~\eqref{e:svrg-local_solver} for machine $i$. 
  Let $H = \frac{1}{m} \sum_{i=1}^m H_i$ be the Hessian matrix of the global objective~\eqref{e:augmented-large-minibatch}, and $\tilde H^{-1} = \frac{1}{m} \sum_{i=1}^m H_i^{-1}$. 
  As our objective is quadratic, $H_i,H,\tilde H^{-1}$ remain unchanged during the inner iterations. By~\citet[Theorem~1]{reddi2016aide}, we have
  \begin{align} \label{e:inexact-dane-1}
    \norm{ \z_k - \x_r^* } \leq \left( \norm{\tilde H^{-1} H - \I } + \frac{\theta}{m} \sum_{i=1}^m \norm{ H_i^{-1} H}   \right) \norm{ \z_{k-1} - \x_r^* }.
  \end{align}

  Since $\nabla^2 \ell (\w,\xi) \le \beta$, by~\citet[Lemma~2]{Shamir_14a}, we have with probability at least $1 - \delta$ over the sample set $I_t$ that 
  \begin{align*}
    \norm{ H_i - H } \leq \sqrt{ \frac{32 \beta^2 \log(dm/\delta) }{b} } =: \rho,\qquad\qquad i=1,\dots,m. 
  \end{align*}
  On the other hand, we have $H_i \succeq (\gamma+\kappa) \I$ and 
  \begin{align*}
    \frac{4 \rho^2}{(\gamma+\kappa)^2} =  \frac{128 \beta^2 \log(dm/\delta) }{b (\gamma+\kappa)^2} \leq \frac{1}{2}
  \end{align*}
  by our assumption~\eqref{e:b-gamma-ineq}. By~\citet[Lemma~1]{Shamir_14a}, we have
  \begin{align} \label{e:inexact-dane-2}
    \norm{\tilde H^{-1} H - \I } \leq \frac{1}{2}.
  \end{align}
  Moreover, we have 
  \begin{align}
    \frac{\theta}{m} \sum_{i=1}^m \norm{ H_i^{-1} H}
    & \leq \frac{\theta}{m} \sum_{i=1}^m (1 + \norm{ H_i^{-1} H - \I } ) \nonumber \\
    & \leq \frac{\theta}{m} \sum_{i=1}^m (1 + \norm{H_i^{-1}} \norm{  H - H_i^{-1} } ) \nonumber \\
    & \leq \frac{\theta}{m} \sum_{i=1}^m \left( 1 + \frac{\rho}{\gamma + \kappa} \right) \nonumber \\
    & \leq \frac{\theta}{m} \sum_{i=1}^m \left( 1 + \frac{1}{2\sqrt{2}} \right) \nonumber \\ \label{e:inexact-dane-3}
    & \leq \frac{3 \theta}{2} \leq \frac{1}{4} . 
  \end{align}
  Plugging~\eqref{e:inexact-dane-2} and~\eqref{e:inexact-dane-3} into~\eqref{e:inexact-dane-1} yields 
  \[
  \norm{ \z_k - \x_r^* } \leq \frac{3}{4} \norm{ \z_{k-1} - \x_r^* },
  \] 
  and thus $\norm{ \z_K - \x_r^* } \leq (3/4)^K \norm{ \y_{r-1} - \x_r^* } $. 
  To guarantee the suboptimality in the objective $\bar f_{t,r} (\w)$, we note that 
  \begin{align*}
    \bar f_{t,r} (\z_K) - \bar f_{t,r} (\x_r^*) & = \frac{1}{2} (\z_K - \x_r^*)^\top H (\z_K - \x_r^*) \le \frac{\beta+\gamma+\kappa}{2} \norm{\z_K - \x_r^*}^2  \\
    & \le \left(\frac{3}{4}\right)^{2K} \frac{\beta+\gamma+\kappa}{2} \norm{\y_{r-1} - \x_r^*}^2 \\
    & \le \left(\frac{3}{4}\right)^{2K} \frac{\beta+\gamma+\kappa}{\gamma+\kappa} \left( \bar f_{t,r} (\y_{r-1}) - \bar f_{t,r} (\x_r^*) \right)
  \end{align*}
  where we have used the fact that $f_{t,r}(\w)$ is $(\gamma+\kappa)$-strongly convex in the last inequality. 
  Setting $\left(\frac{3}{4}\right)^{2K} \frac{\beta+\gamma+\kappa}{\gamma+\kappa} = \eta$, and noting $\x_r = \z_K$, we obtain the desired iteration complexity. 
\end{proof}

At the outer iteration $t$ of Algorithm~\ref{alg:mp-ppa-dane}, we are trying to approximately minimize the objective~\eqref{e:bm-minibatch-problem} by iteratively (approximately) solving $R$ instances of the ``augmented'' problem~\eqref{e:augmented-large-minibatch}. 
Let $\w_t^*$ be the exact minimizer of the ``large minibatch'' subproblem~\eqref{e:bm-minibatch-problem}: 
\begin{align*}
  \w_t^* = \argmin_{\w}\; \tilde f_t (\w). 
\end{align*}
The following lemma characterizes the accelerated convergence rate. 

\begin{lem}[Acceleration by universal catalyst, Theorem~3.1 of~\citet{lin2015universal}] 
  \label{lem:catalyst-inexactness}
  Assume that for all $r\ge 1$, we have
  \[
  \bar f_{t,r}(\x_r) -  \bar f_{t,r}(\x_r^*) \leq \frac{2}{9} \left( 1 -  \frac{9}{10} \sqrt{\frac{\gamma}{\gamma + \kappa}} \right)^{R}  \cdot \left( \tilde f_t(\x_0) - \tilde f_t(\w_t^*) \right)  , 
  \]
  then 
  \[
  \tilde f_t(\x_R) - \tilde f_t(\w_t^*) \leq \frac{800(\gamma + \kappa)}{\gamma} \left( 1 -  \frac{9}{10} \sqrt{\frac{\gamma}{\gamma + \kappa}} \right)^{R+1}  \left( \tilde f_t(\x_0) - \tilde f_t(\w_t^*) \right).
  \]
\end{lem}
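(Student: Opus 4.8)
The plan is to recognize the intermediate ($r$-indexed) loop of Algorithm~\ref{alg:mp-ppa-dane} as an instance of the universal catalyst meta-algorithm of~\citet{lin2015universal} applied to the minimization of $\tilde f_t$, and to read off the conclusion from their Theorem~3.1 once the problem-specific constants are matched. The first step is the identification of the strong-convexity moduli: since each $\ell(\w,\xi)$ is convex here ($\lambda=0$), the large-minibatch objective $\tilde f_t(\w) = \phi_{I_t}(\w) + \frac{\gamma}{2}\norm{\w - \w_{t-1}}^2$ is $\gamma$-strongly convex, and each augmented objective $\bar f_{t,r}(\w) = \tilde f_t(\w) + \frac{\kappa}{2}\norm{\w - \y_{r-1}}^2$ is $(\gamma+\kappa)$-strongly convex around the proximal center $\y_{r-1}$. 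Hence catalyst with smoothing parameter $\kappa$ applies with rate parameter $q := \gamma/(\gamma+\kappa) \in (0,1]$; moreover the sequence $\alpha_r$ in Algorithm~\ref{alg:mp-ppa-dane} (with $\alpha_0 = \sqrt q$ and the Nesterov fixed-point relation $\alpha_r^2 = (1-\alpha_r)\alpha_{r-1}^2 + q\alpha_r$) together with the extrapolation~\eqref{e:y_extrapolation} for $\y_r$ are exactly the catalyst / accelerated-proximal-point updates, with $\x_r$ the inexact catalyst proximal step solved to suboptimality $\epsilon_r := \bar f_{t,r}(\x_r) - \bar f_{t,r}(\x_r^*)$.

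The second step is to match the accuracy hypothesis. Writing $\rho := \frac{9}{10}\sqrt q < \sqrt q$, the strongly-convex case of Theorem~3.1 of~\citet{lin2015universal} requires $\epsilon_r \le \frac{2}{9}(1-\rho)^r(\tilde f_t(\x_0) - \tilde f_t(\w_t^*))$ for all $r$ and then concludes $\tilde f_t(\x_R) - \tilde f_t(\w_t^*) \le \frac{8}{(\sqrt q - \rho)^2}(1-\rho)^{R+1}(\tilde f_t(\x_0) - \tilde f_t(\w_t^*))$. The hypothesis assumed in the statement demands the \emph{uniform} bound $\epsilon_r \le \frac{2}{9}(1-\rho)^R(\tilde f_t(\x_0) - \tilde f_t(\w_t^*))$ for every $r \le R$; since $(1-\rho)^R \le (1-\rho)^r$ whenever $r \le R$, this is strictly stronger than (hence implies) the decaying-error condition above, so the conclusion transfers verbatim. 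Finally, $\sqrt q - \rho = \frac{1}{10}\sqrt q$ gives $\frac{8}{(\sqrt q - \rho)^2} = \frac{800}{q} = \frac{800(\gamma+\kappa)}{\gamma}$, which is precisely the constant in the claim; the slack factor $\frac{9}{10}$ is exactly what lets the perturbation due to inexactness be absorbed into the exact contraction rate $\sqrt q$.

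If one instead wants a self-contained derivation rather than quoting~\citet{lin2015universal}, the route is the standard estimate-sequence argument with inexactness tracked: define a Lyapunov quantity $\Theta_r$ combining $\tilde f_t(\x_r) - \tilde f_t(\w_t^*)$ with a squared distance-to-optimum term of the accelerated auxiliary sequence, use $(\gamma+\kappa)$-strong convexity of $\bar f_{t,r}$ to convert the subproblem error into the displacement bound $\norm{\x_r - \x_r^*}^2 \le \frac{2}{\gamma+\kappa}\epsilon_r$, and obtain a perturbed recursion $\Theta_r \le (1-\alpha_r)\Theta_{r-1} + \big(\text{terms in }\sqrt{\epsilon_r}\,\sqrt{\Theta_{r-1}}\text{ and }\epsilon_r\big)$; resolving it with a recursion lemma of the type of Lemma~\ref{lem:resolve-recursion}, using the geometric decay of $\sqrt{\epsilon_r}$ against $\prod_{j\le r}(1-\alpha_j)\lesssim(1-\sqrt q)^r$, and converting the final distance bound back to a value bound via strong convexity of $\tilde f_t$ reproduces the stated inequality. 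I expect the only genuinely delicate part to be the bookkeeping of the absolute constants ($\frac29$, $\frac{9}{10}$, $800$) so that they fit together consistently --- this is arithmetic, not a conceptual obstacle --- and since the statement is taken verbatim from~\citet{lin2015universal}, one may simply cite that theorem.
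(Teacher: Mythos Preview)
Your proposal is correct and essentially matches the paper's own treatment: the paper does not give a proof of this lemma at all but simply states it as a direct quotation of Theorem~3.1 of~\citet{lin2015universal}, so your identification of the intermediate loop as a catalyst instance with $q=\gamma/(\gamma+\kappa)$ and $\rho=\tfrac{9}{10}\sqrt{q}$, together with the constant-matching $8/(\sqrt{q}-\rho)^2=800/q$, is exactly the intended justification. Your additional observation that the uniform-in-$r$ accuracy hypothesis is strictly stronger than the decaying one required by~\citet{lin2015universal}, and your sketch of a self-contained estimate-sequence derivation, go beyond what the paper provides and are useful elaborations rather than deviations.
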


\subsubsection{Proof of Theorem~\ref{thm:main-stochastic-dane-small-b}}

\begin{proof}
  First of all, because $R=1$, our algorithm collapses into two nested loops.

  On the one hand, as we choose $\gamma$ as Theorem~\ref{thm:inexact-weakly-convex-sufficient} suggested, we just need to verify the inexactness conditions in Theorem \ref{thm:inexact-weakly-convex-sufficient} is satisfied, i.e., for $t=1,\dots,T$, we require (recall that $\w_t^* = \argmin_{\w}\; \tilde f_t (\w)$) 
  \begin{align*}
    \tilde f_t(\w_t) - \tilde f_t (\w_t^*) & \leq \frac{1}{10^4} \cdot \min \left( \left( \frac{T}{bm} \right)^{1/2} , \left( \frac{T}{bm} \right)^{3/2} \right) \cdot \frac{2LB}{t^{3}} .
  \end{align*}

  On the other hand, we can bound the initial suboptimality $\tilde f_t (\w)$ (cf. derivation for~\eqref{e:mbdsvrg-init}):
  \begin{align*}
    \tilde f_t (\tw_{t-1}) - \tilde f_t (\w_t^*) 
\le L^2 / \gamma.
  \end{align*}


  Using Lemma~\ref{lem:inexact-dane}, we know as long as the inequality~\eqref{e:b-gamma-ineq} is satisfied, we have the desired suboptimality in $\tilde f_t (\w)$ using (cf. the derivation for~\eqref{e:mbdsvrg-K})
   \begin{align*}
     K & = \calO \left( \log n(\varepsilon)  \right)
   \end{align*}
  rounds of communication, where we have plugged in the value of $\gamma$ in the second step.

  It remains to verify the condition~\eqref{e:b-gamma-ineq}, by our choice of $\gamma$ and $b$, we have
  \begin{align} \label{e:distributed-small-b-and-gamma}
    b \gamma^2 = \frac{8 n(\varepsilon) L^2}{b m^2 B^2} \geq \frac{8 n(\varepsilon) L^2}{b^* m^2 B^2} = 256 \beta^2 \log(md),
  \end{align}
  as desired.

  Next we summarize the communication, computation, and memory efficiency.

  \textbf{Communication:} the total rounds of communication required by Algorithm~\ref{alg:mp-ppa-dane} is
  \[
  KRT = \calO \left( \frac{n(\varepsilon)}{mb} \log n(\varepsilon) \right) .
  \]

  \textbf{Computation:} For each communication round, we need to solve the local problem~\eqref{e:svrg-local_solver} using prox-SVRG. Now, in view of~\eqref{e:distributed-small-b-and-gamma}, we have $\beta = \calO ( \sqrt{b} \gamma)$. This implies that $\frac{\beta}{\gamma} = \calO (\sqrt{b})$ and thus by Lemma~\ref{lem:svrg-complexity}, the dominant term of the iteration complexity of prox-SVRG is
  \begin{align*}
    \calO \left( b \log \frac{\beta+\gamma}{\gamma} \right) = \calO \left( b \log  n(\varepsilon) \right).
  \end{align*}
  Multiplying this with the number of communication rounds yields the desired computation complexity. 

  \textbf{Memory:} It is straightforward to see each machine only need to maintain $b$ samples. 
\end{proof}

\subsubsection{Proof of Theorem~\ref{thm:main-stochastic-dane-large-b}}

\begin{proof}
  First, it is straightforward to verify the condition~\eqref{e:b-gamma-ineq}:
  \[
  b(\gamma + \kappa)^2 = 256 \beta^2 \log(dm). 
  \] 

Similarly to Theorem~\ref{thm:main-stochastic-dane-small-b}, we need the ratio between final versus initial error for the $R$ AIDE iterations to be 
  \begin{align*}
{\rm ratio} = \calO ( n(\varepsilon) ).
  \end{align*}
Equating this ratio to be $\frac{800(\gamma + \kappa)}{\gamma} \left( 1 -  \frac{9}{10} \sqrt{\frac{\gamma}{\gamma + \kappa}} \right)^{R+1}$, we have  
  \begin{align*}
    R & = \frac{10}{9} \sqrt{\frac{\gamma + \kappa}{\gamma}} \log \left( \frac{800 (\gamma+\kappa)}{\gamma} \cdot \frac{1}{ratio} \right) \\
& = \calO \left( \frac{b^{1/4} m^{1/2} \cdot \beta^{1/2} B^{1/2}}{ n(\varepsilon)^{1/4} \cdot L^{1/2}} \log n(\varepsilon) \right).
  \end{align*}

Now according to Lemma~\ref{lem:catalyst-inexactness}, the final suboptimality for $\bar f_{t,r} (\w)$ need to be 
\begin{align*}
\epsilon_{\text{final}} = \frac{2}{9} \left( 1 -  \frac{9}{10} \sqrt{\frac{\gamma}{\gamma + \kappa}} \right)^{R} \cdot  \left( \tilde f_t(\x_0) - \tilde f_t(\w_t^*) \right). 
\end{align*}

Let us initialize $\min_{\w}\, \bar f_{t,r} (\w)$ by $\x_0$. By definition, we have $\bar f_{t,r} (\w) \ge \tilde f_t (\w)$ and thus 
\begin{align*}
\epsilon_{\text{init}} & = \bar f_{t,r}(\x_0) - \bar f_{t,r} (\x_r^*) \\
& \le \tilde f_t(\x_0) - \tilde f_t (\x_r*) \\
& \le  \tilde f_t (\x_0) -  \tilde f_{t} (\w_t^*)  
\end{align*}
where we have used the fact that $\w_t^*$ is the minimizer of $\tilde f_{t} (\w)$ in the second inequality. 

This means we only need the initial versus final suboptimality of solving $\bar f_{t,r} (\w)$ to be 
\begin{align*}
\frac{1}{\eta} = \frac{\epsilon_{\text{init}}}{\epsilon_{\text{final}}} = \frac{9}{2} \left( 1 -  \frac{9}{10} \sqrt{\frac{\gamma}{\gamma + \kappa}} \right)^{-R},
\end{align*}
which, according to Lemma~\ref{lem:inexact-dane}, is achieved by inexact DANE with
  \begin{align*}
  K & = \calO \left(  \log \frac{1}{\eta} + \log \frac{\beta+\gamma+\kappa}{\gamma+\kappa} \right) \\
& = \calO \left( R \sqrt{\frac{\gamma}{\gamma + \kappa}} \right) \\
& = \calO \left( \log  n(\varepsilon) \right).
  \end{align*}
iterations. 

 Next we analyze the communication and computation efficiency of our algorithm.

  \textbf{Communication:} The total rounds of communication is
  \begin{align*}
  KRT & = \calO \left(  \log  n(\varepsilon) 
\cdot  \frac{b^{1/4} m^{1/2} \cdot \beta^{1/2} B^{1/2}}{ n(\varepsilon)^{1/4} \cdot L^{1/2}} \log n(\varepsilon)  \cdot \frac{n(\varepsilon)}{bm}  \right) \\
& =  \calO \left( \frac{ n(\varepsilon)^{3/4} \cdot \beta^{1/2} B^{1/2} }{ b^{3/4} m^{1/2} \cdot L^{1/2}}  \log^2 n(\varepsilon) \right) .
  \end{align*}

  \textbf{Computation:} Similar to the case of $b\le b^*$, for each DANE local subproblem \eqref{e:svrg-local_solver}, the sample size $b$ is larger than its condition number. Therefore, the total computational cost is
  \begin{align*}
  \calO (  bKRT ) & =  \calO \left( \frac{ b^{1/4} n(\varepsilon)^{3/4} \cdot \beta^{1/2} B^{1/2} }{  m^{1/2} \cdot L^{1/2}}  \log^2 n(\varepsilon) \right) .
  \end{align*}
  
\end{proof}

\section{Experiments}
\label{sec:expts}

\begin{table}[h]
\begin{center}
\caption{List of datasets used in the experiments.}
\label{tab:data}
\begin{tabular}{|c|c|c|c|}\hline
\hline
Name  & $\#$Samples & $\#$Features   &loss
\\
\hline\hline
codrna        & 271,617  & 8  & logistic\\
covtype       & 581,012 & 54 & logistic\\
kddcup99       &1,131,571   &127 & logistic \\
year       &463,715  &90  & squared \\
\hline
\end{tabular}
\end{center}
\end{table}

\begin{figure*}[t]
\psfrag{Objective in population}[][]{population objective}
\psfrag{Minibatch Size b}[][]{minibatch size $b$}
\psfrag{Minibatch SGD}[l][l][0.5]{Minibatch SGD}
\psfrag{Stochastic DANE(1,1)}[l][l][0.5]{MP-DANE, $K=1$}
\psfrag{Stochastic DANE(2,1)}[l][l][0.5]{MP-DANE, $K=2$}
\psfrag{Stochastic DANE(4,1)}[l][l][0.5]{MP-DANE, $K=4$}
\psfrag{Stochastic DANE(8,1)}[l][l][0.5]{MP-DANE, $K=8$}
\psfrag{Stochastic DANE(16,1)}[l][l][0.5]{MP-DANE, $K=16$}
\begin{tabular}{@{}c@{\hspace{0.01\linewidth}}c@{\hspace{0\linewidth}}c@{\hspace{0\linewidth}}c@{}}
& $m=4$ & $m=8$ & $m=16$ \\
\rotatebox{90}{\hspace*{4em}codrna} & 
\includegraphics[width=0.33 \textwidth]{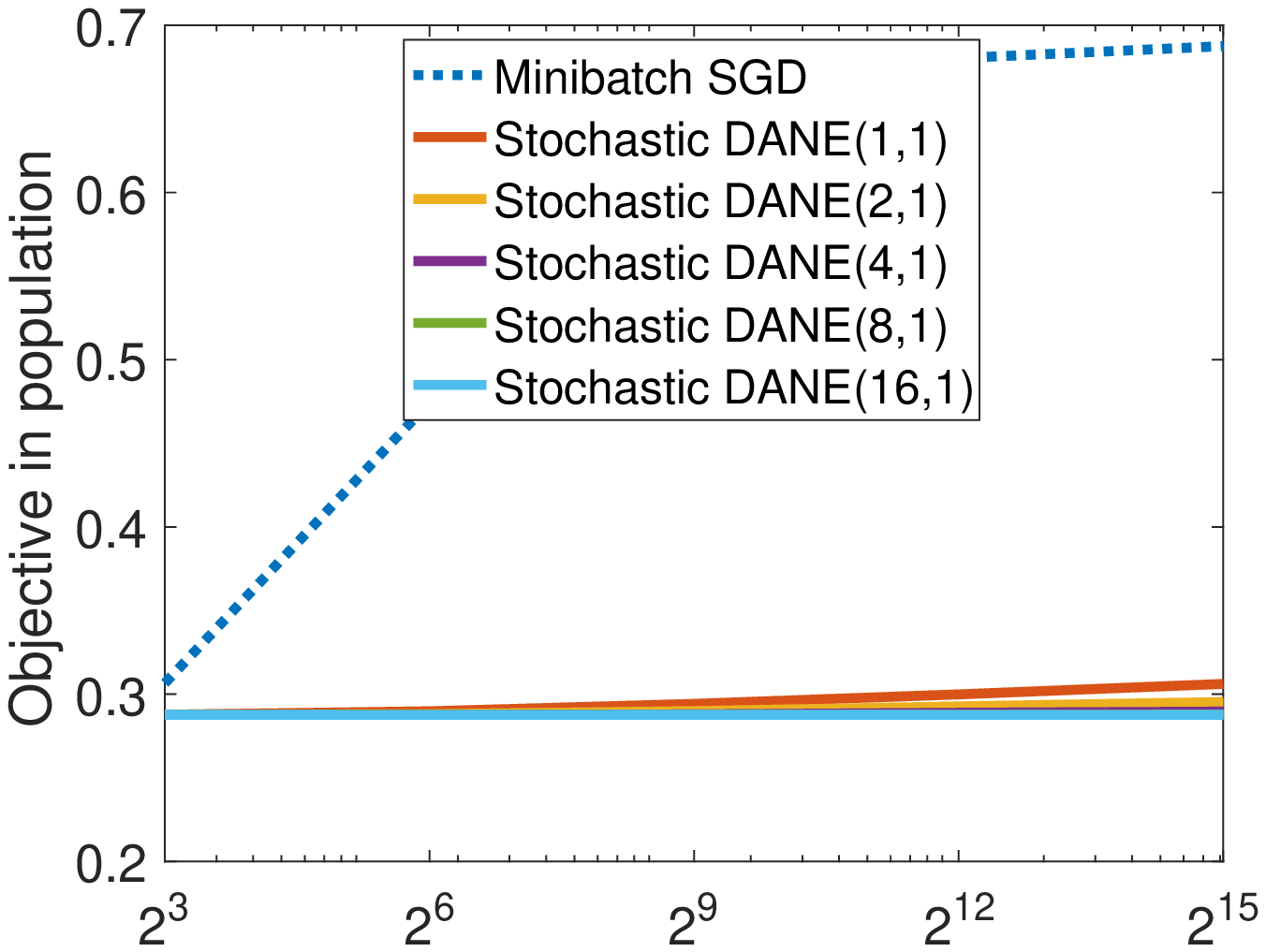} & 
\includegraphics[width=0.33 \textwidth]{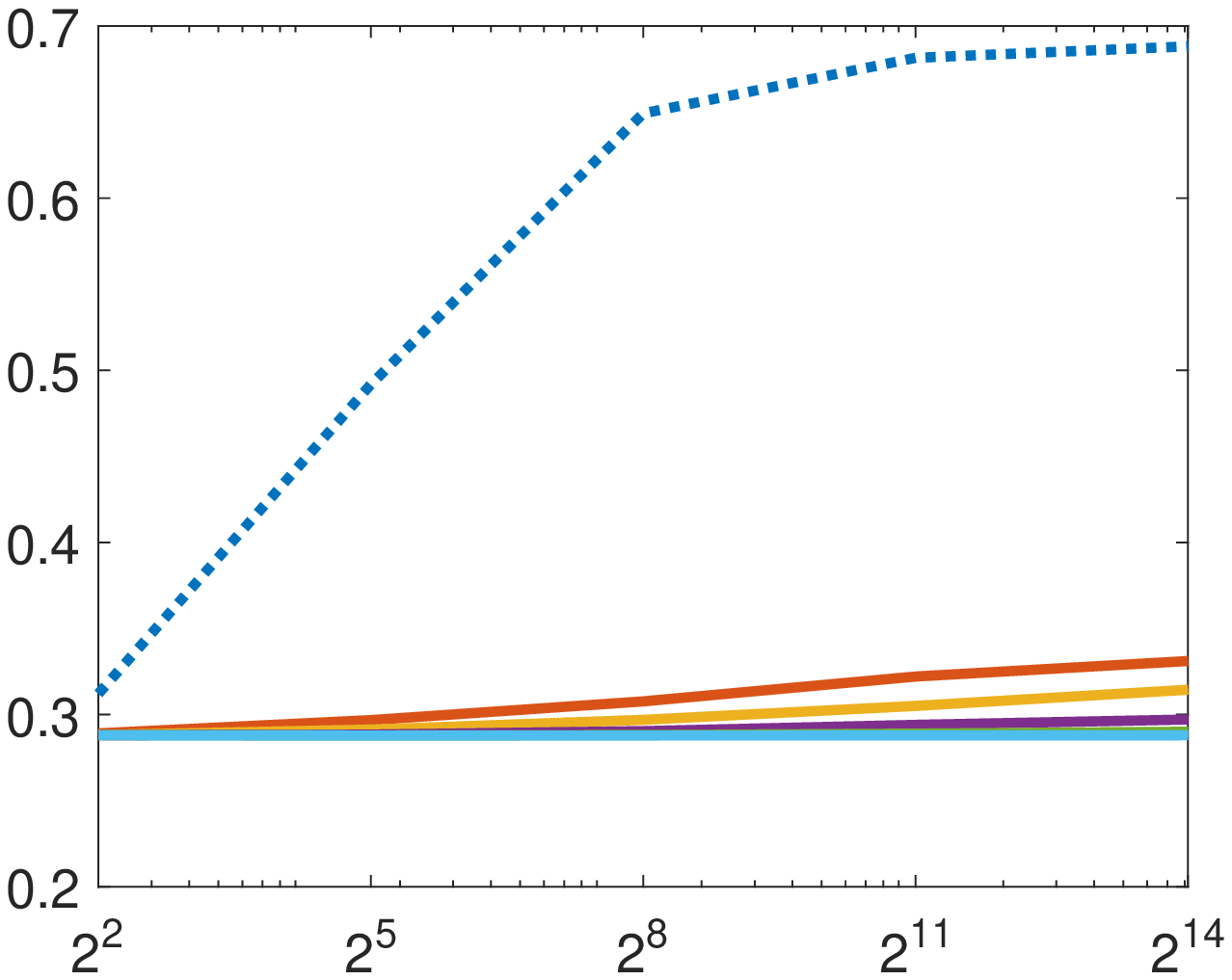} &
\includegraphics[width=0.33 \textwidth]{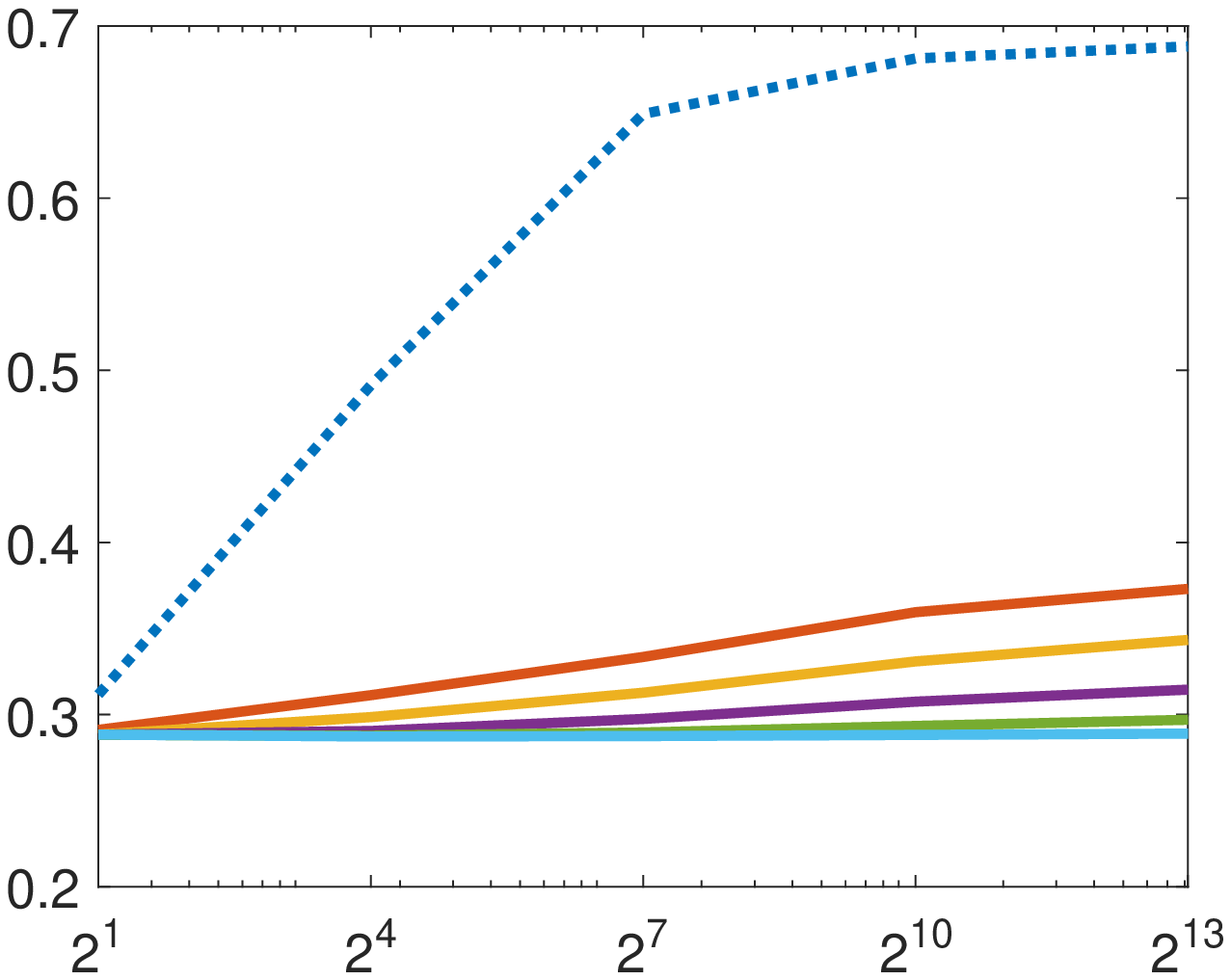} \\
\rotatebox{90}{\hspace*{3em}covtype} &
\includegraphics[width=0.33 \textwidth]{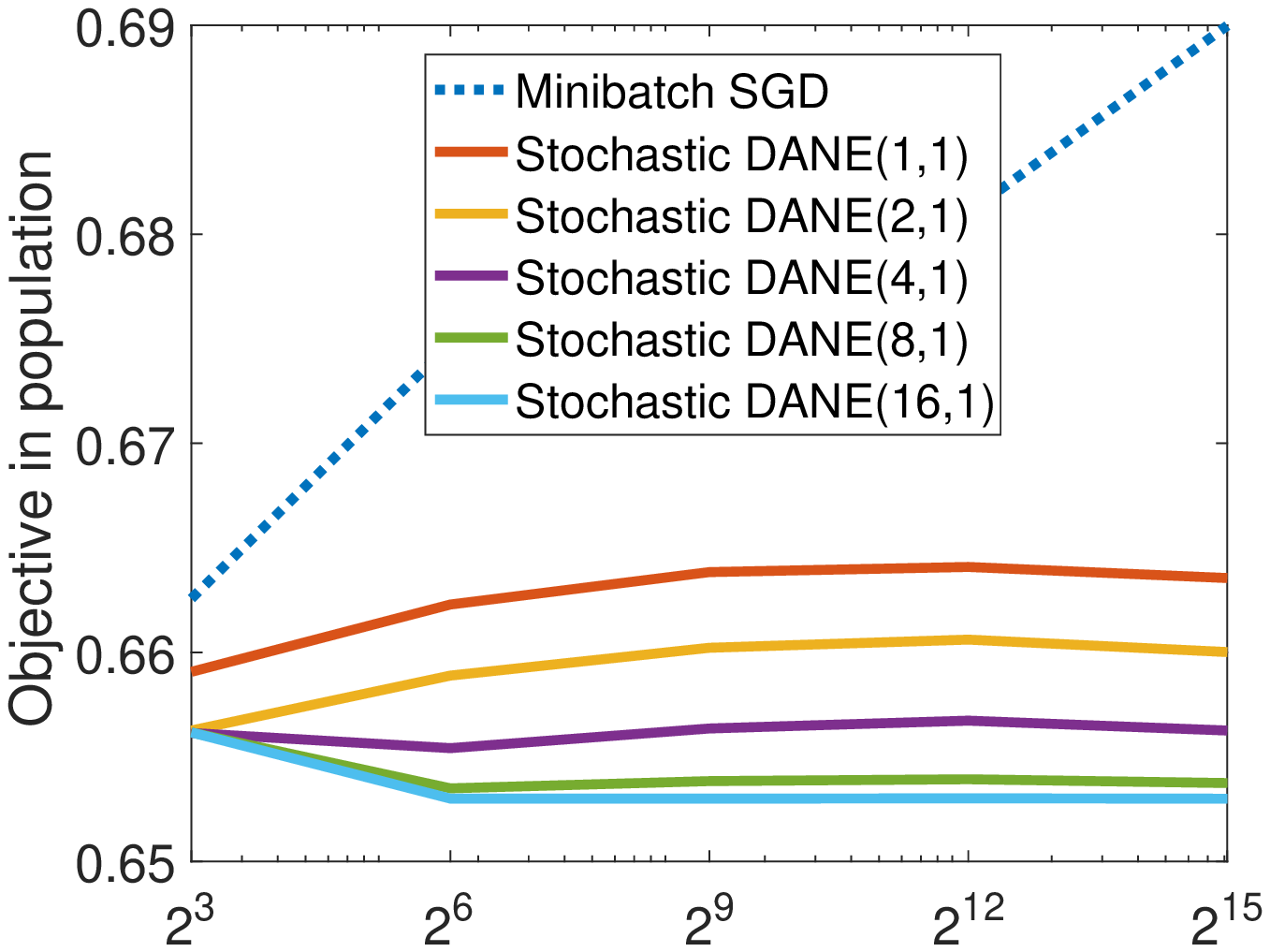} &
\includegraphics[width=0.33 \textwidth]{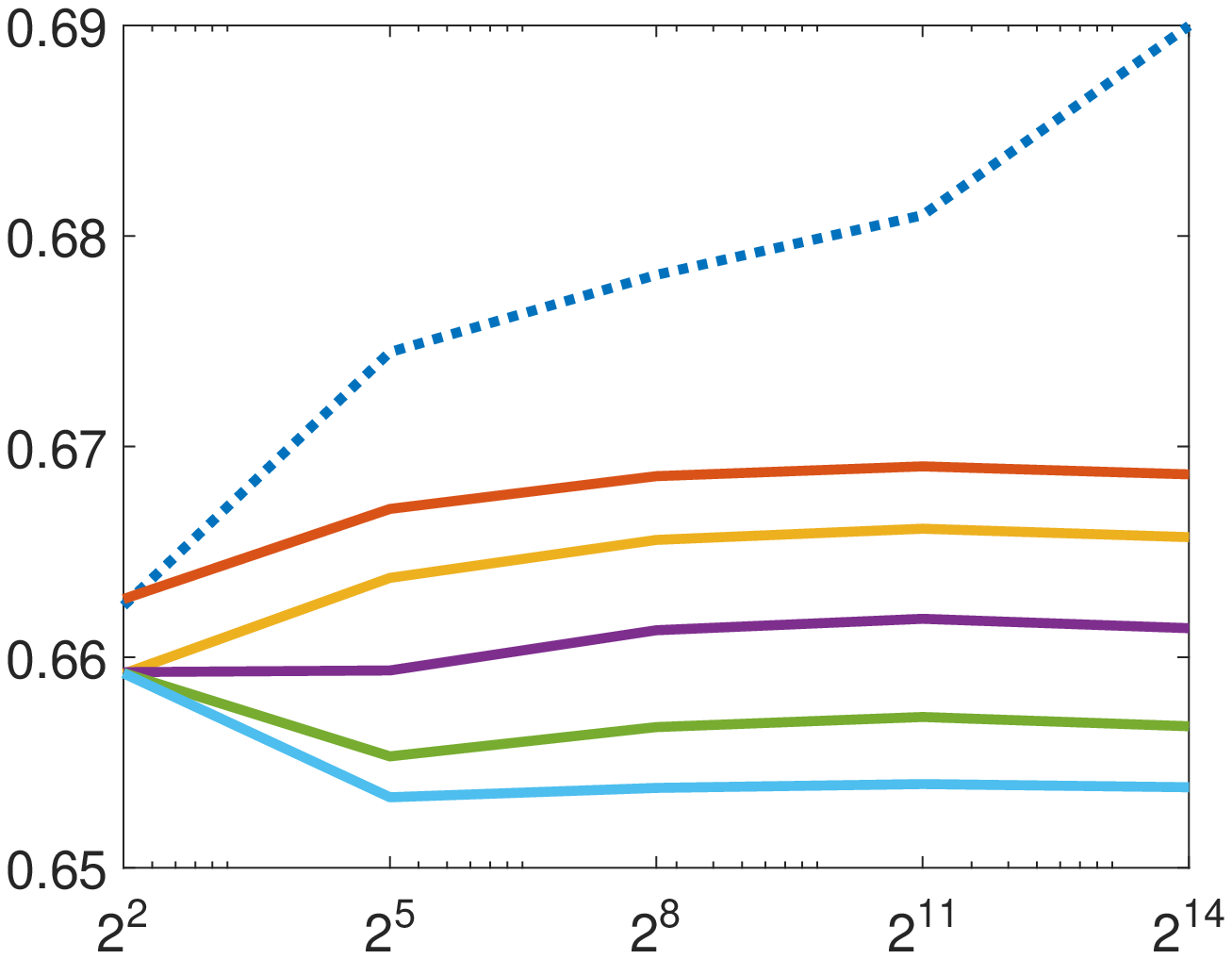} &
\includegraphics[width=0.33 \textwidth]{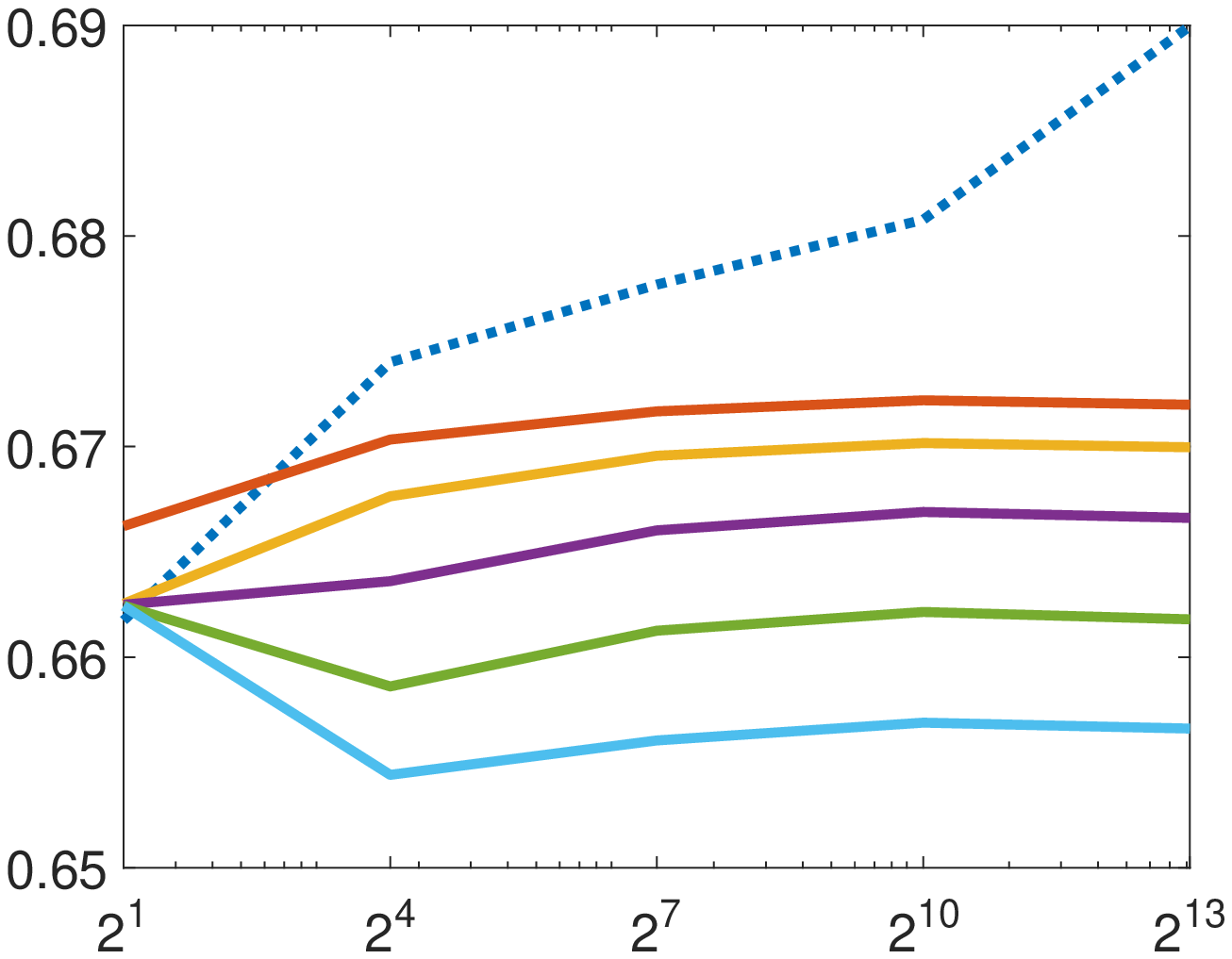} \\
\rotatebox{90}{\hspace*{3em}kddcup99} & 
\includegraphics[width=0.33 \textwidth]{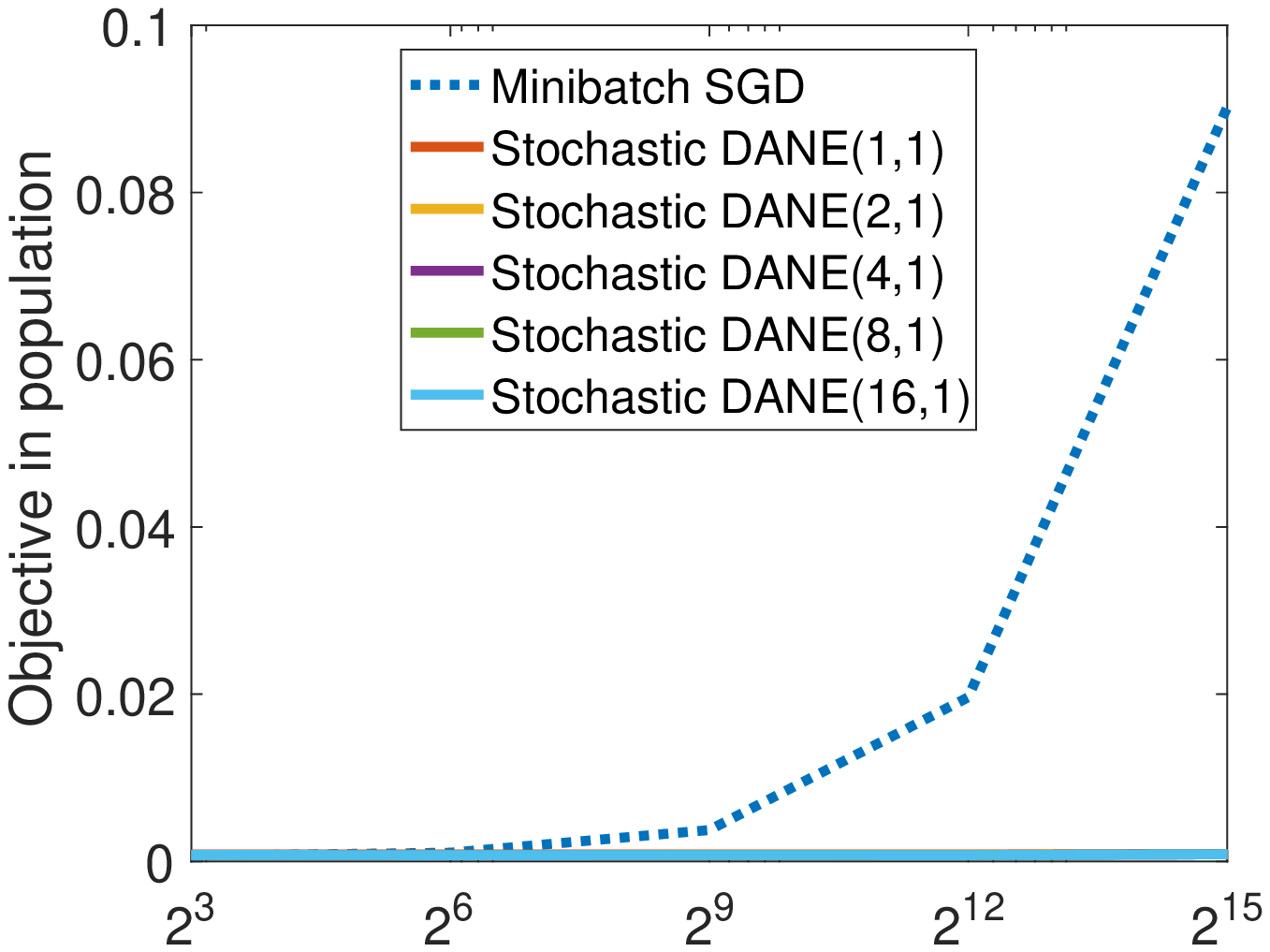} & 
\includegraphics[width=0.33 \textwidth]{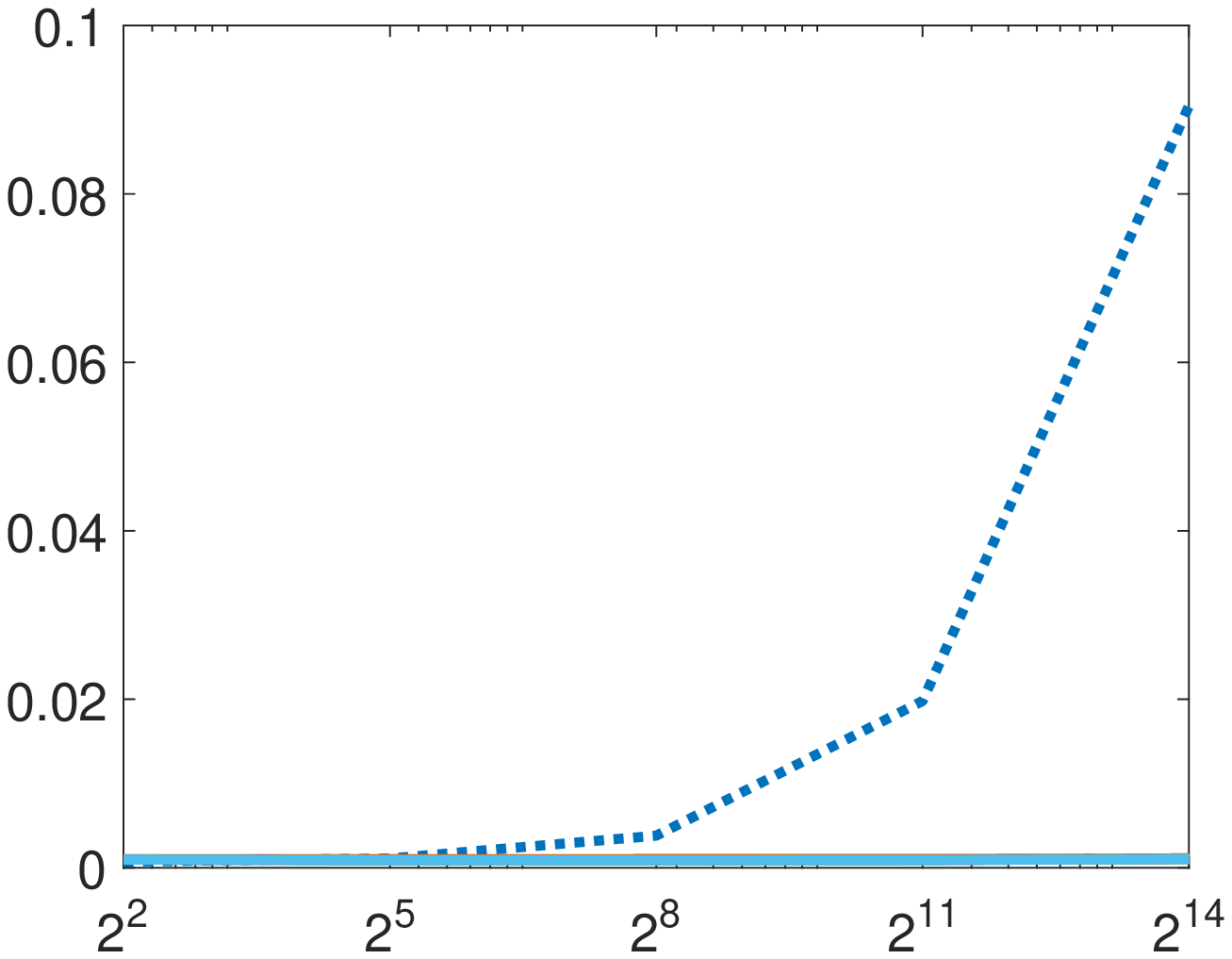} &
\includegraphics[width=0.33 \textwidth]{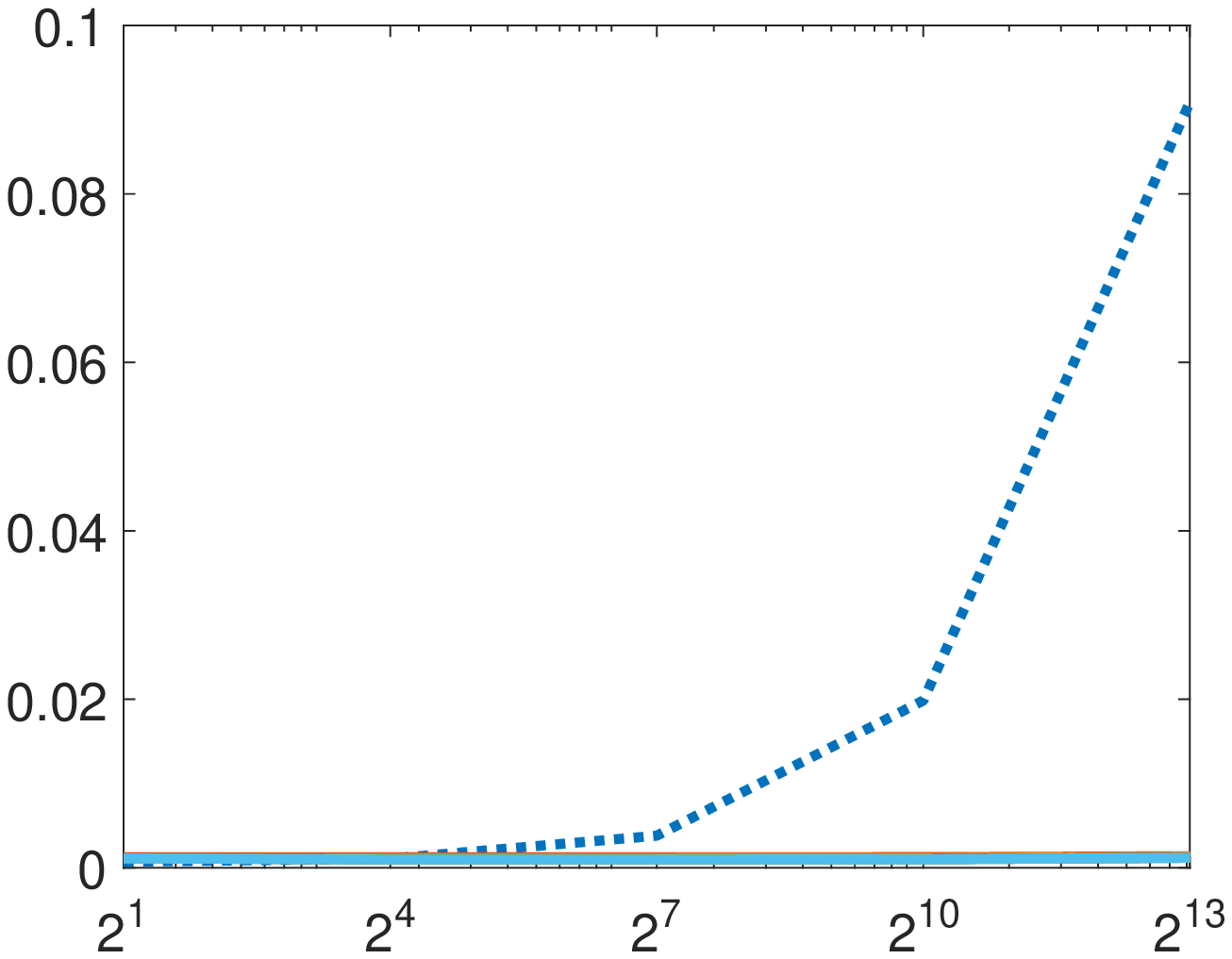} \\
\rotatebox{90}{\hspace*{4em}year} & 
\includegraphics[width=0.33 \textwidth]{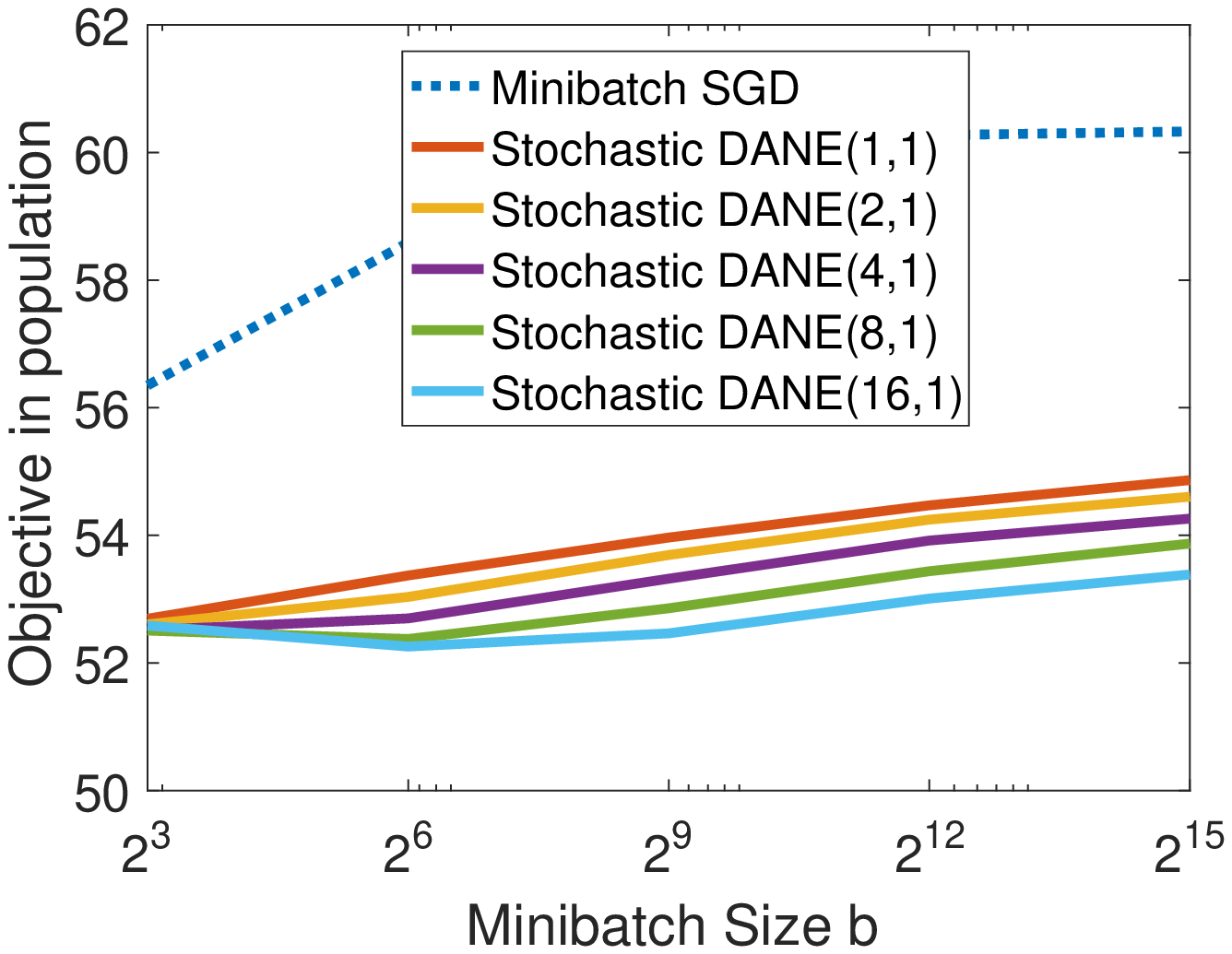} &
\includegraphics[width=0.33 \textwidth]{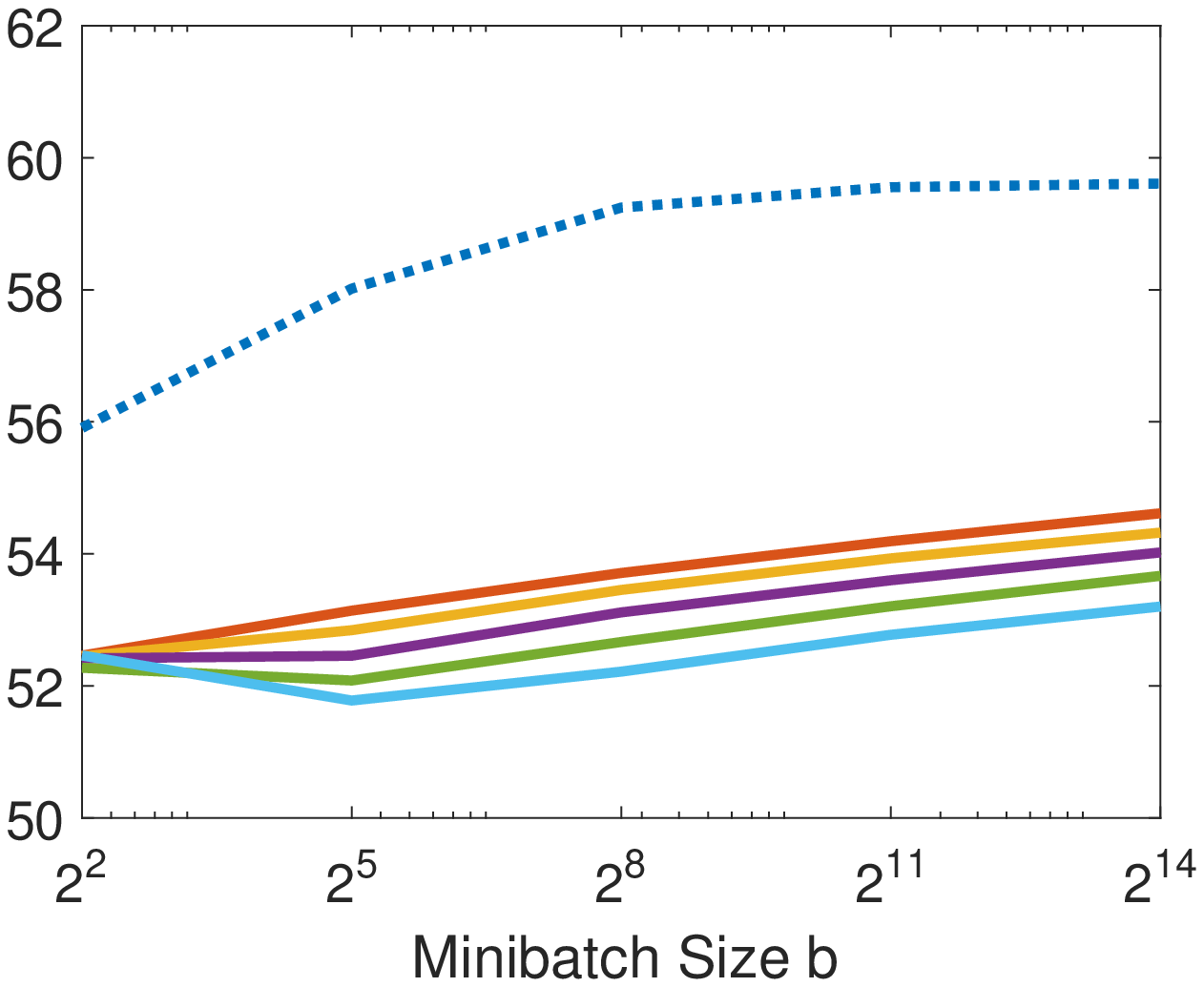} &
\includegraphics[width=0.33 \textwidth]{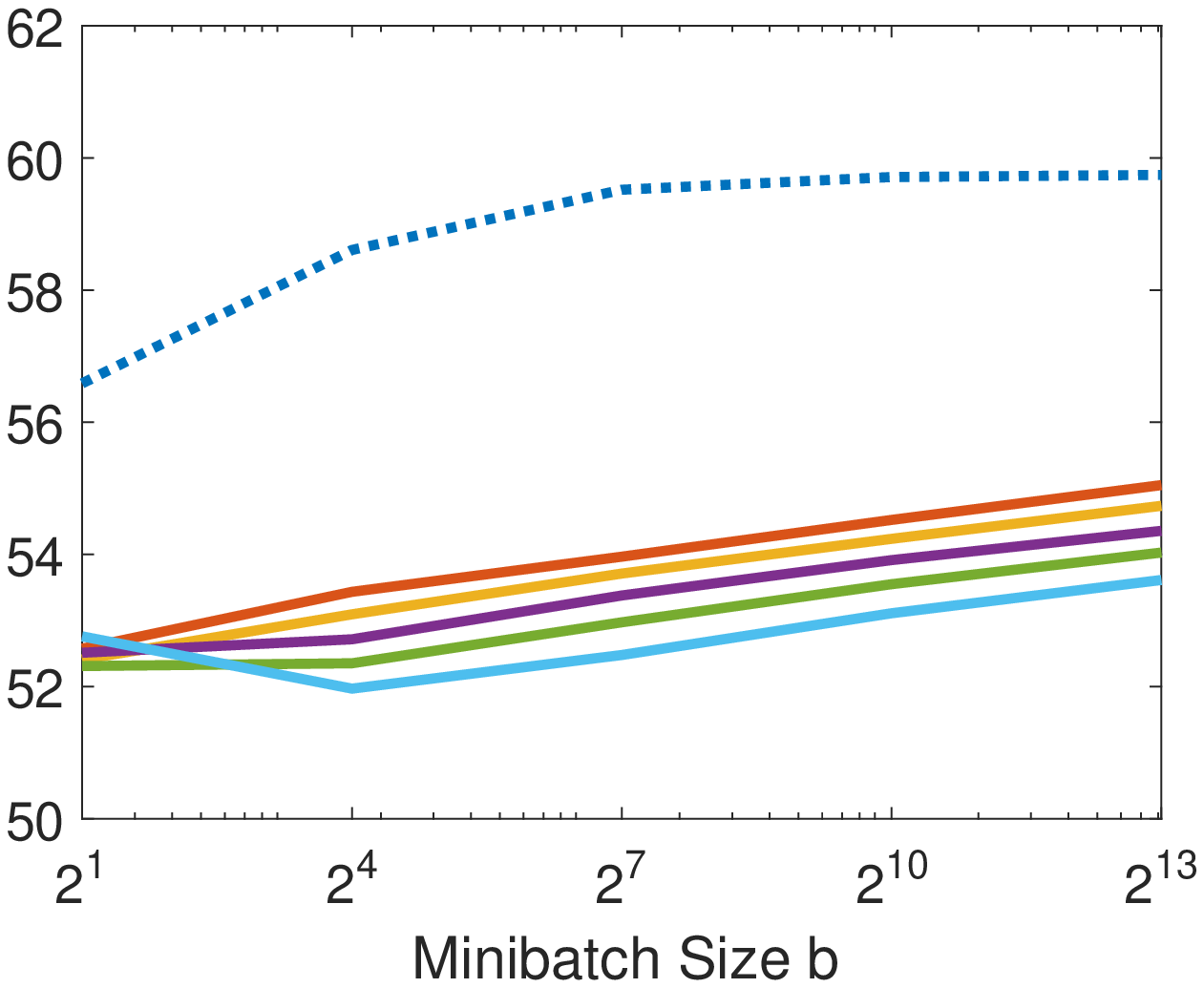} 
\end{tabular}
\caption{Illustration of the convergence properties of MP-DANE, for different minibatch size $b$, number of machines $m$, and number of DANE iterations $K$.}
\label{fig:exp}
\end{figure*}

In this section we present empirical results to support our theoretical analysis of MP-DANE. 
We perform least squares regression and classification on several publicly available datasets\footnote{\url{https://www.csie.ntu.edu.tw/~cjlin/libsvm/}}; the statistics of these datasets and the corresponding losses are summarized in Table~\ref{tab:data}. 
For each dataset, we randomly select half of the samples for training, and the remaining samples are used for estimating the stochastic objective. 

For MP-DANE, we use SAGA~\citep{defazio2014saga} to solve each local DANE subproblem~\eqref{e:svrg-local_solver} and fix the number of SAGA steps to $b$ (\ie, we just make one pass over the local data), while varying the number of DANE rounds $K$ over $\{1,2,4,8,16\}$. For simplicity, we do not use catalyst acceleration and set $R = 1$ and $\kappa = 0$ in all experiments. 
Our experiments simulate a distributed environment with $m$ machines, for $m = 4,8,16$. 
We conduct a simple comparison with minibatch SGD. Stepsizes for SAGA and minibatch SGD are set based on the smoothness parameter of the loss.

We plot in Figure~\ref{fig:exp} the estimated population objective vs. minibatch size $b$ for different parameters. 
We make the following observations.

\begin{itemize}
\item For minibatch SGD, as $b$ increases, the objective often increases quickly, this is because minibatch SGD can not uses large minibatch sizes while preserving sample efficiency.
\item For MP-DANE, the objective increases much more slowly as $b$ increases. This demonstrates the effectiveness of minibatch-prox for using large minibatch sizes. 
\item  Running more iterations of DANE often helps, but with diminishing returns. This validates our theory that only a near-constant number of DANE iterations is needed for solving the large minibatch objective, without affecting the sample efficiency.
\end{itemize}

\end{document}